\title[Near-optimal Optimistic Reinforcement Learning]{Near-optimal Optimistic Reinforcement Learning using Empirical Bernstein Inequalities}
\newcommand{\mytd}[1]{#1}
\renewcommand{\mytd}[1]{}
\DeclareMathOperator*{\argmax}{argmax}
\DeclareMathOperator*{\argmin}{argmin}
\newcommand{\BetaDis}{\textsf{Beta}}
\DeclareMathOperator{\E}{\mathbb{E}}
\DeclareMathOperator{\Prob}{\mathbb{P}}
\DeclareMathOperator{\EX}{\mathbb{E}}
\DeclareMathOperator{\Var}{Var}
\DeclarePairedDelimiter\abs{\lvert}{\rvert}%
\DeclarePairedDelimiter\size{\lvert}{\rvert}%
\DeclarePairedDelimiter\setof\{\}
\DeclarePairedDelimiter\curly\{\}
\DeclarePairedDelimiter\braket[]
\DeclarePairedDelimiter\paren()%
\DeclarePairedDelimiterX{\kldivx}[2]{(}{)}{%
  #1\;\delimsize\|\;#2%
}
\newcommand{\oset}[3][0ex]{%
	\mathrel{\mathop{#3}\limits^{
			\vbox to#1{\kern-2\ex@
				\hbox{$\scriptstyle#2$}\vss}}}}
\renewcommand{\Function}[2]{%
	\csname ALG@cmd@\ALG@L @Function\endcsname{#1}{#2}%
	\def\jayden@currentfunction{#1}%
}
\newcommand{\funclabel}[1]{%
	\@bsphack
	\protected@write\@auxout{}{%
		\string\newlabel{#1}{{\jayden@currentfunction}{\thepage}}%
	}%
	\@esphack
}
\newcommand{\algmargin}{\the\ALG@thistlm}
\algnewcommand{\ParState}[1]{\State%
	\parbox[t]{\dimexpr\linewidth-\algmargin}{\strut\hangindent=\algorithmicindent \hangafter=1 #1\strut}}
\newcommand{\SQRTFactorUCRLV}{{\ensuremath{2^{10}}}}
\newcommand{\UCRL}{{UCRL2}}
\newcommand{\KLUCRL}{{KL-UCRL}}
\newcommand{\TSDE}{{TSDE}}
\newcommand{\UCRLBERNSTEIN}{{UCRL-V}}
\newcommand{\UCRLV}{{UCRL-V}}
\newcommand{\GAMEOFSKILLEASY}{{GameOfSkill-v1}}
\newcommand{\GAMEOFSKILLHARD}{{GameOfSkill-v2}}
\newcommand{\GAMEOFSKILL}{{GameOfSkill}}
\newcommand{\TRIALS}{{\ensuremath{40}}}
\newcommand{\BigO}{{\ensuremath{\mathcal{O}}}}
\newcommand{\BigOmega}{{\ensuremath{\Omega}}}
\newcommand{\TilO}{{\ensuremath{\tilde{\mathcal{O}}}}}
\newcommand{\ConfidenceDelta}{{0.05}}
\newcommand{\HORIZONEXPERIMENTS}{{\ensuremath{2^{24}}}}
\newcommand{\gain}{V}
\newcommand{\defn}{\mathrel{\triangleq}} 
\newcommand{\round}{{round}}
\newcommand{\rounds}{{rounds}}
\newcommand{\episode}{episode}
\newcommand{\episodes}{episodes}
\newcommand{\piOpt}{\tilde{\pi}}
\newcommand{\gainOpt}{\tilde{V}}
\newcommand{\rOpt}{\tilde{r}}
\newcommand{\regret}{\mathrm{Regret}}
\newcommand{\LHS}{\text{LHS}}
\newcommand{\opt}[1]{\tilde{#1}}
\newcommand{\upperb}[1]{\hat{#1}}
\newcommand{\lowerb}[1]{\check{#1}}
\newcommand{\samples}[1]{\bm{#1}}
\newcommand{\notstate}[1][s]{\oset[-0.35ex]{\neg}{#1}}
\newcommand{\Id}{\mathbb{I}}
\newcommand{\dbcomment}[1]{}
\newcommand{\aricomment}[1]{}
\newcommand{\mdp}{M}
\newcommand{\optmdp}{\tilde{M}}
\newcommand*{\addFileDependency}[1]{
  \typeout{(#1)}
  \@addtofilelist{#1}
  \IfFileExists{#1}{}{\typeout{No file #1.}}
}
\newtheorem*{rep@theorem}{\rep@title}
\newcommand{\newreptheorem}[2]{%
	\newenvironment{rep#1}[1]{%
		\def\rep@title{\textbf{#2} \ref{##1}}%
		\begin{rep@theorem}}%
		{\end{rep@theorem}}}
\begin{document}

\maketitle
\begin{abstract}
We study model-based reinforcement learning in an unknown finite communicating Markov decision process. We propose a simple algorithm that leverages a variance based confidence interval. We show that the proposed algorithm, \UCRLV{}, achieves the optimal regret $\TilO(\sqrt{DSAT})$ up to logarithmic factors, and so our work closes a gap with the lower bound without additional assumptions on the MDP. We perform experiments in a variety of environments that validates the theoretical bounds as well as prove \UCRLV{} to be better than the state-of-the-art algorithms.
\end{abstract}
\begin{keywords}%
 reinforcement learning;multi-agent actor critic; safety; individual rationality
\end{keywords}
\todo[inline]{Change back to using algorithm2e instead of the trick you used now with algoithmcx}
\todo[inline]{Remove nosubfloats option and use default jmlr subfigure?}





\section{Introduction}\label{ucrlv:sec:intro}
\paragraph{Reinforcement Learning.} In \emph{reinforcement learning}~\citep{sutton1998reinforcement}, a learner interacts with an environment over a given time horizon $T$. At each time $t$, the learner observes the current state of the environment $s_t$ and needs to select an action $a_t$. This leads the learner to obtain a reward $r_t$  and to transit to a new state $s_{t+1}$. In the \emph{Markov decision process (MDP)} formulation of reinforcement learning, the reward and next state are generated based on the environment, the current state $s_t$ and current action $a_t$ but are independent of all previous states and actions. The learner does not know the true reward and transition distributions and needs to learn them while interacting with the environment. 
There are two variations of MDP problems: discounted and undiscounted MDP.
In the discounted MDP setting, the future rewards are discounted with a factor $\gamma <  1$~\citep{brafman2002r, poupart2006analytic}. The cumulative reward is computed as the discounted sum of such rewards over an infinite horizon.
In the undiscounted MDP setting, the future rewards are not discounted and the time horizon $T$ is finite. In this paper, we focus on \emph{undiscounted MDPs}.

\paragraph{Finite communicating MDP.} An \emph{undiscounted finite MDP} $M$ consists of a finite state space $\mathcal{S}$, a finite action space $\mathcal{A}$, a reward distribution $\nu$ on bounded rewards $r \in [0,1]$ for all state-action pair $(s,a)$, and a transition kernel $p$ such that $p(s'|s,a)$ dictates the probability of transiting to state $s'$ from state $s$ by taking an action $a$.
In an MDP, at state $s_t \in \mathcal{S}$ in \round{} $t$, a learner chooses an action $a_t \in \mathcal{A}$ according to a policy $\pi_t: \mathcal{S} \rightarrow \mathcal{A}$. This grants the learner a reward $r_t(s_t,a_t)$ and transits to a state $s_{t+1}$ according to the transition kernel $p$.
The \emph{diameter} $D$ of an MDP is the expected number of \rounds{} it takes to reach any state $s'$ from any other state $s$ using an appropriate policy for any pair of states $s,s'$. More precisely,

\begin{definition}[Diameter of an MDP]
\label{def:diameter}
The diameter $D$ of an MDP $M$ is defined as the minimum number of rounds needed to go from one state $s$ and reach any other state $s'$ while acting using some deterministic policy. Formally,
\[D(M) = \max_{s\ne s', s,s' \in \mathcal{S}} \min_{\pi: \mathcal{S} \rightarrow \mathcal{A}} T(s' | s, \pi) \] where $T(s' | s, \pi)$ is the expected number of rounds it takes to reach state $s'$ from $s$ using policy $\pi$.
\end{definition}
An MDP is \emph{communicating} if it has a finite diameter $D$. 
 
Given that the rewards are undiscounted, a good measure of performance is the gain, i.e. the infinite
horizon average rewards. The gain of a policy $\pi$ starting from state s is defined by:
\[
\gain(s | \pi) \defn \limsup_{T \to \infty}\frac{1}{T} \EX\left[\sum_{t=1}^{T} r(s_t, \pi(s_t)) \mid s_1 = s\right].
\]
\citet{puterman2014markov} shows that there is a policy $\pi^*$ whose gain, $\gain^*$ is greater than that of any other policy. In addition, this gain is the same for all states in a communicating MDP.
We can then characterize the performance of the agent by its regret defined as:
\[
\regret(T) \defn \sum_{t=1}^{T} \left(\gain^* - r(s_t, a_t)\right).
\]
Regret provides a performance metric to quantify the loss in gain because of the MDP being unknown to the learner. Thus, the learner has to explore the suboptimal state-actions to learn more about the MDP while also maximising the gain as much as possible. In the literature, this is called the exploration--exploitation dilemma.
Our goal in this paper, is to design reinforcement learning algorithm that minimises the regret without a prior knowledge of the original MDP $M$ i.e. $r, p, D$ are unknown. Thus, our algorithm needs to deal with the exploration--exploitation dilemma.

\paragraph{Optimistic Reinforcement Learning.} We adopt the optimistic reinforcement learning technique for algorithm design. \emph{Optimism in the face of uncertainty (OFU)} is a well-studied algorithm design technique for resolving the exploration--exploitation dilemma in multi-armed bandits~\citep{ucbv}. 
Optimism provides scope for researchers to adopt and extend the well-developed tools for multi-armed bandits to MDPs. For discounted MDPs and Bayesian MDPs, optimism-based techniques allow researchers to develop state-of-the-art algorithms~\citep{kocsis2006bandit,silver2016mastering}. 

\citeauthor{jaksch2010near} proposed an algorithm, UCRL2, for finite communicating MDPs that uses the \emph{optimism in the face of uncertainty} framework and achieves $\TilO(DS\sqrt{AT})$\footnote{In this paper, we will use $\TilO$ notation to hide extra $\log T$  factors.} regret.
The design technique of UCRL2 can be deconstructed as follows:
\begin{enumerate}
	\item Construct a set of statistically plausible MDPs around the estimated mean rewards and transitions such that the set contains the true MDP with high probability.
	\item Compute a policy (called \emph{optimistic}) whose gain is the maximum among all MDPs in the plausible set. They developed an extended value iteration algorithm for this task.
	\item Play the computed \emph{optimistic} policy for an artificial \episode{} that lasts until the number of visits to any state-action pair is doubled. This is known as the doubling trick.
\end{enumerate} 
Follow-up algorithms further developed from this optimism perspective, such as KL-UCRL~\citep{filippi2010optimism}, REGAL.C~\citep{bartlett2009regal}, UCBVI~\citep{azar2017minimax}, SCAL~\citep{fruit2018efficient}. These proposed algorithms and proof techniques improve the regret bound of optimistic reinforcement learning up to $\TilO(D\sqrt{SAT})$, however with additional assumptions on the MDP. The best known lower bound on the regret for a unknown finite communicating MDP is $\BigOmega(\sqrt{DSAT})$, as proven by~\citet{jaksch2010near}. This leaves a gap in the literature. 
A few recent works tried to bridge this gap by either proposing proof techniques~\citep{simchowitz2019non} or algorithms or both~\citep{efroni2019tight,zhang2019regret}.
These works are either limited to the setting of episodic MDP~\citep{simchowitz2019non,efroni2019tight} or need practically unavailable side infromation, such as the upper bound of the span of the bias function~\citep{zhang2019regret}. Thus, the question of designing a practical algorithm that does not assume any special setup or extra information about the problem while achieve the regret upper bound $\TilO(\sqrt{DSAT})$ still remains open.

In this paper, we design an algorithm and a proof technique that bridge this gap  by exploiting variance based confidence bounds, and modified versions of extended value iteration algorithm and the doubling trick.
Our algorithm achieves a regret upper bound of $\TilO(\sqrt{DSAT})$ with no additional assumptions on the communicating MDP.

\paragraph{Our Contributions.} Hereby, we summarise the contributions of this paper that we elaborate in the upcoming sections.\vspace*{-.5em}
\begin{itemize}
\item We propose an algorithm, \UCRLV{} (Algorithm~\ref{algo:ucrl_bernstein}), using the optimistic reinforcement learning framework (Section~\ref{ucrlv:sec:algo}). \UCRLV{} uses an empirical Bernstein bound and a new pointwise constraint on the transition kernel to construct a crisper set of plausible MDPs than the existing algorithms. (Section~\ref{sec:constraint})
\item We propose a modified extended value iteration algorithm (Algorithm~\ref{algo:extended_vi}) that converges under the new constraints while retaining the same complexity of the extended value iteration algorithm in~\cite{jaksch2010near}. (Section~\ref{sec:mevi})
\item In Theorem~\ref{thm:final}, we prove that \UCRLV{} achieves $\TilO(\sqrt{DSAT})$ regret without imposing any additional constraint on the communicating MDP. Thus, bridging a gap in the literature (Section~\ref{ucrlv:sec:theory}).
\item We prove a correlation between  the number of visits of a policy in an MDP with the values, probabilities and diameter (Lemma \ref{lemma_probs_vs_values_sum}). This result, along with the algorithm design techniques causes the improved bound  (Section~\ref{ucrlv:sec:theory}).
\item We perform experiments in a variety of environments that validates the theoretical bounds as well as proves \UCRLV{} to be better than the state-of-the-art algorithms. (Section~\ref{ucrlv:sec:experiments})
\end{itemize}\vspace*{-.5em}
We conclude by summarising the techniques involved in this paper and discussing the possible future works they can lead to (Section~\ref{ucrlv:sec:conclusion}). 
The proofs and technical details are elaborated in the Appendix.
\section{Methodology}\label{ucrlv:sec:algo}
In this section, we describe the algorithm design methodologies used in \UCRLV{}. We categorise and elaborate the principal methods in following sections.

\begin{algorithm}[ht!]
	\caption{\UCRLBERNSTEIN{}}
	\label{algo:ucrl_bernstein}
	\begin{algorithmic}[1]
		\State \textbf{Initialization: } Assign $t \gets 1$. 
		\State Assign $N_k, N_k(s,a), N_{t_k}(s,a) \gets 0$ for all $k \geq 0$ and $(s,a)$.
		\State Observe the initial state $s_1$.
		\For{Episodes $k=1, 2, \ldots$}
		
		\ParState{$t_k \gets t$} \ParState{$N_{t_{k+1}}(s, a) \gets N_{t_{k}}(s, a)\; \forall s,a$}
		\Statex		
		\State \textbf{Compute optimistic policy $\tilde{\pi}_k$:}
				\State $\tilde{\pi}_k \gets \Call{ModifiedExtendedVI}{\frac{1}{\sqrt{t_k}}}$ ~~~~~~~~~~~~~~(Algorithm~\ref{algo:extended_vi})
				\Statex
				\State \textbf{Execute policy $\tilde{\pi}_k$:}
				\While{ $\sum_{s,a} \frac{N_k(s,a)}{\max\{1, N_{t_k}(s,a)\}} < 1$ }
				\State Play action $a_t$ .
				\State Observe reward $r_t$, present state $s_{t+1}$.
				\State $N_k \gets N_k + 1$, $N_k(s_t, a_t) \gets N_k(s_t, a_t) + 1$, 
				\State $N_{t_{k+1}}(s_t, a_t) \gets N_{t_{k+1}}(s_t, a_t) + 1$,  $t \gets t + 1$.
				\EndWhile
		\EndFor
	\end{algorithmic}
\end{algorithm}
\subsection{Constructing the Set of Statistically Plausible MDPs}\label{sec:constraint}
We construct the set of statistically plausible MDPs using two important modifications compared to previous algorithms, such as \UCRL{}~\citep{jaksch2010near}. 

\textbf{The first modification} is the \emph{construction of confidence bounds on the transitions for all subsets of next states}. Specifically, we consider an MDP to be plausible if its expected rewards $\tilde{r} \in [0, 1]$ and transitions $\tilde{p}$ satisfy the following inequalities for all state-action pair $(s,a)$ and all subset of next states $\mathcal{S}_c \subseteq \mathcal{S}$:
\begin{alignat}{2}
 \tilde{r}(s,a) - \bar{r}_{t_k}(s,a) &\leq c_{\samples{r}}(s, a; \delta^k_r, t_k)\label{eq:plausible_rewards}\\
\tilde{p}(\mathcal{S}_c |s, a) - \bar{p}_{t_k}(\mathcal{S}_c|s, a) &\leq c_{\samples{p}}(\mathcal{S}_c; s, a, \delta^k_p, t_k).\label{eq:plausible_transitions}
\end{alignat}
Here, $t_k$ denotes the number of \rounds{} at the start of present episode $k$. $\tilde{p}$ is the transition kernel of a plausible MDP.  $\tilde{p}(\mathcal{S}_c |s,a) \defn \sum_{s' \in \mathcal{S}_c} \tilde{p}(s'|s,a)$ is the plausible transition probability to the subset of states $\mathcal{S}_c$ from state-action pair $(s,a)$. 

$\delta^k_r = \frac{\delta}{4 SA \ln\paren*{t_k}}$ and $\delta^k_p = \frac{\delta}{8S^2A \ln\paren*{t_k}}$, where $1-\delta$ is the desired confidence level of the set of plausible MDPs as ensured by the upper bounds $c_{\samples{r}}$ and $c_{\samples{p}}$. We define $c_{\samples{r}}$ and $c_{\samples{p}}$ using Equation~\ref{confidence_bound}.

Here, $\samples{r} = (r_1, \ldots r_{t_k})$ represents the sequence of observed rewards till time $t_k$. Similarly, $\samples{p}$ is an indicator function such that, given a subset of states $\mathcal{S}_c$, it outputs a vector
\[
\samples{p}(\mathcal{S}_c) = (p_1(\mathcal{S}_c), \ldots p_{t_k}(\mathcal{S}_c)) ~\textrm{s.t.}~
p_t(\mathcal{S}_c) = 
\begin{cases}
1, s_{t+1} \in \mathcal{S}_c\\
0, \textrm{otherwise.}
\end{cases}
\] 
$\samples{p}(\mathcal{S}_c) $ indicates for which time-steps the next state was in $\mathcal{S}_c$.

We also define the sample mean reward and transitions for each $(s,a)$ pair to be $\bar{r}_{t_k}(s,a), 
\bar{p}_{t_k}(.|s,a)$ respectively. 
In particular using $\samples{f} =(f_1, f_2, \ldots f_{t_k})$ as a placeholder for $\samples{r}$ and $\samples{p}$,  and $N_{t_k}(s,a)$ for the number of times $(s,a)$ is played up to \round{} $t_k$, the sample mean $\bar{f}_{t_k}(s,a)$ is defined by:
$\bar{f}_{t_k}(s,a) = \sum_{t\leq t_k: (s_t, a_t) = (s,a)} \frac{f_t}{N_{t_k}(s,a)}.$

 Now $\bar{r}_{t_k}(s,a), \bar{p}_{t_k}(.|s,a)$ are defined analogously to $\bar{f}_{t_k}(s,a)$.

\textbf{The second modification} is the use of \emph{variance modulated confidence bounds based on empirical Bernstein inequalities}~\citep{empirical_bernstein_bounds}. In particular, we set the confidence bounds as $c_{\samples{r}} \defn c(\samples{r};s,a,\delta^k_r, t_k)$; $c_{\samples{p}}(\mathcal{S}_c) \defn c(\samples{p}(\mathcal{S}_c); s, a, \delta^k_p, t_k)$, where $c$ is defined by
\begin{align}
\label{confidence_bound}
&\quad c\paren*{\samples{f};s,a,\delta_f, t_k}\nonumber\\ 
&\defn \sqrt{\frac{2\Var_{t_k}(\samples{f}, s,a, t_k) \ln \frac{2}{\delta_f}}{N_{t_k}(s,a)}}+ \frac{7}{3}\frac{\ln \frac{2}{\delta_f}}{{N_{t_k}(s,a)-1}}
\end{align}
with $\Var_{t_k}(\samples{f}, s, a, t_k )$ the sample variance:
\[ \Var_{t_k}(\samples{f}, s, a, t_k ) = \sum_{t\leq t_k: (s_t, a_t) = (s,a)} \frac{\paren*{f_t - \bar{f}_{t_k}(s,a)}^2}{N_{t_k}(s,a)}. \]

Unlike the Weissman $L_1$ deviation used by \UCRL{} \citep{jaksch2010near}, our transition  vectors for a given state-action pair satisfy separate bounds for any possible subset of next states. This provides a crisper set of plausible MDPs. For example, if the empirical transition to a state is $0$, our bounds lead to an error of at most $\BigO\paren*{\frac{1}{N_{t_k}(s,a)} }$ whereas \UCRL{} could add up to $\BigO\paren*{\sqrt{\frac{S}{N_{t_k}(s,a)}} }$. 
%
%
%

\subsection{Modified Extended Value Iteration}\label{sec:mevi}
The goal of Modified Extended Value Iteration (Algorithm~\ref{algo:extended_vi}) is to find an \emph{optimistic} policy, whose average value is the maximum among all plausible MDPs. In order to develop the algorithm, we follow the schematics of \citep{jaksch2010near}. We consider an extended MDP $\tilde{M}$ with the state space $\mathcal{S}$ and a continuous action space $\mathcal{A}'$ such that for each action $a \in \mathcal{A}$, each transition vector $\tilde{p}$ satisfying \eqref{eq:plausible_transitions}, each reward function $\tilde{r}$ satisfying \eqref{eq:plausible_rewards}, there exists an action in $\mathcal{A}'$ with transition $\tilde{p}$ and mean rewards $\tilde{r}$.

Now, we can define \emph{extended value iteration}~\citep{jaksch2010near} to solve this problem:
\begin{align}
u_0(s) &= 0 \nonumber \\
u_{i+1}(s) &= \max_{a \in \mathcal{A}} \left[\tilde{r}(s,a) + \max_{\tilde{p}(.) \in \mathcal{P}(s,a)} \left[\sum_{s' \in \mathcal{S}} \tilde{p}(s')u_i(s') \right]\right]\label{eq:extended_vi}
\end{align} 
where $u_i(s)$ denotes the value of state $s$ at the $i$-th iteration 
and $\mathcal{P}(s,a)$ is the set of all possible transitions in the set of plausible MDPs satisfying \cref{eq:plausible_transitions}. 
Now, we discuss how to efficiently solve the constraint optimisation problem of \cref{eq:extended_vi}.

\textit{The outer maximum.} The maximum for any $(s,a)$ is attained by setting $\tilde{r}(s,a)$ to $\bar{r}_{t_k}(s,a) + c_{\samples{r}}(s,a; \delta^k_r, t_k)$. 

\textit{The inner maximum.} Though the set of all possible transition functions  $\mathcal{P}(s,a)$ is an infinite space, computing the maximum over it is a linear optimization problem over the convex polytope $\mathcal{P}(s,a)$, which can be solved efficiently~\citep{jaksch2010near, StrehlL08}. The intuition is to  put as much transition probability as possible to the states with maximal value  at the expense of transition probabilities to states with small value. This idea is formally established for Algorithm \ref{algo:extended_vi} in  \Cref{lemma:optimism_p} (Appendix) which shows that the value returned by Algorithm \ref{algo:extended_vi} is greater than the one obtained by any other $\tilde{p} \in \mathcal{P}(s,a)$.

\textit{Constraints.} A careful observation of \cref{eq:plausible_transitions} shows that for each $(s,a)$ there are up to $O(2^{S})$ constraints on the transition. It is computationally expensive to check each one of them. Our analysis shows that we can satisfy all $2^{S}$ constraints by just considering at most $S$ constraints. This is possible because the confidence function on the transitions $c_{\samples{p}}: \mathcal{S}_c \to c(\samples{p}(\mathcal{S}_c| s, a); \ldots)$ defined using Bernstein bounds (\Cref{eq:plausible_transitions}) is a submodular function~\citep{schrijver2003combinatorial} on the subsets of states; something we formally prove in \Cref{lemma:assumption_satisfied} (Appendix). 
We also prove in \Cref{lemma:trans} (Appendix)
that  Algorithm \ref{algo:extended_vi} satisfies all $2^{S}$ constraints when the confidence function on the transitions $c_{\samples{p}}$ is submodular.
Thus, Algorithm \ref{algo:extended_vi} correctly computes the inner maximum by checking at most $S$ constraints instead of $2^S$ constraints due to the submodularity of the confidence function on the transitions $c_{\samples{p}}$.

%
%

\begin{algorithm*}[t!]
	\caption{Modified Extended Value Iteration for Solving Equation \ref{eq:extended_vi}}
	\label{algo:extended_vi}
	
	\begin{algorithmic}[1]
	\Function{ModifiedExtendedVI}{$\epsilon$}\label{ucrlv:modifiedvi:fun}
	    \State $i \gets -1; \quad u_0(s) \gets 0\; \forall s$
	    
	   	\Do
	    \State $i \gets i + 1$
	    \State $\tilde{p}(. |s,a) \gets \Call{OptimisticTransition}{u_i, s, a}  \forall (s,a)$
	    \State $\upperb{r}(s,a) \gets \min\left\{1, \bar{r}_{t_k}(s,a) + c_{\samples{r}}(s, a; \delta^k_r, t_k)\right\}  \forall (s,a)$
	    \State $u_{i+1}(s) \gets \max_{a \in \mathcal{A}} \left\{ \upperb{r}(s,a) +  \sum_{s' \in \mathcal{S}} \tilde{p}(s'|s,a)u_i(s') \right\}  \forall s$ 
	    \State $\tilde{\pi}(s) \gets   \argmax_{a \in \mathcal{A}} \left\{ \upperb{r}(s,a) +  \sum_{s' \in \mathcal{S}} \tilde{p}(s'|s,a)u_i(s') \right\}  \forall s$
	    \DoWhile{$\max_{s}\{u_{i+1}(s)-u_i(s)\}-\min_{s}\{u_{i+1}(s)-u_i(s)\} \leq \epsilon$}
	    \State \Return $\tilde{\pi}$
	    \EndFunction
	    
	   	\Statex	
		\Function{OptimisticTransition}{$u_i$, $s$, $a$}\label{ucrlvoptimisticpfun}
		\State Sort the states in descending order such that $u_i(s'_1) \geq u_i(s'_2) \ldots \geq u_i(s'_S)$
		
		\State Let $\mathcal{S}_{j_1}^{j_2} = \{s'_{j_1}, s'_{j_1 + 1} \ldots s'_{j_2}\}\; \forall j_1 \leq j_2$ and $\mathcal{S}_{j_1}^{j_2} = \{\}\; \forall j_1 > j_2$
		
		\For{$j=1$ {\bfseries to} $S$}
		\begin{align}
		\upperb{p}(\mathcal{S}_1^j | s,a)  &\gets \bar{p}_{t_k}(\mathcal{S}_1^j | s,a) +  c_{\samples{p}}(\mathcal{S}_1^j; s,a, \delta^k_p, t_k)\nonumber\\
		\opt{p}(s'_j | s,a) &\gets \min\left\{ \upperb{p}(\mathcal{S}_1^j | s,a)-\opt{p}(\mathcal{S}_1^{j-1} | s,a), \right.
				\left. 1- \opt{p}(\mathcal{S}_{1}^{j-1} | s, a) \right\}
		\end{align}
		\EndFor
		\State \Return $\opt{p}(. | s,a)$
		\EndFunction		
	\end{algorithmic}
\end{algorithm*}

\subsection{Scheduling the Adaptive Episodes}\label{sec:doubling} 
In our analysis, we found that the standard \emph{doubling trick} that is used to start a new \episode{} can cause the length of an \episode{} to be too large. Specifically, we observe that the average number of states that are doubled during an \episode{} should be a small constant independent of $S$. However, we also need to make sure that the total number of \episodes{} is small. 

We propose to start a new \episode{} as soon as $\sum_{s,a} \frac{N_k(s,a)}{\max\{1, N_{t_k}(s,a)\}} > 1$, where $N_k(s,a)$ is the number of times $(s,a)$ is played at \episode{} $k$. Intuitively, this criterion allows us to start a new episode if either at least a new state-action is explored or the number of visits to at least one of the visited state-action pairs is doubled or the number of visits to visited state-action pairs is doubled on an average. We refer to this new condition as the \emph{extended doubling trick}. It forms a crucial part into removing an additional $\sqrt{S}$ factor compared to \UCRL{}. A more specific description is available in Algorithm~\ref{algo:ucrl_bernstein}. Theorem~\ref{thm:episodes} shows that the total number of episodes $m$ due to extended doubling trick is bounded by $\BigO(SA \log T)$. In the worse case, it does not introduce more episodes than the existing doubling trick~\citep{jaksch2010near}.

\section{Theoretical Analysis}\label{ucrlv:sec:theory}
Our proposed algorithm \UCRLV{} is formally described in \Cref{algo:ucrl_bernstein}. \Cref{algo:ucrl_bernstein} combines the three techniques described in Section \ref{ucrlv:sec:algo} to achieve the near-optimal $\TilO(\sqrt{DSAT})$ regret proven in Theorem \ref{thm:final}.
We provide a proof sketch to briefly describe the results and techniques used to obtain Theorem \ref{thm:final}.
\begin{theorem}[Upper Bound on the Regret of \UCRLV{}]\label{thm:final}
With probability at least $1-\delta$ for any $\delta \in ]0,1[$, any $T \geq 1$, the regret of \UCRLV{} is bounded by:
\begin{align*}
\mathcal{R}(T) &\leq
\SQRTFactorUCRLV{}\cdot \sqrt{D T SA \min\{\ln\paren*{D + 1}, S\} \ln\paren*{\frac{8T}{SA}} \ln \paren*{\frac{BS}{\delta}}} \\
&\quad\quad + 64DSA\ln\paren*{\frac{8T}{SA}} \ln \paren*{\frac{B}{\delta}}
\end{align*}
for $B = 32SA\ln\paren*{T}$. \dbcomment{Aristide: Check the constants once?}
\end{theorem}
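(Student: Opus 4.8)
The plan is to follow the episodic regret decomposition of \citet{jaksch2010near}, exploiting the crisper confidence sets and the extended doubling trick to shave the extra factors. First I would establish the \emph{good event}: that the true MDP $M$ lies in the plausible set defined by \eqref{eq:plausible_rewards}--\eqref{eq:plausible_transitions} simultaneously across all episodes. Since $c_{\samples{r}}$ and $c_{\samples{p}}$ are empirical Bernstein bounds (Equation~\ref{confidence_bound}), applying the empirical Bernstein inequality of \citet{empirical_bernstein_bounds} to the reward sequence and, for each subset $\mathcal{S}_c$, to the $\{0,1\}$ indicator sequence $\samples{p}(\mathcal{S}_c)$ guarantees each individual bound. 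A union bound over the $SA$ reward events, the subset transition events per state-action, and over episodes (peeling via $\ln t_k$) with the chosen $\delta^k_r = \delta/(4SA\ln t_k)$ and $\delta^k_p = \delta/(8S^2A\ln t_k)$ yields failure probability $\le \delta$; this is where the $\ln(BS/\delta)$ factor is born.

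On the good event, optimism gives $\gain^* \le \tilde{\gain}_k$ for each episode $k$, so I would write $\mathcal{R}(T) = \sum_k \Delta_k$ with $\Delta_k = \sum_{t=t_k}^{t_{k+1}-1}(\gain^* - r(s_t,a_t))$ and bound each $\Delta_k$ by the optimistic quantities. Using the approximate optimality equation satisfied by the bias vector $u_k$ returned by Algorithm~\ref{algo:extended_vi} (convergence and optimism via \Cref{lemma:optimism_p} and \Cref{lemma:trans}), I would rewrite the per-step gap as a reward-confidence term $\tilde r_k(s_t,a_t)-r(s_t,a_t)$, a transition term $(\tilde p_k-p)(\cdot|s_t,a_t)\,u_k$, and a martingale-difference term $(p(\cdot|s_t,a_t)-e_{s_{t+1}})u_k$ controlled by Azuma's inequality and the span bound $\mathrm{sp}(u_k)\le D$. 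The reward term sums to $\TilO(\sqrt{SAT})$ by the standard visit-count argument.

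The heart of the argument is the transition term. Here I would use the subset structure: writing $(\tilde p_k-p)(\cdot|s,a)\,u_k$ in terms of the ordered level sets $\mathcal{S}_1^j$ from \textsc{OptimisticTransition} and the submodularity of $c_{\samples{p}}$ (\Cref{lemma:assumption_satisfied}), each subset deviation is bounded by its empirical Bernstein width, and crucially the $\BigO(1/N)$ behaviour on zero-probability transitions avoids the $\sqrt{S/N}$ blow-up of the Weissman $L_1$ bound. The dominant piece is the variance-weighted sum $\sum_k\sum_{s,a} N_k(s,a)\sqrt{\Var(u_k,p(\cdot|s,a))/N_{t_k}(s,a)}$. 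After Cauchy--Schwarz across state-actions (which pays a single $\sqrt{SA}$), using that the extended doubling trick keeps $N_k(s,a)\approx N_{t_k}(s,a)$ while bounding the episode count by $\BigO(SA\ln T)$ (Theorem~\ref{thm:episodes}), this reduces to controlling $\sum_t \Var(u_k,p(\cdot|s_t,a_t))$ along the trajectory.

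The main obstacle --- and where \Cref{lemma_probs_vs_values_sum} does the decisive work --- is showing that this aggregated conditional variance is $\TilO(DT)$ rather than the naive $D^2T$. I would invoke a law-of-total-variance identity linking the one-step conditional variance of the bias to the squared span times an effective mixing length, with \Cref{lemma_probs_vs_values_sum} supplying the quantitative correlation between visit counts, transition probabilities, and $D$ that converts $\mathrm{sp}(u_k)\le D$ into the sharper $\sqrt{D}$ dependence and simultaneously introduces the $\min\{\ln(D+1),S\}$ factor. Combining the resulting variance bound via Cauchy--Schwarz with $\sum_{s,a}N_k(s,a)=$ episode length then yields the leading $\sqrt{DTSA\,\min\{\ln(D+1),S\}\ln(8T/(SA))\ln(BS/\delta)}$ term, while the lower-order $1/(N-1)$ parts of the Bernstein bounds, summed over the $\BigO(SA\ln T)$ episodes, produce the additive $DSA$ term. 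I expect the variance-summation step to be the most delicate, since it is precisely what removes both the extra $\sqrt{D}$ and the extra $\sqrt{S}$ relative to \UCRL{}.
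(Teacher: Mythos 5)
You have the right skeleton --- the good event via empirical Bernstein bounds and a union bound, optimism through the modified extended value iteration, the per-episode decomposition into reward, transition, and martingale terms, the episode count of \Cref{thm:episodes}, and the central role of \Cref{lemma_probs_vs_values_sum} --- but one step fails outright. You propose to control the martingale term $\sum_t (p(\cdot|s_t,a_t)-e_{s_{t+1}})\,u_k$ ``by Azuma's inequality and the span bound $\mathrm{sp}(u_k)\le D$.'' That yields a term of order $D\sqrt{T\ln(1/\delta)}$, which is \emph{linear} in $D$, whereas the bound you must prove scales as $\sqrt{D\ln(D+1)}$. These are incompatible: take $S=A=2$ and a two-state chain with switching probability $1/\sqrt{T}$, so that $D\asymp\sqrt{T}$; Azuma then contributes a term of order $T$, while the claimed bound is $\TilO(T^{3/4})$. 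This is not a constant-chasing issue --- keeping Azuma here re-introduces exactly the extra $\sqrt{D}$ that separates \UCRL{} from the optimal rate. The paper instead bounds this term in \Cref{lemma:delta_p} with a Bernstein--Freedman martingale inequality (Lemma~1 of \citet{cesa2008improved}), and the substance of that lemma is the conditional-variance bound $\sum_{t}\E[X_t^2\mid\cdot]\le O(N_k\, DS \min\{\log_2(D+1),S\})$ per episode, which is itself proved by re-using the transition-term machinery: \Cref{lemma:sum_p_vs_tilde} to carefully replace $p$ by $\tilde p$, the geometric interval partition, and \Cref{lemma_bound_on_visits} feeding \Cref{lemma_probs_vs_values_sum}. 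No span-only argument suffices.

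A second, related gap concerns the transition term. You silently swap the paper's subset-indicator Bernstein widths (variance $p(\mathcal{S}_c)(1-p(\mathcal{S}_c))$) for a single value-variance width $\sqrt{\Var_{s'\sim p}(u_k(s'))/N_{t_k}(s,a)}$ per state-action, and then assert $\sum_t \Var(u_k,p(\cdot|s_t,a_t))=\TilO(DT)$ via ``a law-of-total-variance identity.'' Neither move is justified. The algorithm's confidence sets constrain subsets, not the inner product $\langle \tilde p - p, u_k\rangle$; converting per-state widths into a variance-of-value expression by Cauchy--Schwarz over next states costs a factor $\sqrt{S}$ --- precisely the factor the paper eliminates by partitioning next states into the intervals $\mathcal{I}^u_{\pm}$, $\mathcal{I}^p$ and invoking \Cref{ucrlv:lemma:num_groups}. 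And the total-variance identity is a finite-horizon device (as in \citet{azar2017minimax}): here $u_k$ is the bias vector of the \emph{optimistic} MDP under a policy that changes every episode and satisfies no Bellman equation in the true MDP, so the identity does not transfer, and you supply no substitute mechanism. The paper's substitute is exactly \Cref{lemma:delta_tilde}: subset confidence bounds, the two geometric partitions, and the visit-count comparison \Cref{lemma_bound_on_visits} combined with \Cref{lemma_probs_vs_values_sum}. In short, you have correctly located where the $\sqrt{D}$ and $\sqrt{S}$ savings must come from, but of the two steps that would deliver them, one is wrong (Azuma) and the other is unsubstantiated (the variance aggregation).
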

\paragraph{Proof Sketch.}
	Lemma~\ref{lemma:regret_decompose} is the starting point of our proof and decomposes the regret into three terms.
	\sloppy 	
	\begin{lemma}[Regret decomposition]\label{lemma:regret_decompose}
		If the true model $M$ is within our plausible set $\mathcal{M}_k$ for each episode $k$, then with probability at least $1-\delta'$, 
		\begin{align}
		\label{ucrlv:lemma:regret_decomposition_main}
		\begin{split}
		\regret(T) &\leq \paren*{f_1(\delta^m_r)m + \sqrt{f_2(\delta')T} +  \sqrt{f_3(\delta^m_r)SAT}}\\ &+ \sum_{k=1}^{m} \sum_{s}\Delta_{k}^{\tilde{p}}(s) + \sum_{k=1}^{m} \sum_{s}\Delta_{k}^{p}(s)
		\end{split}
		\end{align}
		where $\tilde{\Delta}_{k}^{\tilde{p}}(s) = N_k(s)\cdot\sum_{s'} \paren*{\tilde{p}(s'|s)-p(s'|s)} (u_i(s') - u_i(s))$ and $\tilde{\Delta}_{k}^{p}(s) = N_k(s) \cdot \paren*{\sum_{s'}p(s'|s)u_i(s') -u_i(s)}$, $\tilde{p}$ and $p$ are respectively the transition kernels of the optimistic and true (but unknown) MDP . $N_k(s) = N_k(s, \tilde{\pi}_k(s))$ is the number of times the optimistic policy $\tilde{\pi}_k$ visits state $s$ at \episode{} $k$. $f_1, f_2, f_3$ are logarithmic functions (See Appendix for full definition).
	\end{lemma}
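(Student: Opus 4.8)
The plan is to follow the optimism-based telescoping decomposition introduced for \UCRL{}, adapted here to regret measured against realized rewards and to the \emph{approximate} fixed point produced by \textsc{ModifiedExtendedVI}. First I would split off the reward noise: writing $r(s_t,a_t)$ for the realized reward (as in the regret definition) and $\mu(s,a)=\E[r_t\mid s_t,a_t]$ for its conditional mean, I get $\regret(T)=\sum_t(\gain^*-\mu(s_t,a_t))+\sum_t(\mu(s_t,a_t)-r(s_t,a_t))$. The second sum is a bounded martingale-difference sequence (each term is a $[0,1]$ reward minus its conditional mean), so Azuma--Hoeffding controls it by $\sqrt{f_2(\delta')T}$ on an event of probability $1-\delta'$; this is exactly the source of the ``$1-\delta'$'' in the statement.

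For the mean-reward part I would regroup the per-round sum into episodes, $\sum_t(\gain^*-\mu(s_t,a_t))=\sum_{k=1}^m\sum_s N_k(s)\,(\gain^*-\mu(s,\tilde{\pi}_k(s)))$. Invoking the hypothesis that the true MDP $M$ lies in $\mathcal{M}_k$, optimism gives $\gain^*\le\tilde{\gain}_k$, the gain of $\tilde{\pi}_k$ in the optimistic MDP, so each episode term is at most $\sum_s N_k(s)\,(\tilde{\gain}_k-\mu(s,\tilde{\pi}_k(s)))$. I would then bring in the value function: since \textsc{ModifiedExtendedVI} halts when the span of $u_{i+1}-u_i$ is at most $\epsilon_k=1/\sqrt{t_k}$, the extended-value-iteration guarantee yields $\tilde{\gain}_k\le u_{i+1}(s)-u_i(s)+\epsilon_k$ for every $s$ (because $\tilde{\gain}_k\le\max_{s'}(u_{i+1}(s')-u_i(s'))\le\min_{s'}(\cdots)+\epsilon_k$), and by construction $u_{i+1}(s)=\upperb{r}(s,\tilde{\pi}_k(s))+\sum_{s'}\tilde{p}(s'|s)u_i(s')$ for the greedy action.

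Substituting, the reward contribution is bounded via $\upperb{r}(s,a)-\mu(s,a)\le 2c_{\samples{r}}(s,a;\delta^k_r,t_k)$ (the optimistic reward from \eqref{eq:plausible_rewards} and the true mean both lie within $c_{\samples{r}}$ of the empirical mean $\bar r_{t_k}$); the leading Bernstein term of $c_{\samples{r}}$ summed over all visits gives $\sqrt{f_3(\delta^m_r)SAT}$ by a standard counting argument over state-action visit counts, while the accumulated $\epsilon_k$ slack and the additive second-order Bernstein terms collect into $f_1(\delta^m_r)m$ (using the episode-count bound $m=\BigO(SA\log T)$). For the transition piece $\sum_{s'}\tilde{p}(s'|s)u_i(s')-u_i(s)$, I add and subtract the true $p$ and center the value vector — legitimate since $\tilde{p}$ and $p$ both sum to one, so $\sum_{s'}(\tilde{p}-p)u_i(s')=\sum_{s'}(\tilde{p}-p)(u_i(s')-u_i(s))$ — which splits it cleanly into $N_k(s)\sum_{s'}(\tilde{p}-p)(u_i(s')-u_i(s))=\Delta_{k}^{\tilde{p}}(s)$ and $N_k(s)(\sum_{s'}p(s'|s)u_i(s')-u_i(s))=\Delta_{k}^{p}(s)$, the two sums kept on the right-hand side.

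The main obstacle I anticipate is the bookkeeping around the approximate fixed point: one must verify that the halting criterion of \textsc{ModifiedExtendedVI} genuinely yields $\tilde{\gain}_k\le u_{i+1}(s)-u_i(s)+\epsilon_k$ uniformly in $s$, which hinges on the convergence of the modified value iteration under the new pointwise submodular constraints and on the optimism guarantee of \Cref{lemma:optimism_p}. One must also confirm that the $\epsilon_k$ slacks and additive confidence terms, summed across the schedule, truly compress into $f_1m$ (rather than a larger term) — this is where the extended doubling trick is used to keep both $m$ and the per-episode lengths under control. The centering step and the Azuma bound are routine once the value span is known to be finite, so the delicate part is purely the interplay between value-iteration accuracy, the Bernstein confidence widths, and the episode count.
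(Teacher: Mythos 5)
Your proposal follows the paper's proof essentially step for step: Hoeffding/Azuma for the realized-reward noise (the paper's Step 1), optimism combined with the $\epsilon_{t_k}$-approximate convergence guarantee of Theorem~\ref{thm:evi_convergence} and the greedy fixed-point identity $u_{i+1}(s)=\upperb{r}(s,\tilde{\pi}_k(s))+\sum_{s'}\tilde{p}(s'|s)u_i(s')$ (Steps 2 and 4), the two-sided reward-confidence bound $\upperb{r}(s,a)-\E r(s,a)\le 2C(1/2,\cdot,\cdot)$ summed over visit counts via the extended doubling trick (Steps 3 and 6), and the same centering identity $\sum_{s'}(\tilde{p}-p)u_i(s')=\sum_{s'}(\tilde{p}-p)(u_i(s')-u_i(s))$ that splits the remainder into $\Delta_k^{\tilde{p}}$ and $\Delta_k^{p}$. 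The only slips are bookkeeping: optimism yields $\gain^*\le\tilde{\gain}_k+\epsilon_{t_k}$ rather than $\gain^*\le\tilde{\gain}_k$ (since $\tilde{\pi}_k$ is only $\epsilon_{t_k}$-optimal in the extended MDP), and the accumulated slack $\sum_k N_k\epsilon_{t_k}=\sum_k N_k/\sqrt{t_k}\le(\sqrt{2}+1)\sqrt{T}$ is a $\sqrt{T}$-order term (the paper's Step 5, via C.3 of \citet{jaksch2010near}) that must be absorbed into the $\sqrt{f_2(\delta')T}$ term rather than into $f_1(\delta_r^m)m$, which holds only the second-order Bernstein contributions $2C_2^{\delta_r^m}m$.
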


\paragraph{On Term 1 of \Cref{ucrlv:lemma:regret_decomposition_main}:} We obtained the first term using results and techniques proposed in~\cite{jaksch2010near}, convergence of Algorithm~\ref{algo:extended_vi}, and the upper bound on the number of episodes. 

Theorem~\ref{thm:evi_convergence} shows that the modified extended value iteration (Algorithm~\ref{algo:extended_vi}) converges to the optimal policy based on the point-wise constraints on the transitions (Equation~\ref{eq:plausible_transitions}). 
\begin{theorem}[Convergence of Modified Extended Value Iteration]\label{thm:evi_convergence}
	Let $\mathcal{M}$ be the set of all MDPs with state space $\mathcal{S}$, action space $\mathcal{A}$, transitions probabilities $\tilde{p}(s,a)$ and mean rewards $\tilde{r}(s,a)$ that satisfy \eqref{eq:plausible_rewards} and \eqref{eq:plausible_transitions} for given probabilities distribution $\bar{p}(s,a)$, $\bar{r}(s,a)$ in $[0,1]$. If $\mathcal{M}$ contains at least one communicating MDP, modified extended value iteration in Algorithm \ref{algo:extended_vi} converges. Further, stopping Algorithm \ref{algo:extended_vi} when:
	\[\max_{s}\{u_{i+1}(s)-u_i(s)\}-\min_{s}\{u_{i+1}(s)-u_i(s)\} \leq \epsilon ,\]
	the greedy policy $\pi$ with respect to $u_i$ is $\epsilon$-optimal meaning $V(\pi) \geq V^*_{\mathcal{M}} - \epsilon$
	and $\abs{u_{i+1}(s)-u_i(s)- V(\pi)} \leq \epsilon$.
\end{theorem}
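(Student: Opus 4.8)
The plan is to recognise Algorithm~\ref{algo:extended_vi} as ordinary value iteration on the extended MDP $\tilde{M}$ and then import the classical convergence theory for the average-reward criterion on communicating MDPs. Define the Bellman operator $L$ on $\mathbb{R}^{\mathcal{S}}$ by
\[ (Lu)(s) = \max_{a \in \mathcal{A}} \Bigl\{ \upperb{r}(s,a) + \max_{\tilde{p} \in \mathcal{P}(s,a)} \textstyle\sum_{s'} \tilde{p}(s')\, u(s') \Bigr\}. \]
By the already-established properties of the inner routine --- that \textsc{OptimisticTransition} returns a feasible $\tilde{p}\in\mathcal{P}(s,a)$ attaining the inner maximum (\Cref{lemma:optimism_p}), and that the $S$ nested constraints it enforces are equivalent to all $2^{S}$ subset constraints of \eqref{eq:plausible_transitions} because $c_{\samples{p}}$ is submodular (\Cref{lemma:assumption_satisfied,lemma:trans}) --- lines~5--7 of the algorithm compute exactly $u_{i+1}=Lu_i$. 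Note also that $\mathcal{P}(s,a)$ is a nonempty compact polytope (it contains $\bar{p}_{t_k}$ and is an intersection of the simplex with finitely many half-spaces), so the inner maximum is attained. Two structural facts drive everything: $L$ is monotone and offset-invariant, $L(u+c\mathbf{1})=Lu+c\mathbf{1}$, hence a non-expansion in both the sup-norm and the span seminorm $\operatorname{sp}(u)=\max_s u(s)-\min_s u(s)$.

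Next I would show the extended MDP $\tilde{M}$ is communicating. By hypothesis $\mathcal{M}$ contains a communicating MDP $M_0$; since the kernel of $M_0$ satisfies \eqref{eq:plausible_transitions} by definition of $\mathcal{M}$, every transition available under $M_0$ is a feasible action of $\tilde{M}$. Thus any stationary policy of $M_0$ can be realised in $\tilde{M}$, so the diameter of $\tilde{M}$ is finite and its optimal gain $g^{*}=V^{*}_{\mathcal{M}}$ is state-independent. This is the hook for the standard average-reward value-iteration convergence theorem (\citet{puterman2014markov}, as invoked by \citet{jaksch2010near}): after the usual aperiodicity transformation, which replaces each kernel $\tilde{p}(\cdot\mid s)$ by $(1-\tau)$ mass on $s$ plus $\tau\,\tilde{p}(\cdot\mid s)$ and changes neither the greedy policy nor $g^{*}$ but turns $L$ into a genuine span contraction on a communicating MDP, the iterates satisfy $\operatorname{sp}(u_{i+1}-u_i)\to 0$ and $u_{i+1}-u_i\to g^{*}\mathbf{1}$. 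In particular the stopping test in the \textsc{DoWhile} is met after finitely many iterations, giving convergence.

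Finally I would turn the stopping criterion into the two quantitative guarantees. Let $\pi$ be greedy with respect to $u_i$, so by construction $u_{i+1}=\upperb{r}_{\pi}+\tilde{P}_{\pi}u_i$ for the reward--transition pair $(\upperb{r}_{\pi},\tilde{P}_{\pi})$ selected for $\pi$. Set $m=\min_s(u_{i+1}(s)-u_i(s))$ and $M=\max_s(u_{i+1}(s)-u_i(s))$, with $M-m\le\epsilon$. Multiplying the identity $\upperb{r}_{\pi}+\tilde{P}_{\pi}u_i-u_i = u_{i+1}-u_i \ge m\mathbf{1}$ by the stationary distribution $\mu_{\pi}$ of $\tilde{P}_{\pi}$ and using $\mu_{\pi}^{\top}\tilde{P}_{\pi}=\mu_{\pi}^{\top}$ yields $V(\pi)=\mu_{\pi}^{\top}\upperb{r}_{\pi}\ge m$; the same computation applied to an optimal policy together with $Lu_i-u_i\le M\mathbf{1}$ gives $V^{*}_{\mathcal{M}}\le M$. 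Hence $V^{*}_{\mathcal{M}}-V(\pi)\le M-m\le\epsilon$, i.e.\ $V(\pi)\ge V^{*}_{\mathcal{M}}-\epsilon$. Since both $V(\pi)$ and every coordinate $u_{i+1}(s)-u_i(s)$ lie in $[m,M]$, an interval of width at most $\epsilon$, we also obtain $\abs{u_{i+1}(s)-u_i(s)-V(\pi)}\le\epsilon$.

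The step I expect to be the main obstacle is the convergence argument itself, i.e.\ the span contraction. Monotonicity and offset-invariance only yield a non-expansion, so without the aperiodicity transformation $\operatorname{sp}(u_{i+1}-u_i)$ may oscillate rather than vanish and the \textsc{DoWhile} need not terminate; the care lies in verifying that the transformed operator is a proper span contraction on the communicating $\tilde{M}$ while confirming the transformation leaves $g^{*}$ and the greedy policy intact. Establishing that the new pointwise constraints still yield a feasible, compact $\mathcal{P}(s,a)$ whose inner maximum is correctly solved is a secondary concern, already discharged by the cited optimism and submodularity lemmas.
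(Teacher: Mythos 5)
Your overall architecture coincides with the paper's own proof: \Cref{lemma:optimism_p} together with \Cref{lemma:trans} (fed by the submodularity result \Cref{lemma:assumption_satisfied}) certifies that lines 5--7 of Algorithm~\ref{algo:extended_vi} compute exactly the Bellman update \eqref{eq:extended_vi} over the plausible set; the hypothesis that $\mathcal{M}$ contains a communicating MDP makes the extended MDP communicating; and classical average-reward value-iteration theory from \citet{puterman2014markov} does the rest. Your closing computation with the stationary distribution, giving $V(\pi)\geq m$, $V^*_{\mathcal{M}}\leq M$ and hence both $\epsilon$-guarantees, is precisely the content of Theorem~8.5.6 of \citet{puterman2014markov}, which the paper simply cites; your version is a correct and more self-contained re-derivation of that step.

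The genuine gap is in the convergence step, exactly where you predicted trouble, but it is not of the kind you guarded against. The aperiodicity transformation produces a \emph{different} operator with \emph{different} iterates: Algorithm~\ref{algo:extended_vi} runs the untransformed recursion, so proving that the span of successive differences vanishes for the \emph{transformed} sequence does not show that the \textsc{DoWhile} test on the iterates $u_i$ the algorithm actually computes is ever met. (Your parenthetical that the transformation ``changes neither the greedy policy nor $g^*$'' is also off under either convention: keeping the rewards unchanged preserves the gain but can change greedy actions, since only the transition part is scaled by $\tau$; scaling the rewards by $\tau$ preserves greedy actions but scales the gain by $\tau$. This is repairable bookkeeping, but it underlines that the transformed and untransformed iterations cannot simply be identified.) The paper sidesteps the transformation entirely by observing, following the discussion in Section~3.1.3 of \citet{jaksch2010near}, that the matrices selected by \textsc{OptimisticTransition} are \emph{already} aperiodic: the chosen kernel assigns probability $\min\bigl\{\bar{p}_{t_k}(s'_1|s,a)+c_{\samples{p}}(\{s'_1\};s,a,\delta^k_p,t_k),\,1\bigr\}>0$ to the highest-valued state $s'_1$ from \emph{every} state (the confidence width is strictly positive), in particular a positive self-loop at $s'_1$, so every chosen matrix is unichain and aperiodic. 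With that observation, Theorems~9.4.4 and 9.4.5 of \citet{puterman2014markov} apply directly to the sequence the algorithm computes, and the rest of your argument goes through unchanged; the same self-loop also justifies your use of a single stationary distribution $\mu_\pi$ for the greedy policy. A final cosmetic point: even after a transformation, merely communicating models need not admit a one-step span contraction (the cited theorems work with a multi-stage contraction), but since you invoke the theorem rather than a contraction modulus, nothing else breaks.
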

Theorem~\ref{thm:episodes} states that the number of episodes incurred by our extended doubling trick is upper bounded by $O(SA\log T)$.
\begin{theorem}[Bounding the number of episodes]\label{thm:episodes}
	The number of episodes $m$ is upper bounded by
	\[m \leq SA \log_2\paren*{\frac{8T}{SA}}\]
\end{theorem}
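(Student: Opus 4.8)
The plan is to start from the termination rule of the extended doubling trick. Because the while loop in Algorithm~\ref{algo:ucrl_bernstein} runs exactly while $\sum_{s,a} \frac{N_k(s,a)}{\max\{1, N_{t_k}(s,a)\}} < 1$, at the close of every episode $k$ we must have $\sum_{s,a} \frac{N_k(s,a)}{\max\{1, N_{t_k}(s,a)\}} \geq 1$. Summing this lower bound over the $m$ episodes and exchanging the order of summation gives
\[
m \;\leq\; \sum_{k=1}^{m} \sum_{s,a} \frac{N_k(s,a)}{\max\{1, N_{t_k}(s,a)\}} \;=\; \sum_{s,a} \sum_{k=1}^{m} \frac{N_k(s,a)}{\max\{1, N_{t_k}(s,a)\}},
\]
so it suffices to bound, for each fixed $(s,a)$, the inner telescoping-type sum of within-episode visit counts divided by the cumulative counts seen so far.

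The key structural observation is that a single play increases exactly one summand, and by at most $\frac{1}{\max\{1, N_{t_k}(s,a)\}} \le 1$; hence the quantity that triggers termination overshoots $1$ by at most $1$, i.e. $\sum_{s,a}\frac{N_k(s,a)}{\max\{1, N_{t_k}(s,a)\}} < 2$ at the end of each episode. In particular every individual ratio stays below $2$, which forces the cumulative count of any $(s,a)$ to grow by at most a constant factor inside one episode: writing $n_k := N_{t_k}(s,a)$ with $n_{k+1} = n_k + N_k(s,a)$, this yields $n_{k+1} < 3 n_k$ whenever $n_k \ge 1$. This bounded per-episode growth is what lets me compare the discrete sum to a logarithm. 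Using $\ln(1+x) \ge c\,x$ on the bounded range $x \in [0,2)$ (equivalently an integral comparison $\frac{n_{k+1}-n_k}{n_k} \le c'(\log_2 n_{k+1} - \log_2 n_k)$), the terms telescope across episodes to give $\sum_{k} \frac{N_k(s,a)}{\max\{1, N_{t_k}(s,a)\}} = \BigO(\log_2 N_T(s,a))$, where $N_T(s,a)$ is the total number of plays of $(s,a)$; the single episode in which $(s,a)$ is first visited, where $n_k = 0$, contributes a separate $\BigO(1)$ term that I handle directly.

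Finally I would sum the per-pair bounds over all $(s,a)$. Since $\sum_{s,a} N_T(s,a) = T$ and $\log_2$ is concave, Jensen's inequality gives $\sum_{s,a} \log_2 N_T(s,a) \le SA \log_2\!\paren*{\frac{T}{SA}}$, yielding $m = \BigO\!\paren*{SA \log_2\!\paren*{\frac{T}{SA}}}$. The remaining effort is careful bookkeeping of constants: folding the $\BigO(1)$ first-visit contributions and the growth-factor constant into the additive offset inside the logarithm so that the bound sharpens to exactly $m \le SA\log_2\!\paren*{\frac{8T}{SA}}$, which also recovers the standard doubling bound of \citep{jaksch2010near} in the worst case. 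The main obstacle is controlling the within-episode overshoot — guaranteeing that an episode cannot terminate with the counts inflated by more than a constant factor, which is precisely what keeps the coefficient of the logarithm from degrading — and then tightening the constants to land on the factor $8$; by comparison, the reduction to a per-$(s,a)$ logarithmic count and the concavity step are routine.
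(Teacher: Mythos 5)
Your reduction $m \le \sum_{s,a}\sum_{k} N_k(s,a)/\max\{1,N_{t_k}(s,a)\}$ and the idea of telescoping each pair's sum against $\log_2$ is a genuinely different route from the paper's proof, which instead builds, for each $(s,a)$, a list $\mathcal{K}(s,a)$ of episode indices between which the cumulative count doubles, shows $m \le 2SA + \sum_{s,a}\size{\mathcal{K}(s,a)}$ and $N(s,a)\ge 2^{\size{\mathcal{K}(s,a)}}-1$, and finishes with Jensen applied to the convex function $2^x$ (the dual of your concavity step). However, your constant-tracking step contains a genuine flaw. From your overshoot bound on the \emph{total} sum (episode-end sum $<2$, hence each individual ratio $<2$) you only get $n_{k+1} < 3n_k$, and the best constant in $\frac{n_{k+1}-n_k}{n_k} \le c'\,\paren*{\log_2 n_{k+1}-\log_2 n_k}$ over ratios $r = n_{k+1}/n_k \in [1,3]$ is $c' = \max_{r\in[1,3]} (r-1)/\log_2 r = 2/\log_2 3 \approx 1.26 > 1$. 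A multiplicative constant on the logarithm cannot be ``folded into the additive offset inside the logarithm'' as you propose: $c'\log_2 x = \log_2 x^{c'}$ exceeds $\log_2(8x)$ as soon as $x > 8^{1/(c'-1)}$, so your stated steps only yield $m \le SA + \frac{2}{\log_2 3}\, SA \log_2\paren*{\frac{T}{SA}}$, which is $\BigO\paren*{SA\log_2\paren*{\frac{T}{SA}}}$ but does \emph{not} imply $m \le SA\log_2\paren*{\frac{8T}{SA}}$ for large $T$.

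The gap is repairable inside your own framework by sharpening the overshoot bound from the total sum to each pair individually. The while-condition is tested before \emph{every} play, so immediately before the last play of $(s,a)$ in episode $k$, that pair's own ratio already satisfies $N_k(s,a)/\max\{1,N_{t_k}(s,a)\} < 1$ (its summand is dominated by the total, which is $<1$ at that moment); one further play adds $1/\max\{1,N_{t_k}(s,a)\}$, and the pair is not played again in the episode, whence $N_k(s,a) \le \max\{1, N_{t_k}(s,a)\}$. This gives at most \emph{doubling} per episode, $r \le 2$, for which $(r-1) \le \log_2 r$ holds with constant exactly $1$; the same argument shows a first-visit episode contributes $N_k(s,a)=1$. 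Each pair's sum then telescopes to at most $1 + \log_2 N(s,a)$, and your Jensen step yields $m \le SA\log_2\paren*{\frac{2T}{SA}} \le SA\log_2\paren*{\frac{8T}{SA}}$, in fact slightly sharper than the paper's own chain $m \le SA\log_2\paren*{\frac{T}{SA}+1} + 2SA$. So the architecture of your argument is fine and distinct from the paper's; the step that fails as written is precisely the one you flagged as ``bookkeeping,'' and it needs the per-pair (not aggregate) overshoot bound to land on the stated constant.
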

Proof of Theorem~\ref{thm:episodes} relies on the observation that after $SA$ \episodes{} the expected number of times any state has been doubled is $SA$.

\paragraph{Bounding term 2 and 3 of \Cref{ucrlv:lemma:regret_decomposition_main}:} The bound for the second and third terms (respectively  \Cref{lemma:delta_tilde} and \Cref{lemma:delta_p} in Appendix) requires further novel results to prove. Bounding the second term requires: convergence of Algorithm~\ref{algo:extended_vi}, Theorem~\ref{thm:episodes}, definition of communicating MDP and Lemma~\ref{lemma_bound_on_visits}.
Bounding the third term requires Bernstein based martingales concentration inequalities~\citep{cesa2008improved}, Theorem~\ref{thm:episodes}, definition of communicating MDP and Lemma~\ref{lemma_bound_on_visits}.

Lemma~\ref{lemma_bound_on_visits} is a key technical result which together with Lemma~\ref{lemma_probs_vs_values_sum} allows us to remove extra $\sqrt{D}$ and $\sqrt{S}$ terms compared to \UCRL{}.
 
\subparagraph{\emph{Removing $\bm{\sqrt{D}}$.}} We prove Lemma~\ref{lemma_probs_vs_values_sum} to remove a factor of $\sqrt{D}$ compared to \UCRL{}. 
Lemma \ref{lemma_probs_vs_values_sum} bounds the correlation of two quantities: the expected number of transitions from a set of states to another by playing an optimistic policy for $y$ steps and the difference of values of these two states. If $y=D$, A trivial bound for the left side would be $D^2$.

 Our analysis naturally have the transitions due to our variance based error and also the difference of values. So, to be able to use Lemma ~\ref{lemma_probs_vs_values_sum}, it just remains to relate the number of visits $N_k(s)$ to a state $s$ in $k$-th episode, to the number of visits $\tilde{u}^c_y$ to the state $s$ by playing the optimistic policy in the optimistic MDP. This is achieved by Lemma  \ref{lemma_bound_on_visits}.
\begin{lemma}\label{lemma_probs_vs_values_sum}
Let $\mathcal{S}_0$ and $\mathcal{S}_1$ any two non empty subset of states. Let $s \in \mathcal{S}_0$. We have:	
	\[ \tilde{u}^c_y(\mathcal{S}_0 | s) \cdot \abs*{ \min_{s' \in \mathcal{S}_1}u_i(s')-u_i(s)} \min_{s' \in \mathcal{S}_0} \tilde{p}(\mathcal{S}_1 | s') \leq y   \]
	
	where $\tilde{u}^c_y(\mathcal{S}_0 | s)$ represents the total expected number of time the optimistic policy $\tilde{\pi}_k$ visits the states $s' \in \mathcal{S}_0$ when starting from state $s$ and playing for $y$ steps in the optimistic MDP $\tilde{M}_k$. $y = \min\{x, D\}$ with $x$ being the number of \rounds{} you need to play, when starting from $s$, to visit any state in $\mathcal{S}_0$ for $\frac{1}{\min_{s' \in \mathcal{S}_0} \tilde{p}(\mathcal{S}_1 | s')}$ times in expectation.
\end{lemma}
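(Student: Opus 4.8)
The plan is to reduce the two-sided estimate to a product of two one-sided bounds. Write $q \defn \min_{s'\in\mathcal{S}_0}\tilde{p}(\mathcal{S}_1|s')$, $\Delta \defn \abs*{\min_{s'\in\mathcal{S}_1}u_i(s')-u_i(s)}$ and $A \defn \tilde{u}^c_y(\mathcal{S}_0|s)$, so that the claim reads $A\,\Delta\,q\le y$. It suffices to establish the two inequalities $Aq\le 1$ and $\Delta\le y$, because then $A\,\Delta\, q=(Aq)\,\Delta\le\Delta\le y$, using $\Delta\ge 0$.

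First I would prove $Aq\le 1$. The cumulative visit count $\tilde{u}^c_y(\mathcal{S}_0|s)=\sum_{t=0}^{y-1}\Prob(X_t\in\mathcal{S}_0)$ is nondecreasing in the horizon, and by construction $x$ satisfies $\tilde{u}^c_x(\mathcal{S}_0|s)=1/q$. Since $y=\min\{x,D\}\le x$, monotonicity gives $A\le 1/q$, i.e. $Aq\le 1$. Next, the bound $\Delta\le D$ is the classical span estimate: as the plausible set $\mathcal{M}_k$ contains the true diameter-$D$ MDP, the optimistic MDP $\tilde{M}_k$ can be traversed at least as quickly and so has diameter at most $D$; hence $\max_s u_i(s)-\min_s u_i(s)\le D$~\citep{jaksch2010near}, whence $\Delta\le D$.

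The remaining and central estimate is $\Delta\le x$, which together with $\Delta\le D$ yields $\Delta\le\min\{x,D\}=y$. Let $(X_t)_{t\ge 0}$ be the trajectory produced by $\tilde{\pi}_k$ in $\tilde{M}_k$ from $X_0=s$, and let $\theta$ be its first entry time into $\mathcal{S}_1$ (finite in expectation, since $\tilde{M}_k$ is communicating). For $i$ large enough that value iteration has stabilised, the greedy recursion gives $\E[u_{i-t-1}(X_{t+1})\mid X_t]=u_{i-t}(X_t)-\upperb{r}(X_t,\tilde{\pi}_k(X_t))$, so $M_t\defn u_{i-t}(X_t)+\sum_{\tau<t}\upperb{r}(X_\tau,\tilde{\pi}_k(X_\tau))$ is a martingale. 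Optional stopping at $\theta$ together with the increment bound $0\le u_{j+1}-u_j\le 1$ gives
\[ u_i(s)=\E[M_\theta]\ge\E[u_{i-\theta}(X_\theta)]\ge\E[u_i(X_\theta)]-\E[\theta]\ge\min_{s'\in\mathcal{S}_1}u_i(s')-\E[\theta]. \]
Hence $\min_{s'\in\mathcal{S}_1}u_i(s')-u_i(s)\le\E[\theta]$. Since every visit to $\mathcal{S}_0$ moves the chain into $\mathcal{S}_1$ on the next step with probability at least $q$, the expected number of $\mathcal{S}_0$-visits before absorption in $\mathcal{S}_1$ is at most $1/q$, and comparing with the definition of $x$ gives $\E[\theta]\le x$, so $\Delta\le x$.

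The hard part is exactly this last comparison $\E[\theta]\le x$: both $\theta$ and the time to accumulate $1/q$ visits to $\mathcal{S}_0$ count excursions outside $\mathcal{S}_0$, so they are not comparable pathwise and the inequality must be argued in expectation through a renewal / optional-stopping estimate on the geometric killing rate $q$. A related subtlety is the orientation of the gap: the martingale controls $\min_{s'\in\mathcal{S}_1}u_i(s')-u_i(s)$ only when it is nonnegative, and the reverse case $u_i(s)>\min_{s'\in\mathcal{S}_1}u_i(s')$ requires the symmetric travel-time bound from the minimiser in $\mathcal{S}_1$ back to $s$, which the same martingale run along the fastest returning policy supplies. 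The stabilisation of the optimistic transitions at convergence (needed so that $M_t$ is genuinely a martingale for the fixed kernel of $\tilde{M}_k$) is a further technical point to check, whereas the monotonicity step and the span bound are routine.
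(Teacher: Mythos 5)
Your proposal is, modulo presentation, the same argument as the paper's. The paper's Case 2.1/2.2 split is exactly your factorization: the identity $\tilde{u}^c_x(\mathcal{S}_0|s)\cdot q = 1$ from the definition of $x$ (your $Aq\le 1$ after monotonicity in the horizon), the span bound $\Delta\le D$ (its Lemma~\ref{lemma:opt_diameter}, used when $x> D$), and the travel-time bound $\Delta\le x$ (its inequality $u_i(s)\ge\min_{s'\in\mathcal{S}_1}u_i(s')-\ell$, used when $x\le D$). Your optional-stopping construction is a more careful rendering of the paper's one-line potential argument (``for the first $\ell$ steps, we have lost at most $\ell$ rewards''), and your remark about the reverse orientation corresponds to the paper's Part~2, which handles $u_i(s)>\min_{s'\in\mathcal{S}_1}u_i(s')$ via the symmetric inequality $u_i(s)\le\max_{s'\in\mathcal{S}_1}u_i(s')+\ell$; note the paper there concludes only the weaker bound with $D$ and $\tilde{u}^c_D$, not with $y$.

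The one step you leave open, $\E[\theta]\le x$, is precisely where the paper is thinnest: it computes $\E z=\sum_{s'\in\mathcal{S}_0}\tilde{u}^c_x(s'|s)\,\tilde{p}(\mathcal{S}_1|s')\ge 1$, i.e.\ the expected number of entries into $\mathcal{S}_1$ within $x$ rounds is at least one, and then passes directly to $u_i(s)\ge\min_{s'\in\mathcal{S}_1}u_i(s')-x$ without the renewal argument you correctly identify as necessary (an expected entry count of at least one within $x$ steps does not by itself bound the expected hitting time by $x$, since visits to $\mathcal{S}_0$ occurring after $\theta$ also feed $\tilde{u}^c_x$). So your attempt reproduces the paper's proof including its weakest link --- you flag honestly what the paper simply asserts --- but as a standalone proof it is incomplete at exactly that point. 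In the negative-gap case you would additionally find that travel time only controls $u_i(s)-\max_{s'\in\mathcal{S}_1}u_i(s')$ rather than $u_i(s)-\min_{s'\in\mathcal{S}_1}u_i(s')$ (the two differ by the spread of $u_i$ over $\mathcal{S}_1$), which is why the paper falls back to the $D$ bound there; your sentence claiming the ``same martingale'' supplies the symmetric case glosses over this and would need the $\Delta\le D$ branch instead.
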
	

\begin{lemma}
\label{lemma_bound_on_visits}
Let $\mathcal{S}_0$ any subset of states, $k$ any episode in which the true MDP is inside the plausible set $\mathcal{M}_k$ such that $N_k \geq 32 \cdot y \cdot \max\{C_1^{\delta_p}, C_2^{\delta_p}\}$ for any $y$ . We have with probability at least $1-S\delta_p$:

\[N_k(\mathcal{S}_0) \leq  12 \frac{N_k}{y} \max_{s \in \mathcal{S}}\tilde{u}^c_y(\mathcal{S}_0|s)\]
with $N_k$ the length of episode $k$ and $C_1^{\delta_p}, C_2^{\delta_p}$ some constants dependent on our confidence interval.
\end{lemma}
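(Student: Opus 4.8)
The plan is to compare the realised visit count $N_k(\mathcal{S}_0)$, generated by running the optimistic policy $\tilde\pi_k$ in the \emph{true} MDP, to its expectation, and then to transfer that expectation from the true MDP to the optimistic MDP $\tilde M_k$ using the pointwise plausibility bounds \eqref{eq:plausible_transitions}. First I would partition the $N_k$ rounds of episode $k$ into $\lceil N_k/y\rceil$ consecutive blocks of length at most $y$, recording for each block its starting state $s^{(j)}$. Conditioned on $s^{(j)}$, the expected number of visits to $\mathcal{S}_0$ contributed by block $j$ while playing $\tilde\pi_k$ in the true MDP is exactly $u^c_y(\mathcal{S}_0\mid s^{(j)})\le \max_{s}u^c_y(\mathcal{S}_0\mid s)$, where $u^c_y$ denotes the true-MDP analogue of $\tilde u^c_y$. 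Summing over blocks bounds the total conditional expectation by $(N_k/y+1)\max_s u^c_y(\mathcal{S}_0\mid s)$.

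Next I would convert this expectation bound into a high-probability bound on the realised count. Writing $N_k(\mathcal{S}_0)=\sum_t \1[s_t\in\mathcal{S}_0]$ and subtracting the one-step conditional means yields a martingale difference sequence whose predictable variance is itself controlled by the conditional visit probabilities. A Bernstein-type martingale concentration inequality \citep{cesa2008improved} then gives $N_k(\mathcal{S}_0)\le c\cdot(\text{expected visits})+(\text{lower-order deviation})$ with probability at least $1-\delta_p$ per state, and a union bound over the at most $S$ relevant constraints produces the stated $1-S\delta_p$. The hypothesis $N_k\ge 32\,y\max\{C_1^{\delta_p},C_2^{\delta_p}\}$ is precisely what forces the block count $N_k/y$ to be large enough that the additive deviation term is absorbed into a constant multiple of the main term.

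It then remains to replace the true-MDP expectation $u^c_y$ by the optimistic one $\tilde u^c_y$. Since both $p$ and $\tilde p$ lie in the plausible set, \eqref{eq:plausible_transitions} gives $\lvert p(\mathcal{S}_c\mid s,a)-\tilde p(\mathcal{S}_c\mid s,a)\rvert\le 2c_{\samples{p}}(\mathcal{S}_c;s,a,\delta^k_p,t_k)$ for every subset $\mathcal{S}_c$. I would propagate this per-step discrepancy through the $y$-step occupancy recursion, unrolling $q_{t+1}-\tilde q_{t+1}=(q_t-\tilde q_t)P+\tilde q_t(P-\tilde P)$ and bounding the accumulated error against the optimistic occupancy itself. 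The crucial point is that the empirical Bernstein width \eqref{confidence_bound} scales like $\sqrt{\bar p(\mathcal{S}_c)\ln(1/\delta^k_p)/N_{t_k}}$ rather than a flat $1/\sqrt{N_{t_k}}$, so the per-step error is proportional to the transition mass actually being moved; the sample-size condition then keeps the accumulated error a constant fraction of $\tilde u^c_y$, giving $u^c_y(\mathcal{S}_0\mid s)\le C\,\tilde u^c_y(\mathcal{S}_0\mid s)$ for a constant $C$. Multiplying the constants from the three steps yields the factor $12$.

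The main obstacle is this last transfer step: a naive bound on the accumulated transition error over $y$ steps grows with $y$ and can dwarf $\tilde u^c_y$. The argument only closes because the variance-modulated bound makes the per-step error proportional to the mass being transported (and because the starting-state term $\1[s\in\mathcal{S}_0]$ supplies a multiplicative floor), so the error telescopes into a constant fraction of the occupancy rather than an additive, $y$-dependent term. Arranging the two constants $C_1^{\delta_p},C_2^{\delta_p}$ — one governing the variance term and one the additive term of \eqref{confidence_bound} — to interact correctly with the block count $N_k/y$ is the delicate bookkeeping that drives the whole estimate.
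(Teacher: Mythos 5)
Your approach is genuinely different from the paper's: the paper never compares expected occupancies of the true and optimistic MDPs; instead it writes a flow-conservation recursion for the \emph{within-episode empirical counts} in terms of the empirical transition frequencies $N_k(s\,|\,s')/N_k(s')$, writes the analogous recursion for $\tilde u^c_y$ in terms of $\tilde p$, and compares the two recursions via per-state concentration events (whence the $1-S\delta_p$), a case analysis on whether $N_k(s)$ is small relative to $\tilde u^c_y(s;s)$, and an induction on the number of states. Your route has a gap that I do not think is repairable in the form you state it. The transfer step — the claim $u^c_y(\mathcal{S}_0\mid s)\le C\,\tilde u^c_y(\mathcal{S}_0\mid s)$, a comparison between the two MDPs \emph{as MDPs} — is false under the lemma's hypotheses. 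The widths in \eqref{eq:plausible_transitions} have denominator $N_{t_k}(s,a)$, and nothing assumed in the lemma prevents some state $s^*$ from having $N_{t_k}(s^*,a)=0$, in which case the constraint is vacuous (width $\ge 1$) and the true and optimistic kernels at $s^*$ may disagree completely: say the true kernel sends $s^*$ into $\mathcal{S}_0$ with probability one while the optimistic kernel sends it away for good. Then $u^c_y(\mathcal{S}_0\mid s^*)$ is of order $y$ while $\max_s\tilde u^c_y(\mathcal{S}_0\mid s)$ is $O(1)$, so no constant $C$ works, and your error-propagation argument cannot save this because at $s^*$ the ``per-step error proportional to the mass moved'' is simply of order one. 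The lemma is nevertheless true, but only because the \emph{realized trajectory} cannot linger at such poorly estimated states: the extended doubling trick ends the episode as soon as $\sum_{s,a}N_k(s,a)/\max\{1,N_{t_k}(s,a)\}$ reaches $1$, so $N_k(s,a)$ never exceeds $\max\{1,N_{t_k}(s,a)\}$ (a never-visited state is visited at most once per episode). Any correct proof must use this fact; the paper does so implicitly, since its concentration inequalities carry denominators $N_k(s)$ (legitimate only because the doubling trick keeps $N_k(s)\lesssim N_{t_k}(s)$) and its low-count case is disposed of directly by the hypothesis $N_k \geq 32\, y\max\{C_1^{\delta_p},C_2^{\delta_p}\}$. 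Your proposal never invokes the doubling trick, which is why the counterexample above slips through.

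A second, smaller gap is in your concentration step. A block of length $y$ can contribute up to $y$ visits, so Freedman/Bernstein applied to the block sums carries a deviation term of order $y\ln(1/\delta_p)$. The hypothesis on $N_k$ is linear in $y$ and only yields $N_k/y\gtrsim\ln(1/\delta_p)$; hence when $y\gg\sqrt{N_k}$ and $\max_s u^c_y(\mathcal{S}_0\mid s)=O(1)$ the deviation $y\ln(1/\delta_p)$ dominates the main term $(N_k/y)\max_s u^c_y(\mathcal{S}_0\mid s)$, so your claim that the hypothesis ``is precisely what'' absorbs the deviation is incorrect as stated. This part is fixable: by the Markov property each visit to $\mathcal{S}_0$ spawns at most $\max_s u^c_y(\mathcal{S}_0\mid s)$ expected further visits within its block, so the block counts are sub-exponential at scale $\max_s u^c_y(\mathcal{S}_0\mid s)$ rather than merely bounded by $y$, which restores a deviation of order $\max_s u^c_y(\mathcal{S}_0\mid s)\ln(1/\delta_p)$ that the hypothesis does absorb. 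But that is an extra idea absent from your write-up, and even with it the transfer-step gap above remains fatal.
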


\subparagraph{\emph{Removing $\bm{\sqrt{S}}$.}} Using Lemma~\ref{lemma_probs_vs_values_sum} and \ref{lemma_bound_on_visits} as explained above would lead to an additional $\sqrt{S}$ factor. 

In order to avoid it, we partition the state space $\mathcal{S}$ with respect to a fixed state $s$ based on the values $u_i(s' \in \mathcal{S}) - u_i(s)$ into intervals $\mathcal{I}^u_{+} = \setof{]\frac{D}{2}, D], ]\frac{D}{4}, \frac{D}{2}] , ]\frac{D}{8}, \frac{D}{4}], \ldots }$ and $\mathcal{I}^u_{-} = \setof{[-D, -\frac{D}{2}[, [-\frac{D}{2}, -\frac{D}{4}[ , [-\frac{D}{4}, -\frac{D}{8}[, \ldots }$. These intervals are constructed in a way that the ratio between upper and lower endpoint is 2. These intervals and the corresponding partition of states lead to a geometric sum bounded by $\log_2 D$ rather than $O(S)$ (See Derivations from \Cref{eq:actual_use_n-k-s} in Appendix).


These results together provide us the desired $\tilde{O}(\sqrt{DSAT})$ bound on regret in Theorem~\ref{thm:final}.

\section{Experimental Analysis} \label{ucrlv:sec:experiments}
\begin{figure}[t!]
	\centering
	\begin{subfigure}{0.5\textwidth}
		\includegraphics[width=\textwidth]{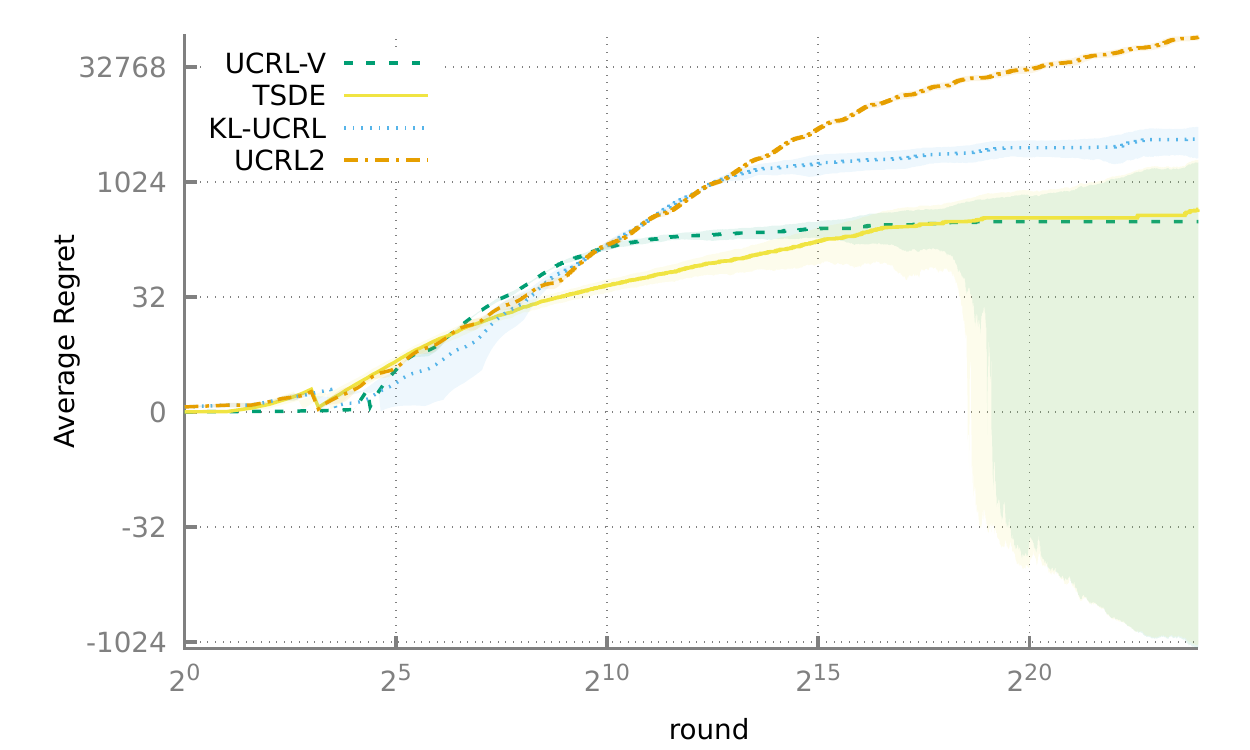}
		\caption{RiverSwim}
		\label{ucrlv:fig:RiverSwim}
	\end{subfigure}%
	\begin{subfigure}{0.5\textwidth}
			\includegraphics[width=\textwidth]{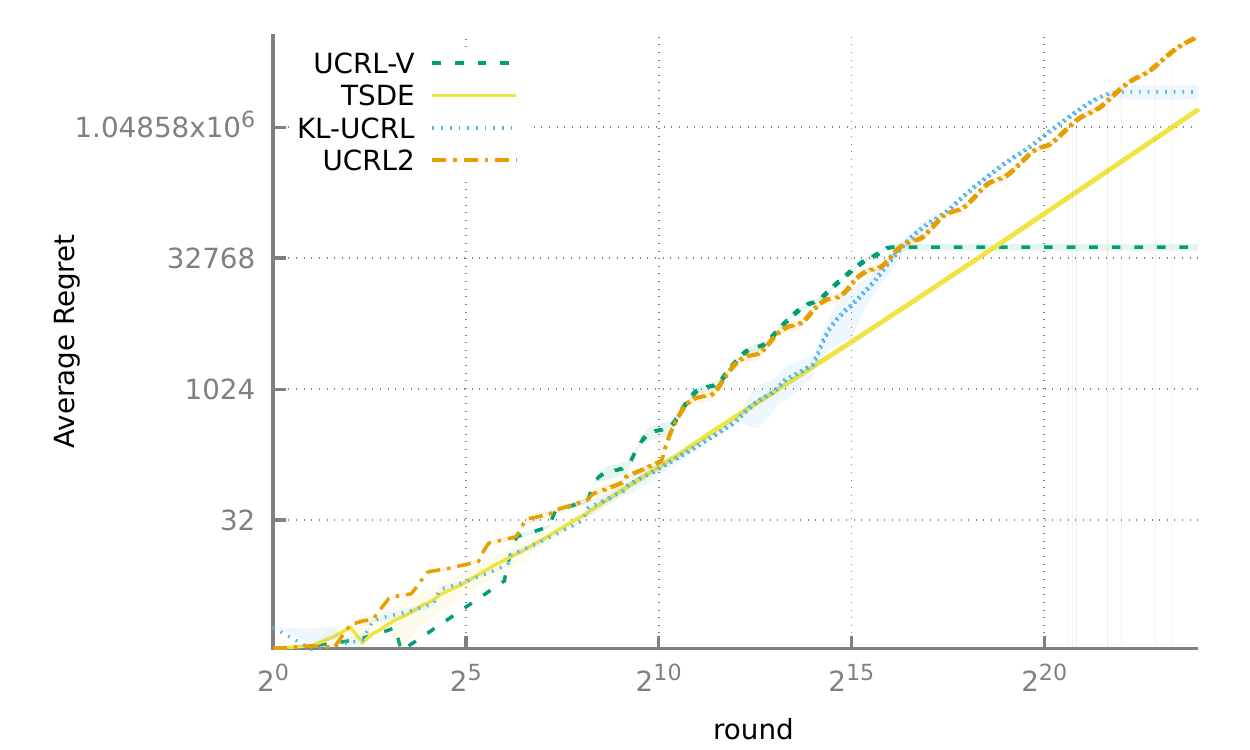}
			\caption{GameOfSkill-v1}
			\label{ucrlv:fig:GameOfSkill-t}
		\end{subfigure}\\%
	\begin{subfigure}{0.5\textwidth}
			\includegraphics[width=\textwidth]{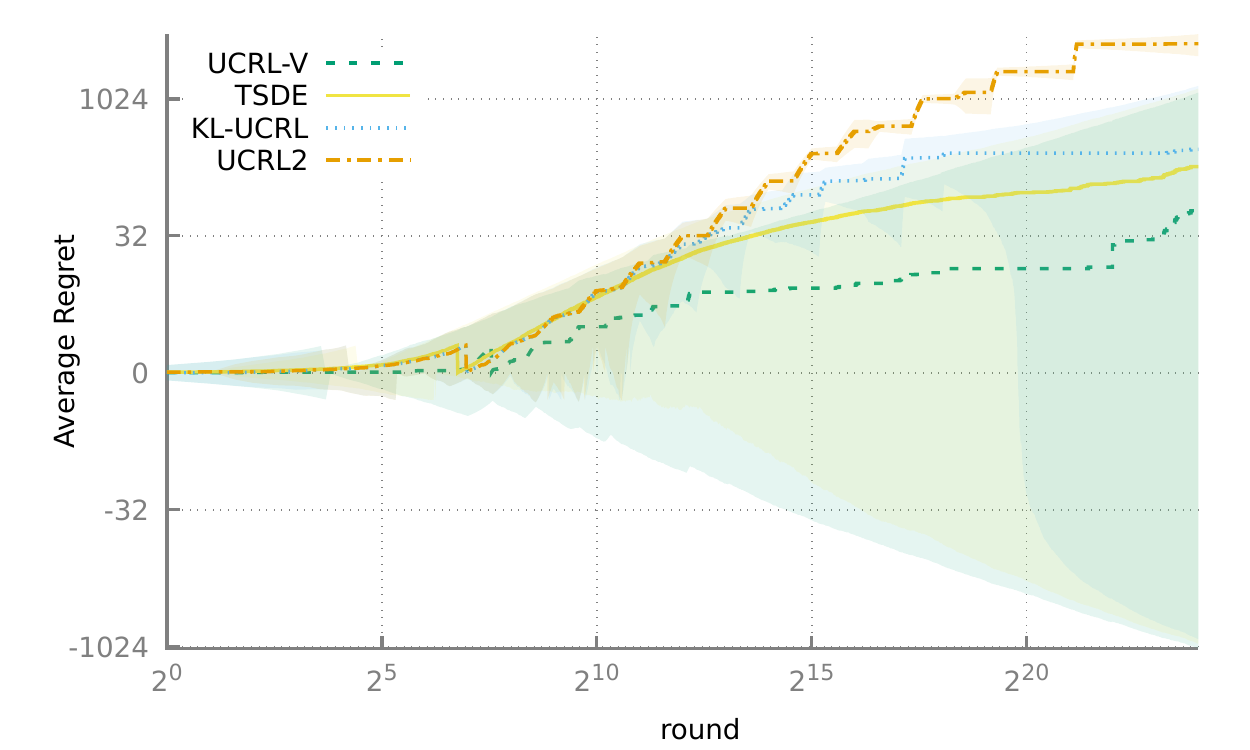}
			\caption{Bandits}
			\label{ucrlv:fig:bandit}
		\end{subfigure}%
	\begin{subfigure}{0.5\textwidth}
		\includegraphics[width=\textwidth]{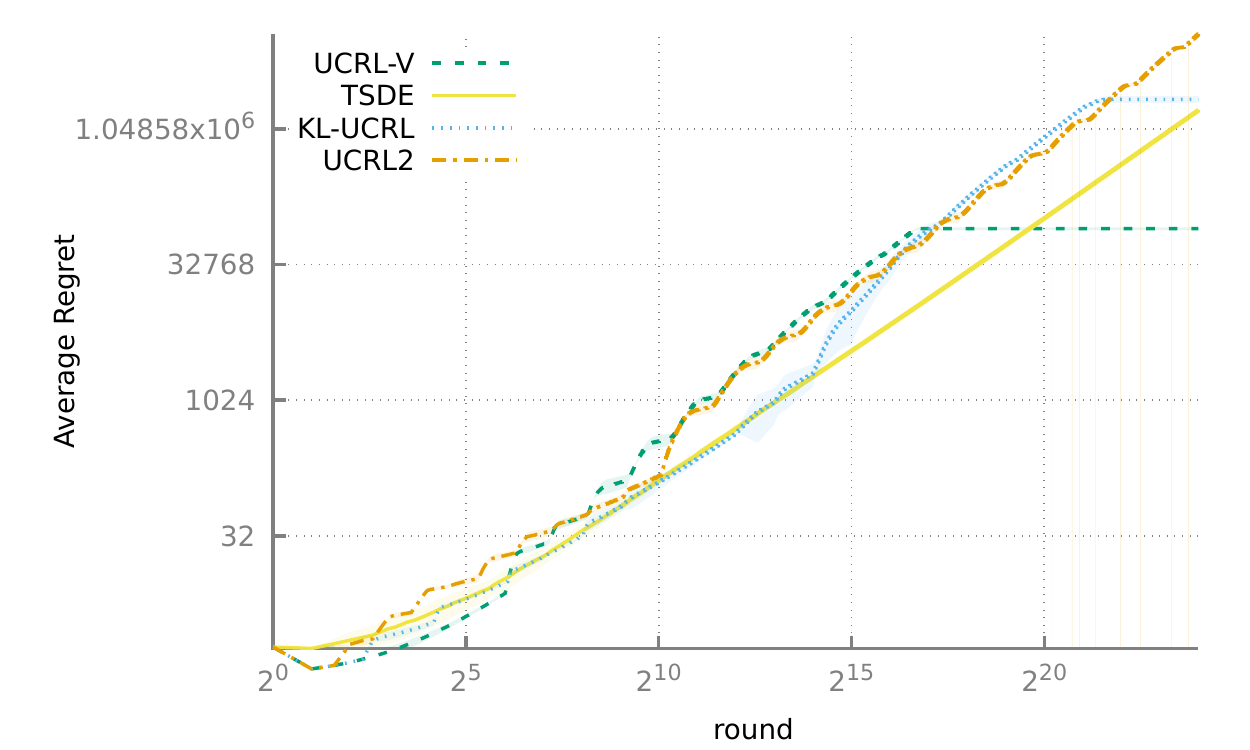}
		\caption{GameOfSkill-v2}
		\label{ucrlv:fig:GameOfSkill-f}
	\end{subfigure}
	\caption{Time evolution of average regret for \UCRLV{}, \TSDE{}, \KLUCRL{}, and \UCRL{}.} 
	\label{ucrlv:fig:regrets} 
\end{figure}
We empirically evaluate the performance of \UCRLV{} in comparison with that of \KLUCRL{} \citep{filippi2010optimism}, \UCRL{} \citep{jaksch2010near}, and \TSDE{} \citep{ouyang2017learning} that is a variant of posterior sampling for reinforcement learning suited for infinite horizon problems. Section \ref{sec:environments} describes the environments used for the experiments. 
\Cref{ucrlv:fig:regrets} illustrates the evolution of the average regret along with standard deviation. Figure~\ref{ucrlv:fig:regrets} is a log-log plot where the ticks represent the actual values.

\textbf{Experimental Setup.} The confidence hyper-parameter $\delta$ of \UCRLV{}, \KLUCRL{}, and \UCRL{} is set to $\ConfidenceDelta{}$. 
\TSDE{} is initialized with independent $\BetaDis(\frac{1}{2}, \frac{1}{2})$ priors for each reward $r(s,a)$ and a Dirichlet 
prior with parameters $(\alpha_1, \ldots \alpha_S)$ for the transition functions $p(.|s,a)$, where $\alpha_i = \frac{1}{S}$. We plot the average regret of each algorithm over $T = \HORIZONEXPERIMENTS{}$ \rounds{} computed using \TRIALS{} independent trials. 

\textbf{Experimental Protocol.}
While comparing different algorithms, we take two measures to eliminate unintentional bias and variance introduced by experimental setup. Firstly, the true ID of each state and action is masked by randomly shuffling the sequence of states and actions. This is done independently for each trial so as to make sure that no algorithm can coincidentally benefit from the numbering of states and actions.
Secondly, similar to other authors~\citep{mcgovern1998macro}, we eliminate unintentional variance in our results by using the same pseudo-random seeds when generating transitions and rewards for each trial. Specifically, for each trial, every state-action pair's pseudo-random number generator is initialised with the same initial seed. This setup ensures that if two algorithms take the same actions in the same trial, they will generate the same transitions and thus, reduces variance.

\textbf{Implementation Notes on \UCRLV{}.}
We maintained the empirical means and variance of the rewards and transitions efficiently using \citeauthor{welford1962note}'s online algorithm. Also, the empirical mean transition $\bar{p}$ to any subset of next state is the addition of its constituent and the corresponding variance is $\bar{p} \cdot (1-\bar{p})$. As a result, bookkeeping $SA$ values is enough for our algorithm. Additionally, the time complexity for $T$ runs of UCRL-V is $O(TA+(S^3A^2\ln T) \cdot N_{\textrm{MEVI}})$, where $N_{\texttt{MEVI}}$ is the number of operations required for convergence of Algorithm~\ref{algo:extended_vi} (ref. Section 3.1.5 in~\citep{strehl2008analysis}; Section 4.1 in~\citep{efroni2019tight}). This matches the time complexity of \UCRL{} in the worst-case.

\subsection{Description of Environments}
\label{sec:environments}
\textbf{RiverSwim.}
RiverSwim consists of six states arranged in a chain (ref. Figure 1 in \citet{osband2013more}). 
The agent begins
at the far left state and at every \round{}, has the choice to swim left or right. Swimming left
(with the current) is always successful, but swimming right (against the current) often fails.
The agent receives a small reward for reaching the leftmost state, but the optimal policy is
to attempt to swim right and receive a much larger reward.
The transitions are the same as in  \citep{osband2013more}. To make the problem a little tougher, we increased the rewards of the leftmost state to $0.208$ and the reward of the rightmost state is set at $0.5$. This decreases the difference in the value of the optimal and sub-optimal policies so as to make it harder for an agent to distinguish between them. Figure~\ref{ucrlv:fig:RiverSwim} shows the results.

\textbf{Bandits.}
This is a standard stochastic bandit problem with two arms. One arm draws rewards from a Beta distribution $\BetaDis(0.8 + \sqrt[4]{\frac{1}{T}}, 0.2 - \sqrt[4]{\frac{1}{T}} )$ while the other always gives $0.8$. Figure \ref{ucrlv:fig:bandit} show the results in this environment.

%
%


\textbf{\GAMEOFSKILLEASY{}.}
This environment is inspired by real-world scenarios in which a) one needs to take a succession of decisions before receiving any explicit feedback b) taking a wrong decision can undo part of the right decisions taken so far.

This environment consists of 20 states in a chain with two actions available at each state (\emph{left} and \emph{right}). Taking the \emph{left} action always transits to the correct state. However, when going to the right from a state $s$ it only succeeds with probability $\frac{1}{25}$ and with probability $1-\frac{1}{25}$, one stays in $s$. The rewards at the leftmost state for the action \emph{left} is $0.8$ whereas the reward at the rightmost state for the action \emph{right} is $0.9$. All other rewards are $0$.

\textbf{\GAMEOFSKILLHARD{}.}
This is essentially the same as \GAMEOFSKILLEASY{} with the difference that going left now send you back to the leftmost state and not just the previous state.



\subsection{Results and Discussion}
Figure~\ref{ucrlv:fig:bandit} shows an important result since to solve a larger MDP one faces at least $SA$ bandits problems. Figure~\ref{ucrlv:fig:bandit} illustrates the main reason why \UCRLV{} enjoys a better regret. It is able to efficiently exploit the non-hardness of the bandit problem tested. In contrast, \UCRL{} does not exploit the structure of the problem at hand and instead obtain a problem independent performance. Both \KLUCRL{} and \TSDE{} are also able to exploit the problem structure but are out-beaten by \UCRLV{}.

The results on the 6-states RiverSwim MDP in Figure~\ref{ucrlv:fig:RiverSwim} illustrates the same story as in the bandit problem for \UCRLV{} compared to \UCRL{} and \KLUCRL{}. However, \TSDE{} outperforms \UCRLV{} and much of gain comes from the first $2^{10}$ \rounds{}. It seems that \TSDE{} quickly moves to the seemingly good region of the state space without properly checking the apparent bad region. This can lead to catastrophic behavior as illustrated by the results on the more challenging \GAMEOFSKILL{} environments.

In both \GAMEOFSKILL{} environments (Figure~\ref{ucrlv:fig:GameOfSkill-f} and ~\ref{ucrlv:fig:GameOfSkill-t}), \UCRLV{} significantly outperforms all other algorithms. Indeed, \UCRLV{} spends the first few \rounds{} trying to learn the games and is able to do so in a reasonable time. Comparatively, \TSDE{} never tries to learn the game. Instead, \TSDE{} quickly decides to play the region of the state space that is apparently the best. However, this region turns out to be the worst region and TSDE{} never recovers. Both \KLUCRL{} and \UCRL{} attempts at learning the game. \UCRL{} didn't complete its learning before the end of the game. While \KLUCRL{} takes a much longer time to learn.
\begin{figure}
	\centering
	\includegraphics[scale=0.5]{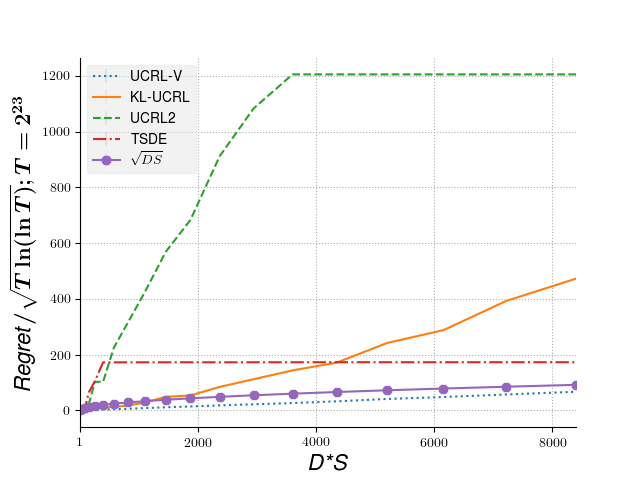}
	\caption{Growth of average regret for \UCRLV{}, \TSDE{}, \KLUCRL{}, and \UCRL{} with respect to $DS$.}\label{ucrlv:fig:DS}
\end{figure}
\subsection{Validating the Regret Bound in terms of $D$ and $S$}
In order to empirically validate the regret upper bound, we run \UCRLV{}, \TSDE{}, \KLUCRL{}, and \UCRL{} on GameOfSkill-v2 for different values of $D \times S$ and a horizon $T = 2^{23}$. 
We tune the parameters of GameOfSkill-v2 such that if $DS = x$, then $S \approx x^{1/3}; D \approx x^{2/3}$. 
We run 50 trials for each pair of $D$ and $S$.
Figure~\ref{ucrlv:fig:DS} instantiates the corresponding experimental results. We plot the average cumulative regret incurred by the algorithms divided by $\sqrt{T\ln T}$ on the $y$-axis and different value of $DS$ on the $x$-axis. 

Figure~\ref{ucrlv:fig:DS} shows that the scaling of UCRL-V with respect to $DS$ is better than the competing algorithms. The line plotting $\sqrt{DS}$ in Figure~\ref{ucrlv:fig:DS} validates that the theoretical upper bound on expected regret for \UCRLV{} is $O(\sqrt{DS})$ whereas the competing algorithms scale worse than $O(\sqrt{DS})$.

\section{Conclusion}
\label{ucrlv:sec:conclusion}
Leveraging the empirical variance of rewards and transition functions to compute the upper confidence bound provides more control over the optimism used in \UCRLV{} algorithm. This trick provides us a narrower set of statistically plausible set of MDPs. Along with the modified extended value iteration and an \emph{extended doubling trick} using the idea of average number of states doubled, provides \UCRLV{} a near-optimal regret guarantee based on the empirical Bernstein inequalities~\citep{empirical_bernstein_bounds}. As \UCRLV{} achieves the $\TilO(\sqrt{DSAT})$ bound on worst case regret, it closes a gap in the literature following the lower bound proof of~\citep{jaksch2010near}.
Experimental analysis over four different environments illustrates that \UCRLV{} is strictly better than the state-of-the-art algorithms.

Due to the relation between KL-divergence and variance, we would like to explore if a variant of \KLUCRL{} can guarantee a near-optimal regret. Also, it will be interesting to explore the possibility of guaranteeing a near-optimal regret bound for posterior sampling.
Finally, it would be interesting to explore how one can re-use the idea of \UCRLV{} for non-tabular settings such as with linear function approximation or deep learning.



\section{Notations}
\begin{table}[H]
\centering
		\renewcommand{\arraystretch}{1.3}
		\begin{tabular}{r c p{10cm} }
			\toprule
			$\mathcal{S}$ & $\defn$ & State space\\
			$\mathcal{A}$ & $\defn$ & Action space\\
			$\mathcal{S}_c$ & $\defn$ & A subset of states i.e. $\mathcal{S}_c \subseteq \mathcal{S}$\\
			$T$ & $\defn$ & Length of time horizon $T$\\
			$\regret(T)$ & $\defn$ & Regret for a given horizon $T$\\
			$\mdp$ & $\defn$ & Original MDP\\
			$\optmdp$ & $\defn$ & Optimistic MDP\\
			$D$ & $\defn$ & Diameter of original MDP $\mdp$\\
			$m$ & $\defn$ & Total number of episodes\\
			$\delta_p^m, \delta_p$ & $\defn$ & Final confidence values for the transitions at episode $m$\\
			$\delta_r^m, \delta_r$ & $\defn$ & Final confidence values for the rewards at episode $m$\\ 
			${N}_{k}$ & $\defn$ & Length of episode $k$\\ 
			$p(.|s,a)$ & $\defn$ & Transition kernel of original MDP given state $s$ and action $a$\\
			$\tilde{p}(.|s,a)$ & $\defn$ & Optimistic transition kernel given state $s$ and action $a$\\
			$\bar{p}(.|s,a)$ & $\defn$ & Empirical average of transition kernels given state $s$, action $a$\\  
			$r(s,a)$ & $\defn$ & Reward in original MDP given state $s$ and action $a$\\
			$\tilde{r}(s,a)$ & $\defn$ & Optimistic reward given state $s$ and action $a$\\
			$\bar{r}(s,a)$ & $\defn$ & Empirical average of rewards given state $s$, action $a$\\
			$N_{t_k}(s,a)$ & $\defn$ & Number of times $(s,a)$ is played up to \round{} $t_k$\\
			$N_k(s)$  & $\defn$ & Number of times a state $s$ is visited in episode $k$\\
			$u^k_i(s), u_i(s)$  & $\defn$ & Final value function obtained after $i$ iterations of Algorithm~\ref{algo:extended_vi} in episode $k$ for the set of all plausible MDPs. \\
			$\tilde{\Delta}_{k}^{\tilde{p}}(s)$& $\defn$ &$N_k(s) \cdot\sum_{s'} \paren*{\tilde{p}(s'|s)-p(s'|s)} \biggl(u_i(s') - u_i(s)\biggr)$\\ $\tilde{\Delta}_{k}^{p}(s)$& $\defn$& $N_k(s) \cdot \paren*{\sum_{s'}p(s'|s)u_i(s') -u_i(s)}$\\
			$\mathcal{I}^p$  & $\defn$ &$\{ ]\frac{1}{2}, 1]; ]\frac{1}{4}, \frac{1}{2}]; \ldots ]\frac{1}{D}, \frac{2}{D}], \ldots \}$\\
			$\mathcal{I}^u_{+} $  & $\defn$ &$\setof{]\frac{D}{2}, D], ]\frac{D}{4}, \frac{D}{2}] , ]\frac{D}{8}, \frac{D}{4}], \ldots }$\\
			$\mathcal{I}^u_{-} $  & $\defn$ &$\setof{[-D, -\frac{D}{2}[, [-\frac{D}{2}, -\frac{D}{4}[ , [-\frac{D}{4}, -\frac{D}{8}[, \ldots }$\\
			$\tilde{u}^c_y(\mathcal{S}_0 | s)$, $\tilde{u}^c_y(\mathcal{S}_0 ; s)$ & $\defn$ & Expected number of time the optimistic policy $\tilde{\pi}_k$ visits any states $s' \in \mathcal{S}_0$ when starting from state $s$ and playing for $y$ steps in the optimistic MDP $\tilde{M}_k$\\
			\bottomrule
		\end{tabular}
	\caption{Table of Notations}
	\label{tab:TableOfNotationForMyResearch}
\end{table}

\section{Proofs of Section~\ref{ucrlv:sec:theory} (Theoretical Analysis)}
\label{ucrlv:sec:proof}

\newcounter{ucrlvpara}
\newcommand\ucrlvproofstep[1]{\par\refstepcounter{ucrlvpara}\textbf{Step \theucrlvpara:\space#1}}
All the proof sketches assume bounded rewards $r \in [0,1]$.

\subsection{Proof of \UCRLV{}}
The proof of \UCRLV{} relies on a generic proof provided in Section \ref{sec:generic_proof} for any algorithm that uses the same structure as \UCRLV{} with a plausible set containing the true model with high probability whose error function is submodular and bounded in specific a form.

As a result, in this section we simply show that \UCRLV{}  satisfies the requirements in the generic proof of Section \ref{sec:generic_proof}. For that, we simply have to show that our plausible set contains the true rewards and transition for each $(s,a)$ with high probability then  express the maximum errors in a specific form and show the submodularity. We start with the rewards then move on to the transitions.

For the rewards, using  \Cref{theo:empirical_bernstein} and replacing the sample variance by $\frac{1}{4}$, we have for $N_{t_k}(s,a) \geq 2$ with probability at least $1-\delta^k_r$:

\begin{align}
	\upperb{r}(s,a) - \bar{r}(s,a) &= \sqrt{\frac{2\Var_{t_k}{(\samples{r}(s,a))} \ln \frac{2}{\delta^k_r}}{N_{t_k}(s,a)}}+ \frac{7}{3}\frac{\ln \frac{2}{\delta^k_r}}{{N_{t_k}(s,a)-1}}\notag\\
	&\leq \sqrt{\frac{\ln 2/\delta^k_r}{2N_{t_k}(s,a)}} + \frac{14\ln 2/\delta^k_r}{3N_{t_k}(s,a)} \label{eq:reward_bound}
\end{align}
We obtain the last inequality since $N_{t_k}(s,a) - 1 \geq 0.5 N_{t_k}(s,a)$ for $N_{t_k}(s,a) \geq 2$.
Similarly, using Theorem \ref{theo:empirical_bernstein} for the transitions of each state-action and replacing the sample variance by the true variance using Theorem \ref{theo:sample_variance_bound} and the union bound in Fact \ref{union_bound}, we have with probability at least $1- 2\delta^k_p$ (individually for each $(s,a)$ and subset of next states $\mathcal{S}_c \subseteq \mathcal{S}$):
\begin{align}
\upperb{p}(\mathcal{S}_c| s,a) - \bar{p}(\mathcal{S}_c| s,a) &= \sqrt{\frac{2\Var_{t_k}{(\samples{p}(\mathcal{S}_c | s,a))} \ln \frac{2}{\delta^k_p}}{N_{t_k}(s,a)}}+ \frac{7}{3}\frac{\ln \frac{2}{\delta^k_p}}{{N_{t_k}(s,a)-1}}\\
&= \sqrt{\frac{2\bar{p}(\mathcal{S}_c| s,a))(1-\bar{p}(\mathcal{S}_c | s,a))\ln \frac{2}{\delta^k_p}}{N_{t_k}(s,a)}}+ \frac{7}{3}\frac{\ln \frac{2}{\delta^k_p}}{{N_{t_k}(s,a)-1}}\\
&\leq \sqrt{\frac{2 (p(\mathcal{S}_c | s,a)(1-p(\mathcal{S}_c | s,a))\ln 2/\delta^k_p}{N_{t_k}(s,a)}} + \frac{14\ln 2/\delta^k_p}{3N_{t_k}(s,a)} \\
&\quad \quad +  \sqrt{\frac{4\ln 2/\delta^k_p\ln1/\delta^k_p}{N_{t_k}(s,a)(N_{t_k}(s,a)-1)} }\\
&\leq \sqrt{\frac{2 (p(\mathcal{S}_c | s,a)(1-p(\mathcal{S}_c | s,a))\ln 2/\delta^k_p}{N_{t_k}(s,a)}} + \frac{23\ln 2/\delta^k_p}{3N_{t_k}(s,a)} 
\label{eq:transition_bound}
\end{align}
Furthermore let's observe that the bound in \eqref{eq:reward_bound} and \eqref{eq:transition_bound} works for $N_{t_k}(s,a)  \leq 1$ since the second term is greater than $1$ when $N_{t_k}(s,a) \leq 1$. This means the bound works for any $N_{t_k}(s,a) \geq 0$.

As a result, the proof in Section \ref{sec:generic_proof} applies where

\begin{align}
C^{\delta_r^k}_1 &= \frac{\ln 2/\delta_r^k}{2}\\
C^{\delta_r^k}_2 &= \frac{14\ln 2/\delta_r^k}{3}\\
C^{\delta_p^k}_1 &= 2 \ln 2/\delta_p^m\\
C^{\delta_p^k}_2 &= \frac{23 \ln 2/\delta_p^k}{3}
\end{align}

\begin{lemma}
\label{lemma:assumption_satisfied}
The upper bound $c_{\samples{p}}: \mathcal{S}_c \to c(\samples{p}(\mathcal{S}_c| s,a), \delta_p^k)$ in RHS of \eqref{eq:plausible_transitions} is a submodular set function on the set of all states.
\end{lemma}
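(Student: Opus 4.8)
The plan is to reduce the claim to the classical fact that the composition of a concave scalar function with a modular (additive) set function is submodular. First I would observe that because the samples $\samples{p}(\mathcal{S}_c)$ are $\{0,1\}$-valued indicators, the sample variance appearing in \eqref{confidence_bound} collapses to a closed form: since $f_t^2 = f_t$ for binary $f_t$, we have $\Var_{t_k}(\samples{p}(\mathcal{S}_c), s, a, t_k) = \bar{p}(\mathcal{S}_c|s,a)\paren*{1 - \bar{p}(\mathcal{S}_c|s,a)}$. Hence, writing $q(\mathcal{S}_c) \defn \bar{p}(\mathcal{S}_c|s,a) = \sum_{s' \in \mathcal{S}_c}\bar{p}(s'|s,a)$, the confidence bound reads
\[
c_{\samples{p}}(\mathcal{S}_c) = \sqrt{\frac{2\ln\frac{2}{\delta_p^k}}{N_{t_k}(s,a)}}\cdot g\paren*{q(\mathcal{S}_c)} + \frac{7}{3}\frac{\ln\frac{2}{\delta_p^k}}{N_{t_k}(s,a)-1}, \qquad g(x) \defn \sqrt{x(1-x)}.
\]
The second term is independent of $\mathcal{S}_c$ and the leading coefficient is a nonnegative constant; since adding a constant and multiplying by a nonnegative constant both preserve submodularity, it suffices to show that $\mathcal{S}_c \mapsto g\paren*{q(\mathcal{S}_c)}$ is submodular.

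Next I would record two structural facts. The inner map $q$ is \emph{modular}: it is additive over disjoint unions and its singleton weights $q(\{s'\}) = \bar{p}(s'|s,a)$ are nonnegative and sum to at most $1$, so $q(\mathcal{S}_c) \in [0,1]$ for every $\mathcal{S}_c$. The outer map $g(x) = \sqrt{x(1-x)}$ is \emph{concave} on $[0,1]$, which I would confirm by computing $g''(x) < 0$ on $(0,1)$, with $g$ continuous at the endpoints.

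Then I would invoke the diminishing-returns characterization of submodularity. Fix $A \subseteq B$ and $e \notin B$, and set $a = q(A)$, $b = q(B)$, $w = q(\{e\}) \geq 0$. Modularity gives $q(A \cup \{e\}) = a + w$ and $q(B \cup \{e\}) = b + w$, while $a \leq b$ because the added singleton weights are nonnegative. Concavity of $g$ gives the decreasing-differences property: the map $y \mapsto g(y+w) - g(y)$ is non-increasing on $[0,1-w]$. Applying it at $y = a \leq b$ yields
\[
g\paren*{q(A \cup \{e\})} - g\paren*{q(A)} = g(a+w) - g(a) \geq g(b+w) - g(b) = g\paren*{q(B \cup \{e\})} - g\paren*{q(B)},
\]
which is exactly the submodularity of $g \circ q$, and therefore of $c_{\samples{p}}$.

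The routine parts are the variance computation and the sign of $g''$; the single point to handle with care is that every argument fed to $g$ stays in $[0,1]$ (so concavity genuinely applies) and that the inner function is modular with nonnegative increments --- this is precisely what forces $a \leq b$ and makes the decreasing-differences inequality point in the correct direction. I expect no serious obstacle beyond tracking these domain and monotonicity details.
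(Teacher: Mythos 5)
Your proposal is correct and follows essentially the same route as the paper: both observe that the sample variance of the binary transition indicators collapses to $\bar{p}(\mathcal{S}_c|s,a)\paren*{1-\bar{p}(\mathcal{S}_c|s,a)}$, so that $c_{\samples{p}}$ is a concave function $z \mapsto \sqrt{C_0\, z(1-z)} + C_1$ composed with the monotone modular map $\mathcal{S}_c \mapsto \bar{p}(\mathcal{S}_c|s,a)$, hence submodular. The only difference is that where the paper invokes Theorem~\ref{ucrlv:theorem_modular_concave_composition} as a black box for the concave-of-modular fact, you prove that composition step inline via the decreasing-differences argument, which is a harmless, slightly more self-contained substitution.
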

\begin{proof}
We perform the proof for any given state-action pair $s,a$. Thus, for brevity, we omit mentioning $s,a$ and $\delta_p^k$ while designating probabilities and the bound $c$ given $s,a$. Specifically, we write $c(\samples{p}(\mathcal{S}_c| s,a), \delta_p^k) = c(\mathcal{S}_c)$ for this proof.

We observe that for any subset of states $\mathcal{S}_c$, the upper bound $c(\mathcal{S}_c) = \sqrt{C_0 \bar{p}(\mathcal{S}_c) (1-\bar{p}(\mathcal{S}_c) )} + C_1$ with $C_0$ and $C_1$ being constants independent of $\mathcal{S}_c$.
As a result $c(\mathcal{S}_c) = f(\bar{p}(\mathcal{S}_c))$ with $f: z \to \sqrt{C_0 z(1-z)} + C_1$. Note that the function $f$ is concave. Also, $\mathcal{S}_c \to \bar{p}(\mathcal{S}_c)$ is monotone since for any $X \subseteq Y$, $\bar{p}(X) \leq \bar{p}(Y)$. Furthermore, $\mathcal{S}_c \to \bar{p}(\mathcal{S}_c)$ is modular since for any $X,Y,x : X \subseteq Y, x \in X$ we have $\bar{p}(Y)-\bar{p}(Y \setminus x) = \bar{p}(x) = \bar{p}(X)-\bar{p}(X \setminus x)$. As a result, $c_{\samples{p}}$ is the composition of a concave function with a monotonic modular function. Using \Cref{ucrlv:theorem_modular_concave_composition}, we can then conclude that $c_{\samples{p}}$ is submodular.
\end{proof}

\begin{corollary}[Submodularity of $\upperb{p} = \bar{p} + c_{\samples{p}}$]
The function $\mathcal{S}_c \to \upperb{p}(\mathcal{S}_c)$ is submodular on the set of all states. 
\end{corollary}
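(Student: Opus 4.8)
The plan is to deduce the corollary directly from \Cref{lemma:assumption_satisfied} together with the elementary closure of submodular functions under addition. Recall the symmetric characterization: a set function $g$ on subsets of $\mathcal{S}$ is submodular iff $g(X) + g(Y) \geq g(X \cup Y) + g(X \cap Y)$ for all $X, Y \subseteq \mathcal{S}$, and \emph{modular} iff this holds with equality; in particular every modular function is submodular.

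First I would note that $\mathcal{S}_c \to \bar{p}(\mathcal{S}_c)$ is modular --- indeed additive, $\bar{p}(X \cup Y) + \bar{p}(X \cap Y) = \bar{p}(X) + \bar{p}(Y)$ --- because $\bar{p}(\mathcal{S}_c) = \sum_{s' \in \mathcal{S}_c} \bar{p}(s')$ is a plain sum over the constituent states; this modularity was already verified inside the proof of \Cref{lemma:assumption_satisfied}. Since that lemma also establishes that $c_{\samples{p}}$ is submodular, adding the submodular inequality for $c_{\samples{p}}$ to the modular equality for $\bar{p}$ gives
\[
\upperb{p}(X) + \upperb{p}(Y) \geq \upperb{p}(X \cup Y) + \upperb{p}(X \cap Y),
\]
which is exactly the submodularity of $\upperb{p} = \bar{p} + c_{\samples{p}}$.

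There is essentially no obstacle here: the only content is that summing a submodular function with a modular one preserves submodularity, which follows immediately from the additivity of the defining inequalities. The single point worth stating cleanly is that $\bar{p}$ is genuinely modular --- so that it contributes an exact equality rather than an inequality that could point the wrong way --- which is precisely what was verified in the proof of the preceding lemma.
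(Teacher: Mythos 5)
Your proposal is correct and matches the paper's own argument: both rest on \Cref{lemma:assumption_satisfied} for the submodularity of $c_{\samples{p}}$, the modularity (hence submodularity) of $\bar{p}$ already checked in that lemma's proof, and closure of submodularity under addition (which the paper cites as \Cref{ucrlv:theorem_modular_summation} and you verify directly via the equivalent characterization $g(X)+g(Y)\geq g(X\cup Y)+g(X\cap Y)$). Unpacking the addition-closure step instead of citing it is an immaterial difference.
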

\begin{proof}
Using \Cref{lemma:assumption_satisfied} we know that $c_{\samples{p}}$ is submodular. Furthermore, we can easily check (see proof of \Cref{lemma:assumption_satisfied}) that $\bar{p}$ is also submodular. Using \Cref{ucrlv:theorem_modular_summation} stating that the sum of two submodular function is submodular, we can conclude that $\bar{p} + c_{\samples{p}}$  is submodular.
\end{proof}

%
%
%

\subsection{Generic Proof For Regret Bound}
\label{sec:generic_proof}
In this section, we prove in a generic way, the regret for Algorithm \ref{algo:ucrl_bernstein}.  
%
In particular, the following proof relates to any method that uses Algorithm \ref{algo:ucrl_bernstein} and uses at each episode $k$, a set of plausible models specified by $\upperb{r}, \upperb{p}$ with the following properties:

\begin{enumerate}[label=\textbf{R.\arabic*},ref=R.\arabic*]
\item $r(s,a) \leq \upperb{r}(s,a) \text{ w.p. } 1-\delta^k_r \quad \text{individually for any } (s,a)$ \label{req:r_opt}

\item $p(\mathcal{S}_c | s,a) \leq \upperb{p}(\mathcal{S}_c | s,a) \text{ w.p. } 1-\delta^k_p \quad  \text{individually for any }(s,a, \mathcal{S}_c \subseteq \mathcal{S})$ \label{req:p_opt}

\item $\upperb{r}(s,a) - \bar{r}(s,a) \leq C(1/2,C^{\delta^k_r}_1,C_2^{\delta^k_r}) \quad \forall (s,a)$ \label{req:r_opt_bound}


\item $\upperb{p}(\mathcal{S}_c | s,a) - \bar{p}(\mathcal{S}_c | s,a) \leq C(p(\mathcal{S}_c | s,a),C_1^{\delta^k_p},C_2^{\delta^k_p}) \quad \forall (s,a), \mathcal{S}_c \subseteq \mathcal{S}$ \label{req:p_opt_bound}

\item The function $\mathcal{S}_c \to \upperb{p}(\mathcal{S}_c | s,a)$ is submodular $\quad\forall (s,a)$.\label{req:submodular}

\end{enumerate}
where $\text{w.p.}\quad 1-\delta^k_r$ (or $1-\delta^k_p$) means with probability at least $1-\delta^k_r$ (or $1-\delta^k_p$), $r$ and $p$ are respectively the rewards and probabilities of the true model, and $\bar{r}$ and $\bar{p}$ are the empirical mean observation of $r$ and $p$ respectively.

\[C(x(s,a),C_1,C_2) = c_1(x(s,a),C_1) + c_2(x(s,a),C_2).\]
with:
\begin{align}
c_1(x(s,a),C_1)  &= \sqrt{\frac{C_1 \cdot x(s,a) \cdot \paren*{1-x(s,a)}}{N_{t_k}(s,a)}}\\
c_2(x(s,a),C_2) &= \frac{C_2}{N_{t_k}(s,a)}.
\end{align}

\textbf{Proof Overview.}
We start similarly as in \citet{jaksch2010near} by decomposing the regret into two main parts $\tilde{\Delta}_{k}^{\tilde{p}}(s)$ and $\tilde{\Delta}_{k}^{{p}}(s)$ as shown in Lemma \ref{lemma:regret_decompose}. 

In Lemma \ref{lemma:delta_tilde}, we show how to bound the part $\tilde{\Delta}_{k}^{\tilde{p}}(s)$. One of the main idea in the proof is to assign the states $s'$ based on the values $u_i(s') - u_i(s)$  into an infinite set of bins $\mathcal{I}^u_{+} = \setof{]\frac{D}{2}, D], ]\frac{D}{4}, \frac{D}{2}] , ]\frac{D}{8}, \frac{D}{4}], \ldots }$, $\mathcal{I}^u_{-} = \setof{[-D, -\frac{D}{2}[, [-\frac{D}{2}, -\frac{D}{4}[ , [-\frac{D}{4}, -\frac{D}{8}[, \ldots }$ constructed in a way that the ratio between upper and lower endpoint is 2. This construction together with Lemma \ref{lemma_probs_vs_values_sum} that links the transitions, values and expected number of visits in \episodes{} of $D$ \rounds{} allows us to remove a factor of $\sqrt{D}$ compared to \UCRL{}. The results in this Lemma \ref{lemma:delta_tilde} is based on a relation between $N_k(s)$, the number of visits in the true but unknown MDP to the expected number of visits in \episodes{} of $D$ in the optimistic MDP (Lemma \ref{lemma_bound_on_visits}). 

In Lemma \ref{lemma:delta_p}, we show how to bound the $\tilde{\Delta}_{k}^{{p}}(s)$ part. The key idea is to use Bernstein based martingales concentrations inequalities instead of standard martingales. However, the adaptation was not trivial since we had to carefully introduce $\tilde{p}$ instead of the $p$ inside $\tilde{\Delta}_{k}^{{p}}(s)$. The key step is to avoid relating those two quantities through concentration inequalities. Instead we used established lemma related to the convergence of extended value iteration (Section \ref{sec:app_evi}).

\paragraph*{}
Another important aspect of our proof is that we avoid needing all $2^S$ constraints to hold \emph{with high probability} by using two tricks. The first trick is the definition of the plausible sets of MDPs $\mathcal{M}_k$ (See \cref{eq:effictive_plausible_p,eq:effictive_plausible_r}) which is the one effectively used in the proof. Note that given $(s,a)$, $\mathcal{M}_k$ only up to $S+1$ constraints and we show that the extended value iteration always converges (with probability 1. i.e the convergence does not depend on any constraint failing or holding). Furthermore, we show that the value of the policy obtained using the extended value iteration is in fact close to the optimal value for $\mathcal{M}_k$.

The second trick is that for a given $(s,a)$ our proof only requires the transitions of the true MDP to be \emph{with high probability} inside the corresponding set of transitions of $\mathcal{M}_k$ (so at most $S+1$ constraints need to hold and not $2^S$). On top of that, for a given $s$, our proof only need an additional $S+1$ constraints to hold with high probability. In particular, we only need the constraints defined by the subsets in $\mathcal{S}^+(s) \cup \mathcal{S}^-(s)$ (See \cref{eq:set_subset_whp_needed_minus,eq:set_subset_whp_needed_plus}). And we observe that the cardinality of $\mathcal{S}^+(s) \cup \mathcal{S}^-(s)$ is less than $S$ since there are at most $S$ next-states. 

\paragraph{Definition and Notations}
For any episode $k$, let's $\mathcal{M}_k$ the set of MDPs with transitions $\tilde{p}$ and rewards $\tilde{r}$ that satisfy:

\begin{align}
\tilde{r}(s,a) &\leq \upperb{r}(s,a) \; \forall (s,a)\label{eq:effictive_plausible_r}\\
\tilde{p}(\mathcal{S}_1^j | s, a) &\leq \upperb{p}(\mathcal{S}_1^j | s, a)\; \forall j \in \setof{1, \ldots S}, (s,a)\label{eq:effictive_plausible_p}
\end{align}
		
with $\mathcal{S}_{1}^{j} = \{s'_{1}, s'_{2}, \ldots s'_{j}\}\; \forall j \geq 1$ where $s'_1, s'_2, \ldots s'_S$ are such that $u_i(s'_1) \geq u_i(s'_2) \ldots \geq u_i(s'_S)$ (so the set of states sorted in descending order of their value) with $u_i$ the value at the iteration where the extended value iteration converges.

Given an interval $W \in  \mathcal{I}_-^u \cup \mathcal{I}_+^u$ and a state $s \in \mathcal{S}$, let's $\mathcal{S}^u_W(s) $ be  the set of states $s'$ such that $u_i(s')-u_i(s) \in W$.
For any state $s$, let $\mathcal{S}^{\tilde{p}}_-(s)$ contains all states $s'$ with $\tilde{p}(s'|s) - p(s'|s) < 0$ and $\mathcal{S}^{\tilde{p}}_+(s)$ contains all states $s'$ with $\tilde{p}(s'|s) - p(s'|s) > 0$ where $\tilde{p}(. | s) = \tilde{p}(. | s, \tilde{\pi}_k(s))$ and similarly for $p(. | s)$. Let us define $\mathcal{S}^{\tilde{p}_+}_W(s) = \mathcal{S}^u_W(s) \cap \mathcal{S}^{\tilde{p}}_+(s)$ and $\mathcal{S}^{\tilde{p}_-}_W(s) = \mathcal{S}^u_W(s) \cap \mathcal{S}^{\tilde{p}}_-(s)$. For any state $s$, let's define the set of subset of states $\mathcal{S}^+(s)$, $\mathcal{S}^-(s)$ as follows:

\begin{align}
\mathcal{S}^+(s) &= \{\mathcal{S}^{\tilde{p}_+}_W(s), \ldots \} \quad \forall W \in \mathcal{I}^u_+ \mid \mathcal{S}^{\tilde{p}_+}_W(s) \ne \emptyset \label{eq:set_subset_whp_needed_plus}\\
\mathcal{S}^-(s) &=\{\mathcal{S}^{\tilde{p}_-}_W(s), \ldots \} \quad \forall W \in \mathcal{I}^u_- \mid \mathcal{S}^{\tilde{p}_-}_W(s) \ne \emptyset \label{eq:set_subset_whp_needed_minus}
\end{align}

\paragraph{Detailed Proof}
We first provide the proof by only considering episodes $k$ satisfying all the followings:
\begin{enumerate}[label=\textbf{A.\arabic*},ref=A.\arabic*]

\item $N_k \geq \max\left\{2, 32D\max\curly*{C_1^{\delta_p^m}, C_2^{\delta_p^m} } \right\}$\label{assumption:N_k}

\item $\tilde{V}(\tilde{\pi}_k) \geq V^* - \epsilon_{t_k}$ where $V^*$ is the value of the optimal policy in the true but unknown MDP $M$. And $\tilde{V}(\tilde{\pi}_k)$ is the value of the policy $\tilde{\pi}_k$ returned by extended value iteration in the MDP $\tilde{M}_k$ (the MDP with transitions and rewards as in the iteration $i$ where the extended value iteration converges). \label{assumption:optimistic}

\item The true model $M$ is inside the set $\mathcal{M}_k$. Furthermore:
\begin{align}
\bar{r}(s,\tilde{\pi}_k(s)) - \E r(s, \tilde{\pi}_k(s)) &\leq C(1/2,C^{\delta^k_r}_1,C_2^{\delta^k_r}) \quad \forall s\\
\bar{p}(\mathcal{S}_c | s, \tilde{\pi}_k(s)) - p(\mathcal{S}_c | s, \tilde{\pi}_k(s)) &\leq C(p(\mathcal{S}_c | s,\tilde{\pi}_k(s)),C_1^{\delta^k_p},C_2^{\delta^k_p}) \quad \forall s, \mathcal{S}_c \in \mathcal{S}^+(s)\\
p(\mathcal{S}_c | s,\tilde{\pi}_k(s)) - \bar{p}(\mathcal{S}_c | s,\tilde{\pi}_k(s))&\leq C(p(\mathcal{S}_c | s,\tilde{\pi}_k(s)),C_1^{\delta^k_p},C_2^{\delta^k_p}) \quad \forall s, \mathcal{S}_c \in \mathcal{S}^-(s)
\end{align}\label{assumption:plausible_set}
where $\mathcal{S}^+(s)$ and $\mathcal{S}^-(s)$ are defined in \cref{eq:set_subset_whp_needed_minus,eq:set_subset_whp_needed_plus}.
\end{enumerate}

Later on, in \Cref{sec:probability_failure}, we show that \ref{assumption:plausible_set} and \ref{assumption:optimistic} hold with high probability.

Regarding \ref{assumption:N_k}, the maximum regret we can incur due to episodes not satisfying \ref{assumption:N_k} is just $\max\left\{2m, 32Dm\max\curly*{C_1^{\delta_p^m}, C_2^{\delta_p^m} } \right\}$ which we add to get the final bound.

\begin{replemma}{lemma:regret_decompose}[Regret decomposition]
		If the true model $M$ is within our plausible set $\mathcal{M}_k$ for each episode $k$, then with probability at least $1-\delta'$, 
	\begin{align}
	\regret(T) &\leq 2C_2^{\delta_r^m}m + \sqrt{C(\delta')T} + C_3 \sqrt{C_1^{\delta_r^m}SAT} + \sum_{k=1}^{m} \sum_{s}\Delta_{k}^{\tilde{p}}(s) + \sum_{k=1}^{m} \sum_{s}\Delta_{k}^{p}(s),
	\end{align}	
	where
	\begin{align}
	\tilde{\Delta}_{k}^{\tilde{p}}(s) &= N_k(s) \cdot\sum_{s'} \paren*{\tilde{p}(s'|s)-p(s'|s)} \biggl(u_i(s') - u_i(s)\biggr),\\
	\tilde{\Delta}_{k}^{p}(s)&= N_k(s) \cdot \paren*{\sum_{s'}p(s'|s)u_i(s') -u_i(s)},\\
	C_3 &= 2\paren*{\sqrt{2} +1} \\
	C(\delta') &= \ln (1/\delta'),\nonumber 
	\end{align}
\end{replemma}

%
%
%

\begin{proof} By definition of regret, we get
\[\regret(T) \defn \sum_{t=1}^{T} \left(\gain^* - r(s_t, a_t)\right)\]

\textbf{Step 1: Concentrating rewards around expected rewards.} Using Hoeffding bound similarly to Section 4.1 in \citep{jaksch2010near}, we conclude that with probability at least $1-\delta'$, the regret is:
\begin{align}
\regret(T) &\leq \sqrt{T \ln{(1/\delta')}/2} + \sum_{k=1}^{m} \sum_{s,a} N_k(s,a) \paren*{V^*-\E r(s,a)}.\label{eq:14}
\end{align}

\textbf{Step 2: Applying the convergence of Algorithm~\ref{algo:extended_vi}.} By Theorem \ref{thm:evi_convergence}, the optimistic policy $\tilde{\pi}_k$ computed by the extended value iteration at the beginning of each \episode{} $k$ in Algorithm \ref{algo:ucrl_bernstein} satisfies (since the true model in inside our plausible set):  $\gainOpt_k = \gainOpt(\piOpt_k) \geq V^* - \epsilon_{t_k}$. We have:

\begin{align}
\regret(T) &\leq \sqrt{T \ln{(1/\delta')}/2} + \sum_{k=1}^{m} \sum_{s,a} N_k(s,a) \paren*{\gainOpt_k-\E r(s,a) + \epsilon_{t_k}} \label{eq:regret}
\end{align}

\textbf{Step 3: Concentrating expected rewards to optimistic rewards.} Let's denote $\Delta_k = \sum_{s,a} N_k(s,a) \paren*{\gainOpt_k-\E r(s,a) + \epsilon_{t_k}}$

We have:

\begin{align}
\Delta_k &= \sum_{s,a} N_k(s,a) \paren*{\gainOpt_k- \rOpt(s,a)} +\sum_{s,a} N_k(s,a) \paren*{\rOpt(s,a)-\E r(s,a)} +\sum_{s,a} N_k(s,a) \epsilon_{t_k}\\
&\leq \sum_{s,a} N_k(s,a) \paren*{\gainOpt_k- \rOpt(s,a)} +\sum_{s,a} N_k(s,a) \paren*{2C(1/2,C_1^{\delta_r^m},C_2^{\delta_r^m}) + \epsilon_{t_k}}.\label{eq:delta_k}
\end{align}

\Cref{eq:delta_k} comes because $\rOpt(s,a)-\E r(s,a) = \paren*{\rOpt(s,a)-\bar{r}(s,a)} + \paren*{\bar{r}(s,a)-\E r(s,a)}$. The first term is bounded by $C(1/2,C_1^{\delta_r^k},C_2^{\delta_r^k}) \leq C(1/2,C_1^{\delta_r^m},C_2^{\delta_r^m})$ by construction (\ref{req:r_opt_bound}). The second term  is bounded due to \ref{assumption:plausible_set}. 

\textbf{Step 4: Decomposing the regret for optimistic MDP.} Letting $\tilde{\Delta}_{k} = \sum_{s,a} N_k(s,a) \paren*{\gainOpt_k- \rOpt(s,a)}$, and using the fact that, when the extended value iteration converges at iteration $i$, $\abs{u_{i+1}(s)-u_i(s)- \gainOpt_k} \leq \epsilon_{t_k}$ (By Theorem \ref{thm:evi_convergence})

\begin{align}
	\tilde{\Delta}_{k} &= \sum_{s,a} N_k(s,a) \paren*{\gainOpt_k- \rOpt(s,a)}\\
	&\leq \sum_{s,a} N_k(s,a) \paren*{u_{i+1}(s)-u_i(s) + \epsilon_{t_k} - \rOpt(s,a)}\\
	&= \sum_{s} N_k(s,\tilde{\pi}_k(s)) \biggl(u_{i+1}(s)-u_i(s) + \epsilon_{t_k} - \rOpt(s,\tilde{\pi}_k(s))\biggr)\label{step_greedy}\\
	&= \sum_{s} N_k(s,\tilde{\pi}_k(s)) \biggl(\sum_{s'} \tilde{p}(s'|s,\tilde{\pi}_k(s)) u_i(s') -u_i(s)\biggr) + \sum_{s} N_k(s,\tilde{\pi}_k(s))  \epsilon_{t_k}\label{eq:r_opt_m}
\end{align}
\eqref{step_greedy} comes from the fact that $\tilde{\pi}_k$ is a greedy policy and as a result $N_k(s,a) = 0$ for $a \neq \tilde{\pi}_k(s)$.

Also since $\tilde{\pi}_k$ is a greedy policy we will remove dependency on the action to designate probabilities. So for example we have $p(s'|s) = p(s'|s,\tilde{\pi}_k(s))$, $\tilde{p}(s'|s) = \tilde{p}(s'|s,\tilde{\pi}_k(s))$ and $N_k(s) = N_k(s, \tilde{\pi}_k(s))$. Denoting $\tilde{\Delta}_{k}(s) = N_k(s) \paren*{\sum_{s'} \tilde{p}(s'|s) u_i(s') -u_i(s)}$, we have:

\begin{align}
	\tilde{\Delta}_{k}(s) &= N_k(s) \paren*{\sum_{s'} \tilde{p}(s'|s) u_i(s') -u_i(s)}\notag\\
	&= N_k(s)  \paren*{\sum_{s'} \tilde{p}(s'|s) u_i(s') -u_i(s)}\notag\\
	&=  N_k(s)  \paren*{\sum_{s'} \paren*{\tilde{p}(s'|s)-p(s'|s)} u_i(s') + \sum_{s'}p(s'|s)u_i(s') -u_i(s)}\notag\\
	&=  N_k(s)  \paren*{\sum_{s'} \paren*{\tilde{p}(s'|s)-p(s'|s)} (u_i(s') - u_i(s))  + \sum_{s'}p(s'|s)u_i(s') -u_i(s)} \label{delta_1_s_bound}\\
	&= \Delta_{k}^{\tilde{p}}(s) + \Delta_{k}^{p}(s)\label{eq:23}
\end{align}
\eqref{delta_1_s_bound} comes from the fact that $\sum_{s'} \paren*{\tilde{p}(s'|s)-p(s'|s)} = 0$.

\textbf{Step 5: Bounding the terms due to the approximation in value iteration (last term in \eqref{eq:r_opt_m}).}

\begin{align}
\sum_{k=1}^m \sum_{s} N_k(s,\tilde{\pi}_k(s))  \epsilon_{t_k} &= \sum_{k=1}^m \epsilon_{t_k} N_k\\
&=  \sum_{k=1}^m \frac{N_k}{\sqrt{t_k}}\\
&\leq (\sqrt{2} + 1) \sqrt{T}\label{eq:24}
\end{align}
Using the fact $t_k = \max\{1, \sum_{i=1}^k N_i\}$ and $0 \leq N_k \leq t_{k-1}$ \eqref{eq:24} comes directly from C.3 in \citet{jaksch2010near}.

\textbf{Step 6: Bounding the terms due to the concentration bound on rewards (last term in \eqref{eq:delta_k}).}
\begin{align}
\sum_{k=1}^m \sum_{s,a} N_k(s,a) 2C(1/2,C_1^{\delta_r^m},C_2^{\delta_r^m}) &= 2 \sqrt{C_1^{\delta_r^m}}\sum_{k=1}^m \sum_{s,a} \frac{N_k(s,a)}{\sqrt{N_{t_k}(s,a)}} + 2C_2^{\delta_r^m} \sum_{k=1}^m \sum_{s,a} \frac{N_k(s,a)}{N_{t_k}(s,a)}\\
&\leq 2 \sqrt{C_1^{\delta_r^m}}\sum_{k=1}^m \sum_{s,a} \frac{N_k(s,a)}{\sqrt{N_{t_k}(s,a)}} + 2C_2^{\delta_r^m} m \label{eq:28}\\
&\leq 2(\sqrt{2}+1) \sqrt{C_1^{\delta_r^m}} \sqrt{SAT} + 2C_2^{\delta_r^m} m\label{eq:29}
\end{align}
We obtain \eqref{eq:28} from the extended doubling trick.
We get \eqref{eq:29} from the fact that $N_{t_k}(s,a) = \max\{1, \sum_{i=1}^k N_i(s,a)\}$ and Lemma 3 of~\citep{jaksch2010near}.

\textbf{Summary:} Equations~\ref{eq:14},~\ref{eq:23},~\ref{eq:24} and~\ref{eq:29} completes the proof of this lemma.
\end{proof}

\begin{lemma}[Bounding the effect of Optimistic MDP]
\label{lemma:delta_tilde}
		If the true model $M$ is within our plausible set $\mathcal{M}_k$ for each episode $k$ and the number of episodes is upper bounded by $m$, then,
	\[\sum_{k=1}^{m} \sum_{s}\Delta_{k}^{\tilde{p}}(s) \leq  288 \cdot \sqrt{mTC_1^{\delta_p}D \min\{\log_2\paren*{D + 1}, S\}} +  8DmC^{\delta_p}_2 \]
\end{lemma}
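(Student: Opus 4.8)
The plan is to bound $\sum_{k}\sum_{s}\Delta_{k}^{\tilde p}(s)$ by decomposing each state $s$'s contribution according to both the \emph{sign} of the transition error $\tilde p(s'|s)-p(s'|s)$ and the \emph{magnitude} of the value gap $u_i(s')-u_i(s)$. Only the two same-sign pairings contribute positively, so after dropping the negative cross terms I keep $s'\in\mathcal{S}^{\tilde p_+}_W(s)$ for $W\in\mathcal{I}^u_+$ and $s'\in\mathcal{S}^{\tilde p_-}_W(s)$ for $W\in\mathcal{I}^u_-$. On a single interval $W=\,]\ell,2\ell]$ the value gap is at most $b_W=2\ell$ in magnitude, so $\sum_{s'\in\mathcal{S}_c}(\tilde p(s'|s)-p(s'|s))(u_i(s')-u_i(s))\le b_W\paren*{\tilde p(\mathcal{S}_c|s)-p(\mathcal{S}_c|s)}$ with $\mathcal{S}_c=\mathcal{S}^{\tilde p_+}_W(s)$, and symmetrically on the negative side. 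Since $\mathcal{S}_c\in\mathcal{S}^+(s)$ (resp.\ $\mathcal{S}^-(s)$), both the optimistic transition (via the submodularity corollary on $\upperb p$ together with \ref{req:p_opt_bound}) and the true transition (via \ref{assumption:plausible_set}) stay within $C(p(\mathcal{S}_c|s),C_1^{\delta_p},C_2^{\delta_p})$ of $\bar p(\mathcal{S}_c|s)$, whence $\tilde p(\mathcal{S}_c|s)-p(\mathcal{S}_c|s)\le 2C(p(\mathcal{S}_c|s),C_1^{\delta_p},C_2^{\delta_p})$. This splits every term into a Bernstein variance part $c_1$ and a lower-order part $c_2$.

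The two parts are handled separately. For the $c_2=C_2^{\delta_p}/N_{t_k}(s)$ part I first sum $b_W$ over intervals, a geometric series of order $D$, and then sum $N_k(s)/N_{t_k}(s)$ over episodes and states, which the extended doubling trick controls at order $m$; this produces the additive $8DmC^{\delta_p}_2$ term. The heart of the proof is the variance part, of the form $2b_W N_k(s)\sqrt{C_1^{\delta_p}q_W(s)/N_{t_k}(s)}$ with $q_W(s)=p(\mathcal{S}^{\tilde p_+}_W(s)|s)$ (using $q_W(1-q_W)\le q_W$), where I intend to remove both the extra $\sqrt D$ and the extra $\sqrt S$ of \UCRL{}. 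The $\sqrt S$ removal comes from the geometric partition: at most $\min\{\log_2(D+1),S\}$ intervals carry a non-negligible contribution (at most $S$ distinct target states, and the span of $u_i$ being at most $D$ caps the number of relevant doublings), so a Cauchy--Schwarz across intervals costs only $\sqrt{\min\{\log_2(D+1),S\}}$ rather than $\sqrt S$.

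The $\sqrt D$ removal is where Lemmas~\ref{lemma_probs_vs_values_sum} and~\ref{lemma_bound_on_visits} enter, realizing a law-of-total-variance effect: the accumulated product of value gaps and transition probabilities over $D$-step windows is \emph{linear} in $D$, not the trivial quadratic $y^2=D^2$. Concretely, I use Lemma~\ref{lemma_bound_on_visits} (applicable precisely because of \ref{assumption:N_k}) to replace the true-MDP episode count $N_k(s)$ by $\tfrac{N_k}{y}\,\tilde u^c_y(\cdot|s)$ with $y=\min\{x,D\}$, and Lemma~\ref{lemma_probs_vs_values_sum} with $\mathcal{S}_1=\mathcal{S}^{\tilde p_+}_W(s)$ to bound $\tilde u^c_y(\cdot|s)\,\ell_W\,\tilde p(\mathcal{S}_1|\cdot)\le y$; since $\tilde p(\mathcal{S}_1|s)\ge q_W(s)$ on the positive set, the $\tilde u^c_y/y$ factors cancel and the product $b_W q_W(s)N_k(s)$ collapses to a quantity governed by $N_k$ alone. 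Assembling the surviving factors and applying a final Cauchy--Schwarz $\sum_{k,s}\tfrac{N_k(s)}{\sqrt{N_{t_k}(s)}}\le\sqrt{\sum_{k,s}\tfrac{N_k(s)}{N_{t_k}(s)}}\,\sqrt{\sum_{k,s}N_k(s)}\le\sqrt{2m}\,\sqrt T$, with the interval accounting contributing $\sqrt{D\min\{\log_2(D+1),S\}}$, yields the leading $\sqrt{mTC_1^{\delta_p}D\min\{\log_2(D+1),S\}}$ term; the negative side is symmetric and the constants aggregate into the stated $288$.

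The main obstacle will be the orchestration in the last step: choosing $\mathcal{S}_0,\mathcal{S}_1$ and the window length $y=\min\{x,D\}$ so that Lemma~\ref{lemma_probs_vs_values_sum} and Lemma~\ref{lemma_bound_on_visits} compose with the $\tilde u^c_y/y$ factors cancelling cleanly, while keeping the per-interval probabilities $q_W(s)$ compatible with the geometric collapse to $\min\{\log_2(D+1),S\}$ terms rather than a naive per-state $D^2$ bound. A secondary subtlety is that Lemma~\ref{lemma_bound_on_visits} controls a \emph{maximum} over start states $\max_{s''}\tilde u^c_y(\cdot|s'')$, which must be tied back to the start-at-$s$ quantity of Lemma~\ref{lemma_probs_vs_values_sum} by a first-visit/renewal argument (reaching the target before counting further visits only decreases the count); and the bookkeeping of which subsets lie in $\mathcal{S}^+(s)\cup\mathcal{S}^-(s)$ must stay within the at most $S$ constraints that hold with high probability under \ref{assumption:plausible_set}.
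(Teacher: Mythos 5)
Your skeleton matches the paper's up to a point: the sign split, the geometric value intervals $\mathcal{I}^u_{+},\mathcal{I}^u_{-}$, the bound $\tilde{p}(\mathcal{S}_c|s)-p(\mathcal{S}_c|s)\leq 2C(p(\mathcal{S}_c|s),C_1^{\delta_p},C_2^{\delta_p})$ via \ref{assumption:plausible_set} and \ref{req:p_opt_bound}, the separate treatment of the $c_1$ and $c_2$ parts (your handling of the $c_2$ part and the $8DmC_2^{\delta_p}$ term is exactly the paper's), and the closing H\"older/Cauchy--Schwarz over episodes. The genuine gap is in the variance part: the per-state collapse you rely on cannot produce the factor $\min\{\log_2\paren*{D+1},S\}$; it produces $S$. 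Once you have cancelled the $\tilde{u}^c_y/y$ factors state by state, i.e.\ $b_W\, q_W(s)\, N_k(s)\leq c\, N_k$ for each \emph{single} state $s$, every state in a fixed value interval $W$ still contributes roughly $\sqrt{C_1^{\delta_p} b_W N_k}\cdot\sqrt{N_k(s)/N_{t_k}(s)}$, and this must be summed over the up-to-$S$ states lying in $W$. Whether you sum the square roots directly or apply Cauchy--Schwarz against $\sum_s N_k(s)/N_{t_k}(s)\leq 2$ (the extended doubling trick), that sum over states costs $\sqrt{S}$. Your ``Cauchy--Schwarz across intervals'' controls only the sum over the value intervals $W$, which is a geometric series in $\sqrt{b_W}$ costing $O(\sqrt{D})$ and requiring no logarithmic cap at all; it does nothing for the sum over states \emph{within} one interval, which is where the $\sqrt{S}$ actually lives. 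Note also that the number of populated value intervals is not bounded by $\log_2(D+1)$: value gaps can be arbitrarily small, so up to $S$ of those intervals are non-empty no matter what $D$ is; only the decay of $b_W$ tames that sum.

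The missing idea is the paper's \emph{second} geometric partition, this time of states by optimistic transition mass: $\mathcal{I}^p = \curly*{\,]1/2,1],\ ]1/4,1/2],\ \ldots\,}$, with $\mathcal{S}_{W_p}$ collecting the states $s$ whose $\tilde{p}(\mathcal{S}^{\tilde{p}_+}_W(s)|s)$ falls in $W_p$, combined with Lemma~\ref{ucrlv:lemma:num_groups}. That lemma applies Lemma~\ref{lemma_bound_on_visits} and Lemma~\ref{lemma_probs_vs_values_sum} with $\mathcal{S}_0$ equal to a whole group (split into at most two sub-groups $\mathcal{G}_1,\mathcal{G}_2$ to enforce the separation property \eqref{eq:group_property}), so that the \emph{aggregate} visit count of all states in the group, not each $N_k(s)$ separately, is charged to a single budget:
\[
N_k(\mathcal{S}_{W_p}) \cdot \max_{s \in \mathcal{S}_{W_p}} \tilde{p}(\mathcal{S}^{\tilde{p}_+}_W(s)|s) \cdot  \max_{s' \in \mathcal{S}^{\tilde{p}_+}_W(s)} \biggl(u_i(s') - u_i(s)\biggr) \leq 384\, N_k .
\]
With this group-level bound, each value interval $W$ costs $384\,N_k\cdot 2\lowerb{b}(W)$ per group, summed over the at most $\min\{\log_2\paren*{D+1},S\}$ groups whose probability level exceeds $1/D$, plus a doubly geometric tail worth $O(\sqrt{N_k D})$ for the smaller probability levels. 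This sharing of one $N_k/(\tilde{p}\cdot\mathrm{gap})$ budget across all states of a group is exactly what a per-state argument cannot replicate (per-state budgets add up to $S$ of them), and it is also where your flagged ``orchestration'' worry about $\max_{s''}\tilde{u}^c_y$ versus the start-at-$s$ quantity is resolved, by taking the maximizing start state inside the group. As sketched, your route proves only an $O\paren*{\sqrt{mTC_1^{\delta_p}DS}}$ bound; the $\sqrt{S}$ you set out to remove is still present.
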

\begin{proof}

\ucrlvproofstep{Subdivision of the range of all possible values $u_i(s') - u_i(s)$ into sets.}
Let's consider the infinite set of non-overlapping intervals with non-negative endpoints $\mathcal{I}^u_{+} = \setof{]\frac{D}{2}, D], ]\frac{D}{4}, \frac{D}{2}] , ]\frac{D}{8}, \frac{D}{4}], \ldots }$ constructed in a way that the ratio between upper and lower endpoint is 2. Similarly, let's consider the infinite set of non-overlapping intervals with non-positive endpoints $\mathcal{I}^u_{-} = \setof{[-D, -\frac{D}{2}[, [-\frac{D}{2}, -\frac{D}{4}[ , [-\frac{D}{4}, -\frac{D}{8}[, \ldots }$. 

By Lemma~\ref{lemma:opt_diameter}, for a given state $s$, we can assign each state $s' \in \mathcal{S}$ with $u_i(s')-u_i(s) > 0$ to a unique interval $]v_1, v_2] \in \mathcal{I}_+^u$  such that $v_1 < u_i(s')-u_i(s) \leq v_2$. Similarly, for the given state $s$, we can assign each state $s'$ with $u_i(s')-u_i(s) < 0$ to a unique interval $[v_1, v_2[ \in \mathcal{I}_-^u$ such that $v_1 \leq u_i(s')-u_i(s) < v_2$.  Given an interval $W \in  \mathcal{I}_-^u \cup \mathcal{I}_+^u$ and a state $s \in \mathcal{S}$, let's $\mathcal{S}^u_W(s) $be  the set of states $s'$ such that $u_i(s')-u_i(s) \in W$.

\ucrlvproofstep{Decomposing $\tilde{\Delta}_{k}^{\tilde{p}}(s)$ in the subdivided ranges.}
Let $\mathcal{S}^{\tilde{p}}_-(s)$ contains all states $s'$ with $\tilde{p}(s'|s) - p(s'|s) < 0$ and $\mathcal{S}^{\tilde{p}}_+(s)$ contains all states $s'$ with $\tilde{p}(s'|s) - p(s'|s) > 0$ for any given state $s$. Let us define $\mathcal{S}^{\tilde{p}_+}_W(s) = \mathcal{S}^u_W(s) \cap \mathcal{S}^{\tilde{p}}_+(s)$ and $\mathcal{S}^{\tilde{p}_-}_W(s) = \mathcal{S}^u_W(s) \cap \mathcal{S}^{\tilde{p}}_-(s)$.

We then conclude that:
\begin{align}
\begin{split}
\frac{\tilde{\Delta}_{k}^{\tilde{p}}(s)}{N_k(s)} &\leq \sum_{W \in \mathcal{I}_+^u} \sum_{s' \in\mathcal{S}^{\tilde{p}_+}_W(s)} \paren*{\tilde{p}(s'|s)-p(s'|s)} \biggl(u_i(s') - u_i(s)\biggr) + \notag \\
&\quad\sum_{W \in \mathcal{I}_-^u} \sum_{s' \in \mathcal{S}^{\tilde{p}_-}_W(s)} \paren*{\tilde{p}(s'|s)-p(s'|s)} \biggl(u_i(s') - u_i(s)\biggr)\\
&\leq \sum_{W \in \mathcal{I}_+^u} \paren*{\tilde{p}(\mathcal{S}^{\tilde{p}_+}_W(s)|s)-p(\mathcal{S}^{\tilde{p}_+}_W(s)|s)} \max_{s' \in \mathcal{S}^{\tilde{p}_+}_W(s)} \biggl(u_i(s') - u_i(s)\biggr) + \notag \\
&\quad \sum_{W \in \mathcal{I}_-^u} \paren*{\tilde{p}(\mathcal{S}^{\tilde{p}_-}_W(s)|s)-p(\mathcal{S}^{\tilde{p}_-}_W(s)|s)} \min_{s' \in \mathcal{S}^{\tilde{p}_-}_W(s)} \biggl(u_i(s') - u_i(s)\biggr).
\end{split}
\end{align}
Let us focus on the positive ones for now as the arguments for the negative one follow similarly. Let $\Delta_k^{\tilde{p}_+}(s) = N_k(s)\sum_{W \in \mathcal{I}_+^u} \paren*{\tilde{p}(\mathcal{S}^{\tilde{p}_+}_W(s)|s)-p(\mathcal{S}^{\tilde{p}_+}_W(s)|s)} \max_{s' \in \mathcal{S}^{\tilde{p}_+}_W(s)} \biggl(u_i(s') - u_i(s)\biggr)$.  We have using \ref{assumption:plausible_set} and \ref{req:p_opt_bound}:

\begin{align}
\Delta_k^{\tilde{p}_+}(s)
&\leq 2N_k(s)\sum_{W \in \mathcal{I}_+^u} \paren*{c_1(p(\mathcal{S}^{\tilde{p}_+}_W(s)|s),C_1^{\delta_p}) + c_2(p(\mathcal{S}^{\tilde{p}_+}_W(s)|s),C_2^{\delta_p}) } \max_{s' \in \mathcal{S}^{\tilde{p}_+}_W(s)} \biggl(u_i(s') - u_i(s)\biggr)\\
&= 2\Delta_{k,1}^{\tilde{p}_+}(s) + 2\Delta_{k,2}^{\tilde{p}_+}(s).  
\end{align}



\ucrlvproofstep{Bounding $\Delta_{k,1}^{\tilde{p}_+}(s)$ for an episode $k$.}
\begin{align}
\Delta_{k,1}^{\tilde{p}_+}(s) &= N_k(s)\sum_{W \in \mathcal{I}_+^u} \paren*{c_1(p(\mathcal{S}^{\tilde{p}_+}_W(s)|s),C_1^{\delta_p})} \max_{s' \in \mathcal{S}^{\tilde{p}_+}_W(s)} \biggl(u_i(s') - u_i(s)\biggr)\\
&= N_k(s)\sum_{W \in \mathcal{I}_+^u} \paren*{\sqrt{\frac{C_1^{\delta_p}p(\mathcal{S}^{\tilde{p}_+}_W(s)|s) (1-p(\mathcal{S}^{\tilde{p}_+}_W(s)|s))}{N_{t_k}(s)}}} \max_{s' \in \mathcal{S}^{\tilde{p}_+}_W(s)} \biggl(u_i(s') - u_i(s)\biggr)\label{eq:33}\\
&\leq N_k(s)\sum_{W \in \mathcal{I}_+^u} \paren*{\sqrt{\frac{C_1^{\delta_p}\tilde{p}(\mathcal{S}^{\tilde{p}_+}_W(s)|s)}{N_{t_k}(s)}}} \max_{s' \in \mathcal{S}^{\tilde{p}_+}_W(s)} \biggl(u_i(s') - u_i(s)\biggr)\label{eq:34}\\
&= \sqrt{\frac{N_k(s)}{N_{t_k}(s)}}\sum_{W \in \mathcal{I}_+^u} \paren*{\sqrt{C_1^{\delta_p}N_k(s) \cdot \tilde{p}(\mathcal{S}^{\tilde{p}_+}_W(s)|s)  \left(\max_{s' \in \mathcal{S}^{\tilde{p}_+}_W(s)} \biggl(u_i(s') - u_i(s)\biggr)\right)^2 }}\label{eq:use_n-k-s}
\end{align}
\eqref{eq:33} is by the definition of $c_1$. \eqref{eq:34} is due to the fact that for all $\mathcal{S}^{\tilde{p}_+}_W(s)$, $\tilde{p}(\mathcal{S}^{\tilde{p}_+}_W(s)|s) > p(\mathcal{S}^{\tilde{p}_+}_W(s)|s)$ and $1-p(\mathcal{S}^{\tilde{p}_+}_W(s)|s) \leq 1$.

\ucrlvproofstep{Bounding the sum of $\Delta_{k,1}^{\tilde{p}_+}(s)$ over all states.}

We have from Equation~\eqref{eq:use_n-k-s}:

\begin{align}
\LHS &= \sum_{s}\Delta_{k,1}^{\tilde{p}_+}(s)\\
&\leq \sum_{s} \sqrt{\frac{N_k(s)}{N_{t_k}(s)}}\sum_{W \in \mathcal{I}_+^u} \paren*{\sqrt{C_1^{\delta_p}N_k(s) \cdot \tilde{p}(\mathcal{S}^{\tilde{p}_+}_W(s)|s)  \left(\max_{s' \in \mathcal{S}^{\tilde{p}_+}_W(s)} \biggl(u_i(s') - u_i(s)\biggr)\right)^2 }}\\
 &=\sum_{W \in \mathcal{I}_+^u} \sum_{s} \sqrt{\frac{N_k(s)}{N_{t_k}(s)}}  \sqrt{C_1^{\delta_p}N_k(s) \cdot \tilde{p}(\mathcal{S}^{\tilde{p}_+}_W(s)|s)  \left(\max_{s' \in \mathcal{S}^{\tilde{p}_+}_W(s)} \biggl(u_i(s') - u_i(s)\biggr)\right)^2 }\\
 &\leq \sum_{W \in \mathcal{I}_+^u} \sqrt{\sum_{s} \frac{N_k(s)}{N_{t_k}(s)}} \sqrt{\sum_{s} C_1^{\delta_p}N_k(s) \cdot \tilde{p}(\mathcal{S}^{\tilde{p}_+}_W(s)|s)  \left(\max_{s' \in \mathcal{S}^{\tilde{p}_+}_W(s)} \biggl(u_i(s') - u_i(s)\biggr)\right)^2}\label{eq:holder_s_s}\\
 &\leq \sqrt{C_1^{\delta_p}}\sum_{W \in \mathcal{I}_+^u} \sqrt{\sum_{s} N_k(s) \cdot \tilde{p}(\mathcal{S}^{\tilde{p}_+}_W(s)|s)  \left(\max_{s' \in \mathcal{S}^{\tilde{p}_+}_W(s)} \biggl(u_i(s') - u_i(s)\biggr)\right)^2}\label{eq:before_merge_s}
\end{align}
\eqref{eq:holder_s_s} is obtained by applying H\"older's inequality over $s$.
\eqref{eq:before_merge_s} comes from the extended doubling trick.

We then construct the set of intervals $\mathcal{I}^p = \{ ]\frac{1}{2}, 1], ]\frac{1}{4}, \frac{1}{2}], \ldots ]\frac{1}{D}, \frac{2}{D}], \ldots \}$. We will sum together states whose $\tilde{p}(\mathcal{S}^{\tilde{p}_+}_W(s)|s)$ belongs to the same interval in $\mathcal{I}^p$. Given an interval $W_p \in \mathcal{I}^p$, let's call $\mathcal{S}_{W_p}$ the set of all states such that $s \in \mathcal{S}_{W_p}$ if $\tilde{p}(\mathcal{S}^{\tilde{p}_+}_W(s)|s) \in W_p$.

We will denote by $\mathcal{I}^p_<$ the set  $\{ ]\frac{1}{2}, 1], ]\frac{1}{4}, \frac{1}{2}], \ldots ]\frac{1}{D}, \ldots \frac{2}{D}]  \}$ and $\mathcal{I}^p_>$ the complement of set $\mathcal{I}^p_<$

Continuing from \eqref{eq:before_merge_s} and letting $W = ]\lowerb{b}(W), 2\lowerb{b}(W)]$ for any $W \in \mathcal{I}_+^u$, we have:

\begin{align}
\LHS &=\frac{\sum_{s}\Delta_{k,1}^{\tilde{p}_+}(s)}{\sqrt{C_1^{\delta_p}}}\\
&\leq \sum_{W \in \mathcal{I}_+^u} \sqrt{\sum_{s} N_k(s) \cdot \tilde{p}(\mathcal{S}^{\tilde{p}_+}_W(s)|s)  \left(\max_{s' \in \mathcal{S}^{\tilde{p}_+}_W(s)} \biggl(u_i(s') - u_i(s)\biggr)\right)^2}\label{eq:avoid_logD}\\
&\leq \sum_{W \in \mathcal{I}_+^u} \sqrt{\sum_{W_p \in \mathcal{I}^p}N_k(\mathcal{S}_{W_p}) \cdot \max_{s \in \mathcal{S}_{W_p}} \tilde{p}(\mathcal{S}^{\tilde{p}_+}_W(s)|s)   \left(\max_{s' \in \mathcal{S}^{\tilde{p}_+}_W(s)} \biggl(u_i(s') - u_i(s)\biggr)\right)^2}\\
&\leq \sum_{W \in \mathcal{I}_+^u} \sqrt{\sum_{W_p \in \mathcal{I}^p_<}N_k(\mathcal{S}_{W_p}) \cdot \max_{s \in \mathcal{S}_{W_p}} \tilde{p}(\mathcal{S}^{\tilde{p}_+}_W(s)|s)   \left(\max_{s' \in \mathcal{S}^{\tilde{p}_+}_W(s)} \biggl(u_i(s') - u_i(s)\biggr)\right)^2}\notag\\
&\quad\quad + \sum_{W \in \mathcal{I}_+^u} \sqrt{\sum_{W_p \in \mathcal{I}^p_>}N_k(\mathcal{S}_{W_p}) \cdot \max_{s \in \mathcal{S}_{W_p}} \tilde{p}(\mathcal{S}^{\tilde{p}_+}_W(s)|s)   \left(\max_{s' \in \mathcal{S}^{\tilde{p}_+}_W(s)} \biggl(u_i(s') - u_i(s)\biggr)\right)^2}\\
&\leq \sum_{W \in \mathcal{I}_+^u} \sqrt{\sum_{W_p \in \mathcal{I}^p_<}N_k(\mathcal{S}_{W_p}) \cdot \max_{s \in \mathcal{S}_{W_p}} \tilde{p}(\mathcal{S}^{\tilde{p}_+}_W(s)|s)   \left(\max_{s' \in \mathcal{S}^{\tilde{p}_+}_W(s)} \biggl(u_i(s') - u_i(s)\biggr)\right)^2}\notag\\
&\quad\quad + \sum_{k=0}^{\infty} \sqrt{\sum_{j=0}^{\infty}N_k(\mathcal{S}_{W_p}) \cdot \frac{1}{2^j}  \cdot \frac{D}{2^k}}\\
&\leq \sum_{W \in \mathcal{I}_+^u} \sqrt{\sum_{W_p \in \mathcal{I}^p_<}N_k(\mathcal{S}_{W_p}) \cdot \max_{s \in \mathcal{S}_{W_p}} \tilde{p}(\mathcal{S}^{\tilde{p}_+}_W(s)|s)   \left(\max_{s' \in \mathcal{S}^{\tilde{p}_+}_W(s)} \biggl(u_i(s') - u_i(s)\biggr)\right)^2}\\
&\quad\quad +  (2\sqrt{2}+2) \cdot \sqrt{ N_k D}\notag\\
&\leq \sum_{W \in \mathcal{I}_+^u} \sqrt{\sum_{W_p \in \mathcal{I}^p_<} 384 N_k \cdot   \left(2\lowerb{b}(W)\right)} +  (2\sqrt{2}+2) \cdot \sqrt{N_k D}\label{eq:actual_use_n-k-s}\\
&\leq \sum_{W \in \mathcal{I}_+^u} \sqrt{768 \cdot N_k \cdot \lowerb{b}(W) \cdot \min\{\log_2\paren*{D + 1}, S\}} + (2\sqrt{2}+2) \cdot \sqrt{N_k D}\label{eq:actual_apply_u_times_u}\\
&\leq \sqrt{N_k D \min\{\log_2\paren*{D + 1}, S\}} \sqrt{384} (2 + \sqrt{2}) + (2\sqrt{2}+2) \cdot \sqrt{N_k D}\\
&\leq \sqrt{N_k D} \cdot \paren*{\sqrt{\min\{\log_2\paren*{D + 1}, S\}} \cdot (2\sqrt{384} + \sqrt{768}) + 2\sqrt{2} + 2}\\
&\leq 72\sqrt{N_k D \min\{\log_2\paren*{D + 1}, S\}}\label{eq:sum_over_all_s_1}
\end{align}
\eqref{eq:actual_use_n-k-s} comes by applying Lemma \ref{ucrlv:lemma:num_groups}

\ucrlvproofstep{Bounding the sum of $\Delta_{k,1}^{\tilde{p}_+}(s)$ over all episodes and states.}

We have from Equation~\eqref{eq:sum_over_all_s_1}:
\begin{align}
\sum_{k=1}^{m} \sum_{s}\Delta_{k,1}^{\tilde{p}_+}(s)&\leq \sum_{k=1}^{m} 72 \sqrt{C_1^{\delta_p} N_k D \min\{\log_2\paren*{D + 1}, S\}}\\
&\leq 72 \cdot \sqrt{m} \sqrt{TC_1^{\delta_p}D \min\{\log_2\paren*{D + 1}, S\}}\label{eq:first_term}
\end{align}
\eqref{eq:first_term} is obtained by applying H\"older's inequality over $k$.

\ucrlvproofstep{Bounding and summing $\Delta_{k,2}^{\tilde{p}_+}(s)$ over all episodes and states.}

\begin{align}
\sum_{k=1}^{m} \sum_{s}\Delta_{k,2}^{\tilde{p}_+}(s)&=\sum_{k=1}^{m} \sum_{s} N_k(s)\sum_{W \in \mathcal{I}_+^u} \paren*{c_2(p(\mathcal{S}^{\tilde{p}_+}_W(s)|s),C_2^{\delta_p})} \max_{s' \in \mathcal{S}^{\tilde{p}_+}_W(s)} \biggl(u_i(s') - u_i(s)\biggr)\\
&= \sum_{k=1}^{m} \sum_{s} N_k(s)\sum_{W \in \mathcal{I}_+^u} \paren*{\frac{C_2^{\delta_p}}{N_{t_k}(s)}} \max_{s' \in \mathcal{S}^{\tilde{p}_+}_W(s)} \biggl(u_i(s') - u_i(s)\biggr)\label{eq:46}\\
&\leq \sum_{k=1}^{m} \sum_{s} \frac{C_2^{\delta_p} N_k(s)}{N_{t_k}(s)}\sum_{j=0}^{\infty} \frac{D}{2^j}\label{eq:47}\\
&\leq 2DC_2^{\delta_p}\sum_{k=1}^{m} \sum_{s} \frac{N_k(s)}{N_{t_k}(s)}\label{eq:48}\\
&\leq 2DmC_2^{\delta_p}\label{eq:second_term}
\end{align}

The lemma comes by combining 
\eqref{eq:second_term} and \eqref{eq:first_term} and multiplying by 2 for a similar analysis for the states with negative $u_i(s') - u_i(s)$.
\end{proof}

\begin{lemma}[Bounding the Martingale for Original MDP]
\label{lemma:delta_p}
		If the true model $M$ is within our plausible set $\mathcal{M}_k$ for each episode $k$ and the number of episodes is upper bounded by $m$, then, with probability at least $1-\delta'$:
	\[\sum_{k=1}^{m} \sum_{s} \tilde{\Delta}_{k}^{p}(s) \leq 157 \sqrt{ \min\{\log_2\paren*{D + 1}, S\} \cdot D \cdot S \cdot T \cdot \ln \frac{1}{\delta'}} + \frac{2}{3} D \ln \frac{1}{\delta'} + Dm\]
\end{lemma}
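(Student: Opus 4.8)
The plan is to recognise $\sum_{k=1}^m\sum_s \tilde\Delta_k^p(s)$ as a sum of one-step expected increments of the optimistic value function taken under the \emph{true} dynamics, and to control it by a Bernstein-type martingale inequality whose variance term is tamed by the same tools as in \Cref{lemma:delta_tilde}. First I would re-index the double sum by time. For a round $t$ inside episode $k(t)$ the greedy policy $\tilde\pi_{k(t)}$ is played, so the per-visit contribution is $\sum_{s'}p(s'|s_t)u_i(s')-u_i(s_t)=\E\left[u_i(s_{t+1})\mid\mathcal{F}_t\right]-u_i(s_t)$, where $u_i=u_i^{k(t)}$ is fixed throughout the episode and hence $\mathcal{F}_t$-measurable. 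Adding and subtracting $u_i(s_{t+1})$ splits the total into a martingale part $M_T=\sum_{t=1}^T\paren*{\E\left[u_i(s_{t+1})\mid\mathcal{F}_t\right]-u_i(s_{t+1})}$ and a telescoping part $\sum_t\paren*{u_i(s_{t+1})-u_i(s_t)}$. The telescoping part collapses, within each episode, to $u_i^{k}(s_{t_{k+1}})-u_i^{k}(s_{t_k})$, which is at most the span of $u_i^k$; since the optimistic MDP is communicating with diameter at most $D$, \Cref{lemma:opt_diameter} gives $\mathrm{span}(u_i^k)\le D$, so summing over the $m$ episodes produces the $Dm$ term.

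Next I would bound $M_T$. Its increments are martingale differences bounded in absolute value by $\mathrm{span}(u_i)\le D$, with conditional variance $\sigma_t^2=\Var_{s'\sim p(\cdot|s_t)}(u_i(s'))$. Applying the Bernstein/Freedman martingale inequality of~\citet{cesa2008improved} yields, with probability at least $1-\delta'$, $M_T\le\sqrt{2V\ln(1/\delta')}+\tfrac{2}{3}D\ln(1/\delta')$, where $V=\sum_t\sigma_t^2=\sum_{k}\sum_s N_k(s)\,\Var_{s'\sim p(\cdot|s)}(u_i(s'))$. The additive term already matches $\tfrac23 D\ln(1/\delta')$, so it remains to establish $V\le c\cdot\min\{\log_2(D+1),S\}\,D\,S\,T$ for a numerical constant $c$, which makes $\sqrt{2V\ln(1/\delta')}$ reproduce the leading term with constant $157$.

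The crux is this variance bound. I would start from $\Var_{s'\sim p(\cdot|s)}(u_i(s'))\le\sum_{s'}p(s'|s)\paren*{u_i(s')-u_i(s)}^2$ (second moment about the fixed point $u_i(s)$) and then reuse the decomposition of \Cref{lemma:delta_tilde}: partition the states $s'$ by $u_i(s')-u_i(s)$ into the geometric bins $\mathcal{I}^u_+\cup\mathcal{I}^u_-$, so that on a bin $W$ the factor $\paren*{u_i(s')-u_i(s)}^2$ is at most $\paren*{2\lowerb{b}(W)}^2$, reducing the variance to $\sum_W p(\mathcal{S}^u_W(s)|s)\paren*{2\lowerb{b}(W)}^2$. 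The delicate point, flagged in the proof overview, is that this carries the \emph{true} kernel $p$, whereas \Cref{lemma_probs_vs_values_sum} and \Cref{lemma_bound_on_visits} speak about the optimistic kernel $\tilde p$ and the optimistic visit counts $\tilde u^c_y$. Rather than pay a concentration penalty over all $2^S$ subsets, I would pass from $p$ to $\tilde p$ through optimism and the convergence of extended value iteration: on each high-value prefix the optimistic mass dominates the true mass, so every band probability is controlled by optimistic prefix probabilities. Feeding the resulting optimistic quantities into \Cref{lemma_probs_vs_values_sum} with $y=D$ gives $\tilde u^c_D\,\tilde p\,\abs*{u_i(s')-u_i(s)}\le D$, and combining with \Cref{lemma_bound_on_visits} converts one factor $\lowerb{b}(W)$ into $N_k$; the geometric sum over $W$ together with the grouping over the probability bins $\mathcal{I}^p$ supplies the $D\min\{\log_2(D+1),S\}$ factor, and $\sum_k N_k=T$ closes the estimate.

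I expect this last step to be the main obstacle. The martingale decomposition and Freedman's inequality are routine; the genuinely new work is bounding the accumulated conditional variance $V$ \emph{without} relating $p$ and $\tilde p$ through an extra concentration argument, which would reinstate the $2^S$ constraints and resurrect the $\sqrt D$ or $\sqrt S$ factors that the paper is trying to shed. Making the $p\to\tilde p$ transfer interact cleanly with the band/prefix structure, so that \Cref{lemma_probs_vs_values_sum} and \Cref{lemma_bound_on_visits} apply and only the advertised $\min\{\log_2(D+1),S\}\,D\,S\,T$ survives, is where the care must go.
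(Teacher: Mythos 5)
Your overall skeleton is exactly the paper's proof: re-indexing by time, splitting off the telescoping boundary terms whose per-episode contribution is bounded by the span of $u_i^k$ (at most $D$ by \Cref{lemma:opt_diameter}) to get the $Dm$ term, and then invoking the Bernstein-type martingale inequality of \citet{cesa2008improved} to get $\sqrt{2V\ln(1/\delta')}+\tfrac{2}{3}D\ln(1/\delta')$, with everything riding on a deterministic bound for the predictable quadratic variation $V$. The gap is in that variance bound, precisely where you said the care must go: the $p\to\tilde p$ transfer mechanism you propose does not work as stated.

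Two concrete failures. First, ``every band probability is controlled by optimistic prefix probabilities'' is a non sequitur: what optimism gives (\Cref{fact:opt_p}) is domination of \emph{prefix} sums $\sum_{j\le l}p(s'_j|s,a)\le\sum_{j\le l}\tilde p(s'_j|s,a)$ in the descending-$u_i$ order, and a band probability is a \emph{difference} of two prefix sums, which inherits no inequality. Worse, for the negative bins $\mathcal{I}^u_-$ no domination of any kind can hold: \textsc{OptimisticTransition} deliberately drains mass from low-value states (possibly to zero), so $p(\mathcal{S}^u_W(s)|s)$ with $W\in\mathcal{I}^u_-$ can vastly exceed every relevant $\tilde p$ quantity, while those states still contribute up to $D^2$ each to your $V$. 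Second, because you kept the square, even the paper's rescue is unavailable to you. The paper first \emph{linearizes}: using $(a+b)^2\le 2(a^2+b^2)$ and the span bound it turns the conditional second moment into $4D\sum_{s'}p(s'|s_t)\abs*{u_i(s_t)-u_i(s')}$ (first powers); then it restricts, at the price of a factor $2$, to states with nonnegative drift $\sum_{s'}p(s'|s)\paren*{u_i(s')-u_i(s)}\ge 0$, for which the negative-half first-power mass is dominated by the positive-half mass; and it transfers only the positive half to $\tilde p$ by Abel summation along the descending-$u_i$ order with the monotone weights $\paren*{u_i(s')-u_i(s)}_+$ --- this is \Cref{lemma:sum_p_vs_tilde}, resting on \Cref{lemma:opt_p_u}, and it needs no concentration at all. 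That sign argument is valid \emph{only} for first powers: take $p$-mass $\epsilon$ at difference $-D$ and mass $1-\epsilon$ at difference $\epsilon D/(1-\epsilon)$; the signed sum is zero, yet the negative-half squared mass is $\epsilon D^2$ while the positive-half squared mass is $\epsilon^2D^2/(1-\epsilon)$, off by a factor of order $1/\epsilon$. So your binned, squared decomposition cannot be pushed through the transfer; you must linearize \emph{before} introducing $\tilde p$, and only afterwards run the band/probability-bin bookkeeping (\Cref{ucrlv:lemma:num_groups}, hence \Cref{lemma_probs_vs_values_sum} and \Cref{lemma_bound_on_visits}), which then yields $V=O\paren*{DST\min\{\log_2(D+1),S\}}$ and the stated constants.
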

\begin{proof}
\textbf{Step 1: Reducing the sum to a martingale.} Consider the random variable $X_t = \sum_{s'}p(s'|s_t, a_t)u_i^k(s') -u_i^k(s_{t+1})$.
For clarity using $u_i^k$ to mean the value at \episode{} $k$, we have:
\begin{align}
	\sum_{k=1}^{m} \sum_{s} \tilde{\Delta}_{k}^{p}(s)&= \sum_{k=1}^{m} \sum_{s} N_k(s) \cdot \paren*{\sum_{s'}p(s'|s)u_i^k(s') -u_i^k(s)}\\
	&= \sum_{k=1}^{m} \sum_{t=t_k}^{t_{k+1}-1} \paren*{\sum_{s'}p(s'|s_t, a_t)u_i^k(s') -u_i^k(s_{t})}\\
	&= \sum_{k=1}^{m}\paren*{ \sum_{t=t_k}^{t_{k+1}-1} \paren*{\sum_{s'}p(s'|s_t, a_t)u_i^k(s') -u_i^k(s_{t+1})}}\notag\\
	&\quad\quad + \sum_{k=1}^{m}\paren*{\sum_{s'}p(s'|s_{t_{k+1}-1}, a_{t_{k+1}-1})u_i^k(s') - u_i^k(s_{t_k})}\\
	&= \sum_{k=1}^{m}\sum_{t=t_k}^{t_{k+1}-1} X_t + \sum_{k=1}^{m}\paren*{\sum_{s'}p(s'|s_{t_{k+1}-1}, a_{t_{k+1}-1})u_i^k(s') - u_i^k(s_{t_k})}\label{eq:53}\\
	&\leq \sum_{k=1}^{m}\sum_{t=t_k}^{t_{k+1}-1} X_t + Dm\label{eq:54}
\end{align}
\eqref{eq:54} is due to Lemma \ref{lemma:opt_diameter} and the fact that the average of a set of real numbers is less than their maximum.

\paragraph{Step 2: Proving the conditional expectation of $X_t$ is $0$.}
\begin{align*}
	\E \braket*{X_t \mid s_1, a_1, \ldots s_t, a_t} &= \E \left[\sum_{s'}p(s'|s_t, a_t)u_i^k(s') -u_i^k(s_{t+1})\right]\\
	&= \E \left[\sum_{s'}p(s'|s_t, a_t)u_i^k(s') \mid s_t, a_t\right] -\E \left[u_i^k(s_{t+1}) \mid s_t, a_t\right]\\
	&= \sum_{s'}p(s'|s_t, a_t)u_i^k(s') -\sum_{s'}p(s'|s_t, a_t)u_i^k(s') = 0
\end{align*}

\paragraph{Step 3: Proving the sum of conditional expectation of $X_t^2$ is upper bounded.}
The idea is to use the Bernstein inequalities for martingales (Lemma 1 in \cite{cesa2008improved}). For that we need to bound $\E \braket*{X^2_t \mid s_1, a_1, \ldots s_t, a_t}$. To shorten notation, let's write $\E \braket*{X^2_t \mid \ldots}$ for $\E \braket*{X^2_t \mid s_1, a_1, \ldots s_t, a_t}$.
%
\begin{align}
	\sum_{t=t_k}^{t_{k+1}}  \E \braket*{X^2_t \mid \ldots} &= \sum_{t=t_k}^{t_{k+1}} \E \braket*{X^2_t \mid s_t, a_t} \\
	&= \sum_{t=t_k}^{t_{k+1}} \sum_{s} p(s | s_t, a_t) \paren*{\sum_{s'}p(s'|s_t, a_t)u_i(s') -u_i(s)}^2\\
	&= \sum_{t=t_k}^{t_{k+1}} \sum_{s} p(s | s_t) \paren*{\sum_{s'}p(s'|s_t)\paren*{u_i(s')-u_i(s_t)} +u_i(s_t)-u_i(s)}^2\\
	&\leq 2\sum_{t=t_k}^{t_{k+1}} \sum_{s} p(s | s_t) \paren*{\paren*{\sum_{s'}p(s'|s_t)\paren*{u_i(s')-u_i(s_t)}}^2 +\paren*{u_i(s_t)-u_i(s)}^2}\label{a_plus_b_squared}\\
	&\leq 2D\sum_{t=t_k}^{t_{k+1}} \sum_{s} p(s | s_t) \paren*{\abs*{\sum_{s'}p(s'|s_t)\paren*{u_i(s')-u_i(s_t)}} +\abs*{u_i(s_t)-u_i(s)}}\label{eq:60}\\
	&= 2D\sum_{t=t_k}^{t_{k+1}}  \paren*{\abs*{\sum_{s'}p(s'|s_t)\paren*{u_i(s')-u_i(s_t)}} +\sum_{s} p(s | s_t)\abs*{u_i(s_t)-u_i(s)}}\label{eq:61}\\
	&\leq 4D\sum_{t=t_k}^{t_{k+1}} \sum_{s'} p(s' | s_t)\abs*{u_i(s_t)-u_i(s)}\label{sum_p_sum_abs}\\
	&= 4D\sum_{s}N_k(s)\sum_{s'} p(s' | s)\abs*{u_i(s)-u_i(s')}\\
	&\leq 8D\sum_{s: \tilde{\Delta}_k^p(s) \geq 0}N_k(s)\sum_{s'} p(s' | s)\abs*{u_i(s)-u_i(s')}\label{sum_given_delta_positive}\\
	&\leq 16D\sum_{s: \tilde{\Delta}_k^p(s) \geq 0}N_k(s)\sum_{s': u_i(s')-u_i(s) \geq 0} \tilde{p}(s' | s)\abs*{u_i(s)-u_i(s')}\label{ucrlv_p_tilde_introduced}\\
	&\leq 16D \sum_s N_k(s) \sum_{W \in \mathcal{I}^u_+}  \tilde{p}(\mathcal{S}^{\tilde{p}_+}_W(s)|s) \cdot   \abs*{\max_{s' \in \mathcal{S}^{\tilde{p}_+}_W(s)} \biggl(u_i(s') - u_i(s)\biggr)}\\
	&\leq 24576 \cdot N_k \cdot D \cdot \min\{\log^2_2\paren*{D+1}, S^2\}\label{eq:log_martingal}\\
	&\leq 24576 \cdot N_k \cdot DS \cdot \min\{\log_2\paren*{D + 1}, S\}\label{eq:martingal}
\end{align}


\eqref{a_plus_b_squared} comes from the fact that for any two real numbers $a,b$: $(a+b)^2 \leq 2(a^2 + b^2)$

\eqref{eq:60} is due to Lemma \ref{lemma:opt_diameter}

\eqref{eq:61} comes from the fact that $\sum_s p(s|s_t) = 1$ and \eqref{sum_p_sum_abs} for any set of real numbers $a_j$, $\abs*{\sum_j a_j} \leq \sum_j \abs*{a_j}$

\eqref{sum_given_delta_positive} uses the fact that in episode $k$, if $\sum_s \tilde{\Delta}_k^p(s) < 0$, then we have a trivial bound. So we can assume $\sum_s \tilde{\Delta}_k^p(s) \geq 0$. Now for any set of real numbers $a_j$ with $X = \sum_j a_j \geq 0$, we have $\sum_{j: a_j \geq 0} \abs{a_j} + \sum_{j: a_j < 0} \abs{a_j} \leq 2 \sum_{j: a_j \geq 0} \abs{a_j}$

\eqref{ucrlv_p_tilde_introduced} comes by applying \Cref{lemma:sum_p_vs_tilde}.

\eqref{eq:log_martingal} comes similarly to the derivations following \eqref{eq:avoid_logD}. However here we obtain $\log^2_2 D$ since the difference $u_i$ is not "squared".

We can sum this over all episodes. So we have:

\begin{align}
	\sum_{k=1}^{m}\sum_{t=t_k}^{t_{k+1}} \E \braket*{X^2_t \mid s_1, a_1, \ldots, s_t, a_t}&= \sum_{k=1}^{m} 24576 \cdot N_k \cdot DS \cdot \min\{\log_2\paren*{D + 1}, S\}\\
	&= 24576 \cdot DS \cdot T \min\{\log_2\paren*{D + 1}, S\}\label{eq:69}
\end{align}

\paragraph{Step 4: Proving the martingale concentration bound.}
Plugging \eqref{eq:69} into Lemma 1 in \citet{cesa2008improved} and using the Inequality reverse Lemma (Lemma 1 in \citet{peel2010empirical}). We can conclude that with probability at least $1-\delta'$:

\[\sum_{k=1}^{m} \sum_{s} \tilde{\Delta}_{k}^{p}(s) \leq 157 \sqrt{ \min\{\log_2\paren*{D + 1}, S\} \cdot D \cdot S \cdot T \cdot \ln \frac{1}{\delta'}} + \frac{2}{3} D \ln \frac{1}{\delta'} + Dm \]

\subsection{Probability of failing confidence interval}
\label{sec:probability_failure}

\paragraph{Proving high probability of \ref{assumption:plausible_set}}
We first prove that \ref{assumption:plausible_set} holds with high probability for a fixed episode $k$.

Note that the set $\mathcal{M}_k$ by definition has been constructed for a given state $(s,a)$ using at most $S$ constraints on the transitions and $1$ constraint on the rewards. The remaining conditions needs at most $S+1$ event to holds. So A union bound over all state-actionspair lead to a union bound over at most $2SA (S+1)$ events.

\todo[inline]{You can remove a factor of $A$ in the union bound by considering $\mathcal{M}_k$ defined not for all $a$ but for the action of an optimal deterministic policy in the unknown true MDP M. \ref{assumption:optimistic} will still hold even in this case. You can do that but it won't really remove a factor of A since you still need the action played by the policy to hold; and that action may not be fixed over different episodes; so need to make all actions hold.}

\paragraph{Proving high probability of \ref{assumption:optimistic}}

We prove this for a fixed episode $k$. 

First observe that the set of MDP $\mathcal{M}'_k$ constructed using all $2^S$ constraints contains a communicating MDP. This is because for any two-pairs of states state $s,s'$, there always exists an extended action with non-zero probability from $s$ to $s'$. So the extended value iteration will converge (after a finite number of iterations) and at convergence, we have an $\epsilon$-optimal policy $\tilde{\pi}_k$ for the extended MDP constructed using $\mathcal{M}'_k$ (Theorem \ref{thm:evi_convergence}). Let $u'_i$ the value at the convergent iteration. By definition this means that the span of $u'_{i+1}-u'_i$ is less than $\epsilon$.

We will now show that $\tilde{\pi}_k$ is also an $\epsilon$-optimal policy for $\mathcal{M}_k$. To find an  $\epsilon$-optimal policy for $\mathcal{M}_k$, we can again use extended value iteration. Let's set the initial value $u_0$ to $u'_i$; so $u_0 = u'_i$. We can confirm that $u_1$ will be exactly equal to $u'_{i+1}$. So the span of $u_1-u_0$ is less than $\epsilon$ and the policy  $\tilde{\pi}_k$ is thus $\epsilon$-optimal for the extended MDP constructed using $\mathcal{M}_k$.

We had already shown (Proof of high probability for \ref{assumption:plausible_set}) that with high probability $M \in \mathcal{M}_k$. This with the fact that $\tilde{\pi}_k$ is $\epsilon$-optimal for the extended MDP constructed using $\mathcal{M}_k$ leads directly to the high probability of \ref{assumption:optimistic} and furthermore $\max_s u'_{i} - \min_s u'_i(s)  \leq D$ (Theorem \ref{thm:evi_convergence}).


\paragraph{Probability over all episodes}
The probability of failing over all episodes is derived from \cite{ucbv}(Theorem 1) and to avoid the need of knowing the horizon $T$ for scaling the confidence intervals, we compute the failure probability starting from the episode where $t \geq \sqrt{T}$ (inducing at most an extra $\sqrt{T}$ in regret).

\todo[inline]{For the rewards , you can do better than Audibert theorem and remove the additional $log T$ required. Just look at the min. However, for the probability, you can't really use Audibert result or the min trick since the subsets that needs to hold may not be the same across episodes. So better just sum over all episodes?}
\end{proof}

%
%
%
%
%
%
%

\section{Linking the Number of Visits of a State in an MDP to the Value of a Policy}

We begin by proving Lemma~\ref{lemma_probs_vs_values_sum} that is fundamental to decrease a $\sqrt{D}$ factor in the final result.
\begin{replemma}{lemma_probs_vs_values_sum}
	Let $\mathcal{S}_0$ and $\mathcal{S}_1$ any two non empty subset of states. Let $s \in \mathcal{S}_0$. We have:
	
	\[ \tilde{u}^c_y(\mathcal{S}_0 | s) \cdot \abs*{ \min_{s' \in \mathcal{S}_1}u_i(s')-u_i(s)} \min_{s' \in \mathcal{S}_0} \tilde{p}(\mathcal{S}_1 | s') \leq y   \]
	
	where $\tilde{u}^c_y(\mathcal{S}_0 | s)$ represents the total expected number of time the optimistic policy $\tilde{\pi}_k$ visits the states $s' \in \mathcal{S}_0$ when starting from state $s$ and playing for $y$ steps in the optimistic MDP $\tilde{M}_k$. $y = \min\{x, D\}$ with $x$ being the number of \rounds{} you need to play, when starting from $s$, to visit any state in $\mathcal{S}_0$ for $\frac{1}{\min_{s' \in \mathcal{S}_0} \tilde{p}(\mathcal{S}_1 | s')}$ times in expectation.
\end{replemma}

\begin{proof}
	\textbf{Part 1: Proving~\eqref{eq:direction_1}.}  We begin by proving the following direction of the statement of lemma~\ref{lemma_probs_vs_values_sum} :
	\begin{align}
	\tilde{u}^c_y(\mathcal{S}_0 | s) \cdot \paren*{ \min_{s' \in \mathcal{S}_1}u_i(s')-u_i(s)} \min_{s' \in \mathcal{S}_0} \tilde{p}(\mathcal{S}_1 | s') \leq y \label{eq:direction_1}
	\end{align}
	\textbf{Case 1: $\min_{s' \in \mathcal{S}_1}u_i(s')-u_i(s) \leq 0$.}
	If $\min_{s' \in \mathcal{S}_1}u_i(s')-u_i(s) \leq 0$, \eqref{eq:direction_1} trivially holds since $y \geq 0$.
	
	\textbf{Case 2: $\min_{s' \in \mathcal{S}_1}u_i(s')-u_i(s) > 0$.}
	$\tilde{u}^c_x(s)$ represents \emph{the total expected number of time policy $\tilde{\pi}_k$ visits state $s$ when starting from $s$ and playing for $x$-steps}.	
	By definition of $x$, we get \begin{align}\label{eqn:def_ucx}
	\tilde{u}^c_x(\mathcal{S}_0 | s) = \frac{1}{\min_{s'' \in \mathcal{S}_0} \tilde{p}(\mathcal{S}_1 | s'')}.
	\end{align}
	
	Now, we compute a lower bound on the expected number of times, $\EX z$, a policy reach at least one state in $\mathcal{S}_1$ when starting from $s$ and playing for $x$ \rounds{} in the optimistic MDP.
	\begin{align}
	\EX z &= \sum_{ s' \in \mathcal{S}_0  } \tilde{u}^c_x(s' | s) \tilde{p}(\mathcal{S}_1 | s')\\
	&= \sum_{ s' \in \mathcal{S}_0  } \frac{\tilde{u}^c_x(s' | s)}{\tilde{u}^c_x(\mathcal{S}_0 | s)} \tilde{p}(\mathcal{S}_1 | s')\tilde{u}^c_x(\mathcal{S}_0 | s)\\
	&= \sum_{ s' \in \mathcal{S}_0  } \frac{\tilde{u}^c_x(s' | s)}{\tilde{u}^c_x(\mathcal{S}_0 | s)}\frac{\tilde{p}(\mathcal{S}_1 | s')}{\min_{s'' \in \mathcal{S}_0} \tilde{p}(\mathcal{S}_1 | s'')}\\
	&\geq \sum_{ s' \in \mathcal{S}_0  } \frac{\tilde{u}^c_x(s' | s)}{\tilde{u}^c_x(\mathcal{S}_0 | s)}\\
	&= 1\label{y_is_1}
	\end{align}
	
	Let us denote \emph{the total expected $i$-step reward when starting from state $s$ and following policy $\tilde{\pi}_k$} as $u_i(s)$. 
	
	Fix any give state $s$ and a set of states $\mathcal{S}_1$. If $\ell$ is the expected number of steps that $\tilde{\pi}_k$ takes to reach a state in $\mathcal{S}_1$ from $s$ then: $$u_i(s) \geq \min_{s' \in \mathcal{S}_1} u_i(s') - \ell.$$ 
	Since for the first $\ell$ steps, we have lost at most $\ell$ rewards compared to the state with minimum value in $\mathcal{S}_1$. Using this fact with \eqref{y_is_1} and the definition of $x$, we have:
	\[ u_i(s) \geq  \min_{s' \in \mathcal{S}_1} u_i(s') - x \]
	which can be equivalently written as
	
	\begin{align}
	\min_{s' \in\mathcal{S}_1} u_i(s') -u_i(s) \leq x \label{almost_step}
	\end{align}

	By definition of $\tilde{u}^c_x(\mathcal{S}_0 | s)$ in \eqref{eqn:def_ucx}, we get  $x = \frac{x}{  \tilde{u}^c_x(\mathcal{S}_0 | s) \cdot \min_{s'' \in \mathcal{S}_0} \tilde{p}(\mathcal{S}_1 | s'')}$.
	
	Plugging this into \eqref{almost_step}, we get
	
	\[ \min_{s' \in\mathcal{S}_1} u_i(s') -u_i(s) \leq \frac{x}{  \tilde{u}^c_x(\mathcal{S}_0 | s) \cdot \min_{s'' \in \mathcal{S}_0} \tilde{p}(\mathcal{S}_1 | s'')}\]
	
	Since by assumption, $\min_{s' \in\mathcal{S}_1} u_i(s') -u_i(s) > 0$, we have:
	
	\begin{align}
	\paren*{\min_{s' \in\mathcal{S}_1} u_i(s') -u_i(s)} \cdot \min_{s'' \in \mathcal{S}_0} \tilde{p}(\mathcal{S}_1 | s'') \leq \frac{x}{ \tilde{u}^c_x(\mathcal{S}_0 | s)} \label{eq:need_sign}
	\end{align}

	Now there are two cases $x \leq D$ or $x > D$. We treat each one separately.
	
	\textbf{Case 2.1: $x \leq D$.}
	Then \eqref{eq:direction_1} comes directly from \eqref{eq:need_sign}.
	%
	
	
	
	\textbf{Case 2.2:  $x > D$.}
	The condition means that the expected number of visits will satisfy $\tilde{u}^c_D(\mathcal{S}_0 | s) \leq \tilde{u}^c_x(\mathcal{S}_0 | s)$. By definition of $\tilde{u}^c_x(\mathcal{S}_0 | s)$ in \eqref{eqn:def_ucx}, we obtain $\min_{s'' \in \mathcal{S}_0} \tilde{p}(\mathcal{S}_1 | s'') \leq \frac{1}{\tilde{u}^c_D(\mathcal{S}_0 | s)}.$
	
	Since $\min_{s' \in \mathcal{S}_1}u_i(s')-u_i(s)$ is positive, we have
	\begin{align}
	\tilde{u}^c_D(\mathcal{S}_0 | s) \cdot \paren*{ \min_{s' \in \mathcal{S}_1}u_i(s')-u_i(s)} \min_{s' \in \mathcal{S}_0} \tilde{p}(\mathcal{S}_1 | s') &\leq \tilde{u}^c_D(\mathcal{S}_0 | s) \cdot \paren*{ \min_{s' \in \mathcal{S}_1}u_i(s')-u_i(s)} \frac{1}{\tilde{u}^c_D(\mathcal{S}_0 | s)}\\
	&\leq  \min_{s' \in \mathcal{S}_1}u_i(s')-u_i(s)\\
	&\leq D \label{eq:more_time_round}
	\end{align}
	
	Hence, Part 1 (Equation~\eqref{eq:direction_1}) of lemma \ref{lemma_probs_vs_values_sum} holds true.

	\textbf{Part 2: Proving~\eqref{eq:direction_2}.} We now prove the other direction of lemma \ref{lemma_probs_vs_values_sum}.
	\begin{align}
	\tilde{u}^c_D(\mathcal{S}_0 | s) \cdot \paren*{ \min_{s' \in \mathcal{S}_1}u_i(s')-u_i(s)} \min_{s' \in \mathcal{S}_0} \tilde{p}(\mathcal{S}_1 | s') \geq -D \label{eq:direction_2}
	\end{align}
	Proof of \eqref{eq:direction_2} follows the exact steps as the one for \eqref{eq:direction_1} while accommodating the following changes:
	\begin{itemize}
		\item[i.] For any given state $s$ and a set of states $\mathcal{S}_1$, if we start from $s$ and can reach at least $s' \in \mathcal{S}_1$ for the first time after an expected $\ell$ steps; then $$u_i(s) \leq \max_{s' \in \mathcal{S}_1}u_i(s') + \ell. $$ 
		Since for the first $\ell$ steps, $s$ have gain at most $\ell$ rewards and the first state $s' \in \mathcal{S}_1$ transited to could be the one with maximum value.
		\item[ii.] Also $\max_{s' \in \mathcal{S}_1}u_i(s') = - \min_{s'  \in \mathcal{S}_1} -u_i(s')$.
	\end{itemize} 
	
	Parts 1 and 2 together complete the proof of Lemma \ref{lemma_probs_vs_values_sum}.
\end{proof}

Lemma~\ref{lemma_bound_on_visits} provides a bound on the number of visits to a subset of states in a given episode $k$.
\begin{replemma}{lemma_bound_on_visits}
Let $\mathcal{S}_0$ any subset of states, $k$ any episode in which the true MDP is inside the plausible set $\mathcal{M}_k$ such that $N_k \geq 32 \cdot y \cdot \max\{C_1^{\delta_p}, C_2^{\delta_p}\}$ for any $y$ . We have with probability at least $1-S\delta_p$:

\[N_k(\mathcal{S}_0) \leq  12 \frac{N_k}{y} \max_{s \in \mathcal{S}}\tilde{u}^c_y(\mathcal{S}_0;s)\]
where $\tilde{u}^c_y(\mathcal{S}_0;s)$ is the expected number of times the states in $\mathcal{S}_0$ are played by policy $\tilde{\pi}_k$ in $\tilde{M}_k$ after $y$ steps from an initial state $s$.
\end{replemma}

\begin{proof}
%
%

We first provide the proof when $\mathcal{S}_0$ is a single state $s$. We will extend later to any subset.

Let $\bar{u}^c_y(s'';s')$ the expected number of times $s''$ is played by policy $\tilde{\pi}_k$ in $\tilde{M}_k$ after $y$ steps from an initial state $s'$. Define $\bar{u}^c_y(s) = N_k(s) \frac{y}{N_k}$

We will now compare $\bar{u}^c_{y}(s)$ and $\tilde{u}^c_{y}(s; s)$

We can compute a bound  for $\bar{u}^c_{y}(s; s)$ by counting the states that come immediately before $s$. In particular, the number of times we reach $s$ from any $s'$ will be upper bounded by $\bar{u}^c_{y}(s'; s) \bar{p}(s|s')$ with $\bar{p}(s|s') = \frac{N_k(s | s')}{N_k(s')}$ where $N_k(s | s')$ is the number of times $s$ is played immediately after $s'$ in episode $k$. Each time we reach $s$, we will continue playing $s$ at most $\frac{1}{1-\bar{p}(s|s)}$ times. So we have:
\begin{align}
\bar{u}^c_{y}(s) &\leq \min \curly*{y,\frac{1 + \sum_{s' \ne s}\bar{u}^c_{y}(s') \bar{p}(s|s')}{1-\bar{p}(s|s)}}\label{eq:ubar}
\end{align}

\todo[inline]{Don't use $\bar{p}$ notation as it is confusing. It is not the same thing, rather the $\bar{p}$ only for the current episode and not up to the current episode.}

Similarly, we can conclude that:

\begin{align}
\tilde{u}^c_{y}(s; s) &\geq \min\curly*{y, \frac{1+ \sum_{s' \ne s}\tilde{u}^c_{y}(s'; s) \tilde{p}(s|s')}{1-\tilde{p}(s|s)}}\label{eq:utilde}
\end{align}

We would like to express $\bar{u}^c_{y}(s)$ of \eqref{eq:ubar} in term of $\tilde{u}^c_{y}(s; s)$ of \eqref{eq:utilde}. 

\paragraph{Step 1}
First let's bound the denominator of \eqref{eq:ubar} in term of the denominator of \eqref{eq:utilde}.

Case 1: First note that for any state $s$ for which $N_k(s) \leq C_1 \max\{C_1^{\delta_p}, C_2^{\delta_p}\} \tilde{u}^c_y(s;s)$,

 we have $N_k(s) \leq \frac{C_1}{C_N} \frac{N_k}{y} \tilde{u}^c_y(s;s)$.

Case 2: $\tilde{p}(\mathcal{S} \setminus s | s) < \frac{1}{2\tilde{u}^c_y(s;s)}$

Note that if $2\tilde{u}^c_y(s;s) \geq y$, this case becomes impossible since (using \eqref{eq:utilde}) it leads to $\tilde{u}^c_y(s;s) \geq 2 \tilde{u}^c_y(s;s)$. Otherwise, we have $\tilde{u}^c_y(s;s) = y$ and $N_k(s) \leq \frac{N_k}{y} \tilde{u}^c_y(s;s)$.

Case 3: $N_k(s) \geq C_1 \max\{C_1^{\delta_p}, C_2^{\delta_p}\} \tilde{u}^c_y(s;s)$ and $\tilde{p}(\mathcal{S} \setminus s | s) \geq \frac{1}{2\tilde{u}^c_y(s;s)}$,

we have
$1 - \bar{p}(s|s) = \bar{p}(\mathcal{S} \setminus s | s)$

With probability at least $1-\delta_p$, we have:
$\bar{p}(\mathcal{S} \setminus s | s) \geq \tilde{p}(\mathcal{S} \setminus s | s) - 2\sqrt{\frac{C_1^{\delta_p}\bar{p}(\mathcal{S} \setminus s | s)}{N_k(s)}} - 2 \frac{C_2^{\delta_p}}{N_k(s)}$.

Using the bound on $N_k(s)$ by assumption, we have:
$\bar{p}(\mathcal{S} \setminus s | s) \geq \tilde{p}(\mathcal{S} \setminus s | s) - \sqrt{4\frac{\bar{p}(\mathcal{S} \setminus s | s)}{C_1 \tilde{u}^c_y(s;s)}} - \frac{2}{C_1 \tilde{u}^c_y(s;s)}$

Solving the corresponding degree 2 polynomial and Using the bound on $\tilde{p}(\mathcal{S} \setminus s | s)$ by assumption, we have

$\bar{p}(\mathcal{S} \setminus s | s) \geq (1-\frac{4}{C_1} -\sqrt{\frac{8}{C_1}})\tilde{p}(\mathcal{S} \setminus s | s)$

Letting $\alpha_0 = 1-\frac{4}{C_1} -\sqrt{\frac{8}{C_1}}$, we then have:

$\bar{p}(\mathcal{S} \setminus s | s) \geq \alpha_0\tilde{p}(\mathcal{S} \setminus s | s)$

\paragraph{Step 2}
We now replace similarly $\bar{p}$ by $\tilde{p}$ for numerator of the summation in \eqref{eq:ubar}. Then we conclude by replacing $\bar{u}$ by $\tilde{u}$ using an induction proof. The extension to multiple states for $\mathcal{S}_0$ follows by summing up for each state in $\mathcal{S}_0$ and picking $C_1 = C_N = 32$ leads to the statement of the Lemma.

\mytd{We will now bound the term inside the summation in the numerator of \eqref{eq:ubar}.

We have with probability at least $1-\delta_p$:

$\bar{u}^c_y(s') \bar{p}(s|s') \leq \bar{u}^c_y(s') \cdot \paren*{\tilde{p}(s|s')  + 2\sqrt{\frac{C_1^{\delta_p}\bar{p}(s | s')}{N_k(s')}} + 2 \frac{C_2^{\delta_p}}{N_k(s')}}$

And

$\bar{u}^c_y(s') \bar{p}(s|s') \leq  \bar{u}^c_y(s')\tilde{p}(s|s')  + 2\sqrt{\bar{u}^c_y(s')\bar{p}(s | s')}\sqrt{\frac{\bar{u}^c_y(s')C_1^{\delta_p}}{N_k(s')}} + 2 \frac{\bar{u}^c_y(s')C_2^{\delta_p}}{N_k(s')}$

Replacing $\bar{u}^c_y(s')$ in the terms $\frac{\bar{u}^c_y(s')}{N_k(s')}$ and then the resulting $N_k$ by its lower bound, leads to

$\bar{u}^c_y(s') \bar{p}(s|s') \leq  \bar{u}^c_y(s')\tilde{p}(s|s')  + \sqrt{4\frac{\bar{u}^c_y(s')\bar{p}(s | s')}{C_N}} +  \frac{2}{C_N}$

Solving the resulting polynomial of degree 2 in the term $\sqrt{\bar{u}^c_y(s') \bar{p}(s|s')}$, we have:

$\bar{u}^c_y(s') \bar{p}(s|s') \leq \bar{u}^c_y(s')\tilde{p}(s|s') + \sqrt{\bar{u}^c_y(s')\tilde{p}(s|s')}\paren*{\frac{6}{C_N} + \sqrt{\frac{4}{C_N}}} + \paren*{\frac{3}{C_N} + \sqrt{\frac{1}{C_N}}}^2$.

Using $\sqrt{\bar{u}^c_y(s')\tilde{p}(s|s')} \leq \bar{u}^c_y(s')\tilde{p}(s|s') + 1$ we have

$\bar{u}^c_y(s') \bar{p}(s|s') \leq \bar{u}^c_y(s')\tilde{p}(s|s') \paren*{1 + \frac{6}{C_N} + \sqrt{\frac{4}{C_N}} } + \frac{6}{C_N} + \sqrt{\frac{4}{C_N}} + \paren*{\frac{3}{C_N} + \sqrt{\frac{1}{C_N}}}^2$

From here we have two cases:

Case 1: $\sum_{s'} \bar{u}^c_y(s') \bar{p}(s|s') \geq 1$

Letting $\alpha_1  = 1 + \frac{6}{C_N} + \sqrt{\frac{4}{C_N}} $ and $\alpha_2 = \frac{6}{C_N} + \sqrt{\frac{4}{C_N}} + \paren*{\frac{3}{C_N} + \sqrt{\frac{1}{C_N}}}^2$

We have:

\begin{align}
\sum_{s'} \bar{u}^c_y(s') \bar{p}(s|s') &\leq \sum_{s'} \bar{u}^c_y(s')\tilde{p}(s|s') \alpha_1 + \sum_{s'}\alpha_2\\
&\leq \sum_{s'} \bar{u}^c_y(s')\tilde{p}(s|s') \alpha_1 + \alpha_2 \sum_{s'} \bar{u}^c_y(s') \bar{p}(s|s')
\end{align}

And a result:

$\sum_{s'} \bar{u}^c_y(s') \bar{p}(s|s') \leq \sum_{s'} \bar{u}^c_y(s')\tilde{p}(s|s') \frac{\alpha_1}{1-\alpha_2} $

Case 2: This is simply the case where

$\sum_{s'} \bar{u}^c_y(s') \bar{p}(s|s') \leq 1$.

Combining those two cases we can conclude that

\[ \sum_{s'} \bar{u}^c_y(s') \bar{p}(s|s') \leq 1 + \sum_{s'} \bar{u}^c_y(s')\tilde{p}(s|s') \frac{\alpha_1}{1-\alpha_2}\]

\paragraph{Step 3}

Combining the previous two steps into \eqref{eq:ubar} and letting $\alpha_3 = \frac{\alpha_1}{1-\alpha_2}$, we have:

\begin{align}
\bar{u}^c_y(s) \leq  \min\curly*{y, \frac{2+ \sum_{s' \ne s}\bar{u}^c_y(s') \tilde{p}(s|s') \alpha_3}{\alpha_0 \paren*{1-\tilde{p}(s|s)}}}\label{eq:ubar_uplus}
\end{align}

\paragraph{Step 4} Replacing $\bar{u}^c_y(s')$ by $\tilde{u}^c_{y}(s'; s)$ by strong induction proof

We will perform induction on the number of states $S$

\paragraph{Hypothesis:} For any number of states $S$ and any state $s$, we have \[\bar{u}^c_y(s) \leq  \min\curly*{y, \frac{2+ \sum_{s' \ne s}\tilde{u}^c_{y}(s'; s) \tilde{p}(s|s') \alpha_3}{\alpha_0 \paren*{1-\tilde{p}(s|s)}}}\] where $\alpha_3, \alpha_0$ are some constants with $\alpha_3 \geq 1$ and $\alpha_0 \leq 1$

\paragraph{Base Case:} For $S = 2$, the hypothesis is correct. 
Consider an arbitrary $s$ and denote $\notstate$ the second state.

Condition 1:  $\bar{u}^c_y(s) \leq \tilde{u}^c_{y}(s'; s)$.
For this condition, the base case is immediately correct.

Condition 2: $\bar{u}^c_y(s) \geq \tilde{u}^c_{y}(s'; s)$

Since $\bar{u}^c_y(s) + \bar{u}^c_y(\notstate) = \tilde{u}^c_{y}(s; s) + \tilde{u}^c_{y}(\notstate; s) = y$.
So, this condition means $\bar{u}^c_y(\notstate) \leq \tilde{u}^c_{y}(\notstate; s)$. Using this relation in \eqref{eq:ubar_uplus} leads directly to the hypothesis.

\paragraph{Inductive Step}
Assume the hypothesis is true for every integers in $[2,S-1]$ for $S \geq 3$ and let's show it must also be true for $S$.

Consider an MDP $M$ with $S$ states. Let $s_0$ an arbitrary state. We will show that the hypothesis holds for $s_0$. The basic idea of the proof is that we want to construct a new MDP $M'$ with $S-1$ states by merging two states in the original MDP $M$. The new MDP $M'$ will have the property that the value of $\bar{u}^c_y(s_0)$ in $M'$ is unchanged and the value of $\tilde{u}^c_y(s_0; s_0)$ in $M'$ is much lower than in the original MDP $M$. This will allow us to upper bound the difference in the original  MDP by the difference in the new MDP.

Let $s_1$ the states with the best value (after running extended value iteration). By construction of the extended value iteration, we can conclude that $\bar{u}^c_y(s_1) \leq \tilde{u}^c_y(s_1; s_1)$ (since the optimal policy tries to go to $s_1$ as fast as possible while staying in $s_1$ as long as possible.)

Let's $\mathcal{S}_0 =  \{s_0\} \cup \{s_1\}$. Consider an MDP $M'$ where $\mathcal{S}_0$ forms a single state (And the distribution of play between the states in $\mathcal{S}_0$ follows our empirical observation during the episode). Let's denote $p(s| \mathcal{S}_0), \tilde{p}(s | \mathcal{S}_0)$ respectively the transition from the new super-state $\mathcal{S}_0$ to any other states in the true and optimistic MDP.

For any subset of next states $\mathcal{S}_1$ (it can't include separately $s_0$ or $s_1$) Let's bound the error:
$p(\mathcal{S}_1| \mathcal{S}_0) - \tilde{p}(\mathcal{S}_1 | \mathcal{S}_0)$. 

We have:

	\begin{align}
	p(\mathcal{S}_1| \mathcal{S}_0) - \tilde{p}(\mathcal{S}_1 | \mathcal{S}_0) &= \sum_{ s' \in \mathcal{S}_0  } \frac{N_k(s')}{\sum_{s'' \in \mathcal{S}_0} N_k(s'')} \paren*{p(\mathcal{S}_1| s') - \tilde{p}(\mathcal{S}_1 | s')} \\
	&\leq \sum_{ s' \in \mathcal{S}_0  } \frac{N_k(s')}{\sum_{s'' \in \mathcal{S}_0} N_k(s'')} \paren*{\sqrt{\frac{C_1^{\delta_p} p(\mathcal{S}_1| s')}{N_{t_k}(s')}} + \frac{C_2^{\delta_p}}{N_{t_k}(s')}} \\
	&= \frac{C_2^{\delta_p}}{N_k(\mathcal{S}_0)} \sum_{ s' \in \mathcal{S}_0} \frac{N_k(s')}{N_{t_k}(s')} + \sqrt{\frac{C_1^{\delta_p}}{N_k(\mathcal{S}_0)}}\sum_{ s' \in \mathcal{S}_0  } \sqrt{\frac{N_k(s')}{N_{t_k}(s')}} \sqrt{\frac{N_k(s')p(\mathcal{S}_1| s')}{N_k(\mathcal{S}_0)}} \\
	&\leq \frac{C_2^{\delta_p}}{N_k(\mathcal{S}_0)} + \sqrt{\frac{C_1^{\delta_p}}{N_k(\mathcal{S}_0)}} \sqrt{\sum_{ s' \in \mathcal{S}_0  } \frac{N_k(s')}{N_{t_k}(s')}} \sqrt{p(\mathcal{S}_1 | \mathcal{S}_0)}\\
	&\leq \frac{C_2^{\delta_p}}{N_k(\mathcal{S}_0)} + \sqrt{\frac{C_1^{\delta_p} p(\mathcal{S}_1 | \mathcal{S}_0)}{N_k(\mathcal{S}_0)}}
	\end{align}
As we can observe the upper bound follows the same format as required by \UCRLV{}. The lower bound can be established similarly. Also, showing the same for $p(\mathcal{S}_0| s) - \tilde{p}(\mathcal{S}_0 | s)$ is immediate.

So, we have an MDP $M'$ with $S-1$ states with the same requirement as the original \UCRLV{}. We can then use the inductive assumption and conclude that the hypothesis holds at $S$ too.

\paragraph{Step 5} Extension to any subsets $\mathcal{S}_0$

The extension to any subsets follow directly by summing up the previous steps for all states in $\mathcal{S}_0$.

More precisely letting $s_{t_k}$ the first state in episode $k$,
the equivalent of \eqref{eq:ubar}, \eqref{eq:utilde}, and \eqref{eq:ubar_uplus} are respectively

\begin{align}
\bar{u}^c_{y}(\mathcal{S}_0) &\leq \min \curly*{y, \frac{\Id_{s_{t_k} \in \mathcal{S}_0}}{\bar{p}(s_{t_k}|s_{t_k})} \sum_{s \in \mathcal{S}_0  } \frac{\sum_{s' \ne s}\bar{u}^c_{y}(s') \bar{p}(s|s')}{1-\bar{p}(s|s)}}\label{eq:ubar_subset}
\end{align}

\begin{align}
\tilde{u}^c_{y}(\mathcal{S}_0; s'') &\geq \min\curly*{y, \frac{1}{\tilde{p}(s''|s'')} + \sum_{s \in \mathcal{S}_0  }\frac{\sum_{s' \ne s}\tilde{u}^c_{y}(s'; s'') \tilde{p}(s|s')}{1-\tilde{p}(s|s)}}\label{eq:utilde_subset}
\end{align}

\begin{align}
\bar{u}^c_{y}(\mathcal{S}_0) &\leq \min \curly*{y, \frac{\Id_{s_{t_k} \in \mathcal{S}_0}}{\alpha_0\bar{p}(s_{t_k}|s_{t_k})} \sum_{s \in \mathcal{S}_0  } \frac{\sum_{s' \ne s}\bar{u}^c_{y}(s') \bar{p}(s|s')\alpha_3}{\alpha_0\paren*{1-\bar{p}(s|s)}}}\label{eq:ubar_uplus_subset}
\end{align}

From here, the inductive step follows by summing over all $s \in \mathcal{S}_0$ and comparing against the $s''$ giving the maximum to $\tilde{u}^c_{y}(\mathcal{S}_0; s'')$ 

\paragraph{Step 6}

Picking $C_1 = C_N = 32$ leads to the statement of the Lemma.}
\end{proof}

\begin{lemma}
\label{ucrlv:lemma:num_groups}
Let's consider the infinite set of non-overlapping intervals with non-negative endpoints $\mathcal{I}^u_{+} = \setof{]\frac{D}{2}, D], ]\frac{D}{4}, \frac{D}{2}] , ]\frac{D}{8}, \frac{D}{4}], \ldots }$ constructed in a way that the ratio between upper and lower endpoint is 2.
Given an interval $W \in  \mathcal{I}_+^u$ and any state $s$, let's $\mathcal{S}^u_W(s) $ be  the set of states $s'$ such that $u_i(s')-u_i(s) \in W$.
Let  $\mathcal{S}^{\tilde{p}}_+(s)$ contains all states $s'$ with $\tilde{p}(s'|s) - p(s'|s) > 0$. Let us define $\mathcal{S}^{\tilde{p}_+}_W(s) = \mathcal{S}^u_W(s) \cap \mathcal{S}^{\tilde{p}}_+(s)$. Let $\mathcal{I}^p = \{ ]\frac{1}{2}, 1], ]\frac{1}{4}, \frac{1}{2}], \ldots ]\frac{1}{D}, \frac{2}{D}], \ldots \}$. Given an interval $W_p \in \mathcal{I}^p$, let's call $\mathcal{S}_{W_p}$ the set of all states such that $s \in \mathcal{S}_{W_p}$ if $\tilde{p}(\mathcal{S}^{\tilde{p}_+}_W(s)|s) \in W_p$.

We have for any $W \in  \mathcal{I}_+^u$  and any $W_p \in \mathcal{I}^p$:

\[ N_k(\mathcal{S}_{W_p}) \cdot \max_{s \in \mathcal{S}_{W_p}} \tilde{p}(\mathcal{S}^{\tilde{p}_+}_W(s)|s) \cdot  \max_{s' \in \mathcal{S}^{\tilde{p}_+}_W(s)} \biggl(u_i(s') - u_i(s)\biggr) \leq 384 N_k \]
\end{lemma}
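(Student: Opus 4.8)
The plan is to derive the bound by combining the two visit-count estimates, Lemma~\ref{lemma_probs_vs_values_sum} and Lemma~\ref{lemma_bound_on_visits}. Write $b=\lowerb{b}(W)$, so that $W=]b,2b]$, and write $W_p=]q,2q]$. For any $s\in\mathcal{S}_{W_p}$ the definition of $W_p$ gives $\tilde{p}(\mathcal{S}^{\tilde{p}_+}_W(s)|s)\le 2q$, and the definition of $W$ forces $u_i(s')-u_i(s)\le 2b$ for every $s'\in\mathcal{S}^{\tilde{p}_+}_W(s)$; hence the maximised factor satisfies $\max_{s\in\mathcal{S}_{W_p}}\bigl[\tilde{p}(\mathcal{S}^{\tilde{p}_+}_W(s)|s)\cdot\max_{s'\in\mathcal{S}^{\tilde{p}_+}_W(s)}(u_i(s')-u_i(s))\bigr]\le 4qb$. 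It therefore suffices to establish the visit-count bound $N_k(\mathcal{S}_{W_p})\le \frac{96 N_k}{qb}$, because the claimed inequality then follows from $4qb\cdot\frac{96 N_k}{qb}=384 N_k$.

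To bound $N_k(\mathcal{S}_{W_p})$ I would feed Lemma~\ref{lemma_probs_vs_values_sum} into Lemma~\ref{lemma_bound_on_visits}. Set $v_{\min}=\min_{s\in\mathcal{S}_{W_p}}u_i(s)$ and take the single target set $\mathcal{S}_1=\{z:u_i(z)\ge v_{\min}+b\}$. Since every escape state $s'\in\mathcal{S}^{\tilde{p}_+}_W(s)$ obeys $u_i(s')>u_i(s)+b\ge v_{\min}+b$, we have $\mathcal{S}^{\tilde{p}_+}_W(s)\subseteq\mathcal{S}_1$ for every $s\in\mathcal{S}_{W_p}$, whence $\min_{s\in\mathcal{S}_{W_p}}\tilde{p}(\mathcal{S}_1|s)\ge q$. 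Applying Lemma~\ref{lemma_probs_vs_values_sum} with $\mathcal{S}_0=\mathcal{S}_{W_p}$, target $\mathcal{S}_1$, and starting state the minimiser $s_{\min}$ (for which the value gap is $\min_{z\in\mathcal{S}_1}u_i(z)-u_i(s_{\min})\ge b$) gives $\tilde{u}^c_y(\mathcal{S}_{W_p}|s_{\min})\cdot b\cdot q\le y$, i.e. $\tilde{u}^c_y(\mathcal{S}_{W_p}|s_{\min})\le y/(bq)$, where $y=\min\{x,D\}\le D$ is the horizon produced by that lemma. Because $y\le D$, assumption~\ref{assumption:N_k} secures $N_k\ge 32 y\max\{C_1^{\delta_p},C_2^{\delta_p}\}$, so Lemma~\ref{lemma_bound_on_visits} applies with this same $y$ and yields $N_k(\mathcal{S}_{W_p})\le 12\frac{N_k}{y}\max_{s}\tilde{u}^c_y(\mathcal{S}_{W_p}|s)$; once the maximum is controlled by the $s_{\min}$ start, the factors of $y$ cancel and I obtain $N_k(\mathcal{S}_{W_p})\le 12 N_k/(bq)$, comfortably below the required $96 N_k/(qb)$.

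The main obstacle is precisely the interface between the two lemmas at the level of the starting state: Lemma~\ref{lemma_bound_on_visits} carries a maximum over all $s\in\mathcal{S}$, whereas the clean gap $\ge b$ in Lemma~\ref{lemma_probs_vs_values_sum} is available only when the start is $s_{\min}$. No single target $\mathcal{S}_1$ can simultaneously keep $\tilde{p}(\mathcal{S}_1|\cdot)\ge q$ from every source of $\mathcal{S}_{W_p}$ and maintain a value gap $\gtrsim b$ from an arbitrary high-value start, since the escape sets of the low-value states reach only up to $\approx v_{\min}+2b$. I would first reduce $\max_{s\in\mathcal{S}}$ to $\max_{s\in\mathcal{S}_{W_p}}$ by a first-entry argument (for $s\notin\mathcal{S}_{W_p}$ the visits to $\mathcal{S}_{W_p}$ occur only after entering it, so $\tilde{u}^c_y(\mathcal{S}_{W_p}|s)\le\max_{s_0\in\mathcal{S}_{W_p}}\tilde{u}^c_y(\mathcal{S}_{W_p}|s_0)$), and then argue that the maximising interior start may be taken to be $s_{\min}$: under the optimistic policy the value process drifts upward (each visit to $\mathcal{S}_{W_p}$ escapes upward by $\ge b$ with probability $\ge q$), so visits to the scattered set $\mathcal{S}_{W_p}$ are largest when one begins at its bottom. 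Making this ``worst start is the bottom'' reduction fully rigorous is the delicate part; the gap between the clean constant $48$ obtained above and the claimed $384$ (a factor of $8$) is exactly the slack available to absorb any constant loss here, as it is enough to guarantee a value gap of $b/8$ rather than $b$ from the true maximising start.
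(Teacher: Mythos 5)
Your reduction to the bound $N_k(\mathcal{S}_{W_p})\leq \frac{96N_k}{qb}$, the choice of the superlevel target $\mathcal{S}_1=\{z: u_i(z)\geq v_{\min}+b\}$, and the observation that $\mathcal{S}^{\tilde{p}_+}_W(s)\subseteq\mathcal{S}_1$ for every $s\in\mathcal{S}_{W_p}$ are all correct, and you have correctly located the obstacle. But the step you defer --- ``the maximising start may be taken to be $s_{\min}$, up to a constant factor'' --- is not merely delicate; it is false, and no constant (in particular not the factor $8$ of slack) can absorb the loss. Consider a ladder $A_0,A_1,\ldots,A_n$ inside $\mathcal{S}_{W_p}$ with $u_i(A_j)=\tfrac{3}{2}jb$, where for $j\geq 1$ the optimistic kernel puts mass $\tfrac{3}{2}q$ on $A_{j+1}$ (so $A_{j+1}$ is exactly the escape set of $A_j$, since $\tfrac{3}{2}b\in\,]b,2b]$ and $\tfrac{3}{2}q\in\,]q,2q]$) and self-loops otherwise, while $A_0$ puts mass $\tfrac{3}{2}q$ on a side state $C$ with $u_i(C)=\tfrac{3}{2}b$ from which the optimistic chain climbs a disjoint ladder and never meets any $A_j$ again. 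Then $s_{\min}=A_0$ and $\tilde{u}^c_y(\mathcal{S}_{W_p}|A_0)\leq \tfrac{2}{3q}$ for every horizon $y$, because from the bottom the chain exits through $C$ and never returns to the set. Starting from $A_1$, however, the chain climbs rung by rung, spending about $\tfrac{2}{3q}$ steps on each, so with $y=D$ and $n\approx D/(\tfrac{3}{2}b)$ one gets $\tilde{u}^c_y(\mathcal{S}_{W_p}|A_1)\approx \tfrac{D}{2.25\,qb}$; the ratio to the $s_{\min}$ start is of order $D/b$, which is unbounded. The ``upward drift'' intuition fails exactly because escaping upward from one member of $\mathcal{S}_{W_p}$ can land you on (or near) another member: the set spans many value levels, so visits are not maximised from its bottom. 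Nor can you rescue the plan by invoking the absolute-value form of Lemma~\ref{lemma_probs_vs_values_sum} at the true argmax start: that start can sit at value exactly $v_{\min}+b$, where its gap to your $\mathcal{S}_1$ is zero and the lemma is vacuous.

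The paper closes precisely this hole by never comparing visit counts across starting states. It partitions $\mathcal{S}_{W_p}$ into at most two groups: letting $v$ be the smallest value attained by any escape set $\mathcal{S}^{\tilde{p}_+}_W(s)$, $s\in\mathcal{S}_{W_p}$, the first group $\mathcal{G}_1$ contains the state realising $v$ together with every state whose value is at least $b/4$ away from $v$, and $\mathcal{G}_2$ contains the remaining states (whose values then lie in a band of width $b/2$ around $v$, so their escape sets sit at least $3b/4$ above them). The effect, property \eqref{eq:group_property}, is that within each group \emph{every} member --- in particular the member from which $\tilde{u}^c_{y_i}(\mathcal{G}_i|\cdot)$ is largest --- has absolute value gap at least $b/4$ from the union $\mathcal{S}^1_i$ of that group's escape sets, while its transition probability into $\mathcal{S}^1_i$ still exceeds $q$. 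Lemma~\ref{lemma_probs_vs_values_sum} is then applied at the argmax start of each group with target $\mathcal{S}^1_i$, Lemma~\ref{lemma_bound_on_visits} is applied per group, and summing the two groups yields $384N_k$. In your ladder example this partition does real work: $A_1$ (value $\tfrac32 b$, within $b/4$ of $v=\tfrac32 b$) is separated from $A_0,A_2,A_3,\ldots$, restoring a usable gap for each group's argmax start. So the two-group construction is not an optimisation of constants; it is the missing idea your proposal needs.
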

\begin{proof}
\emph{Step 1. Grouping the states in $\mathcal{S}_{W_p}$}

Let's assume that $W = ]\lowerb{W}^{u_*}, 2\lowerb{W}^{u_*}]$ for an appropriate $\lowerb{W}^{u_*}$.

We wish to group all the states $s \in \mathcal{S}_{W_p}$ into groups $\mathcal{G}_i, i\geq 1$ such that the following property is satisfied for any group $\mathcal{G}_i$.

	\begin{align}
	\min_{s \in \mathcal{G}_i }\abs*{\min_{s'' \in \left\{\cup_{s'} \mathcal{S}^{\tilde{p}_+}_W(s')| s' \in \mathcal{G}_i \right\} } u_i(s'')- u_i(s) } \geq \frac{\lowerb{W}^{u_*}}{4}.\label{eq:group_property}
	\end{align}
	
We will now show that we can create at most two groups $\mathcal{G}_1$, $\mathcal{G}_2$ satisfying property \eqref{eq:group_property} such that all the states  $s \in \mathcal{S}_{W_p}$ are assign a group.

	Let $s_m$ be the state such that \[s_m \defn \argmin_{s \in \mathcal{S}_{W_p}} \min_{s' \in  \mathcal{S}^{\tilde{p}_+}_W(s) } u_i(s').\]
	Assign $s_m$ to group $\mathcal{G}_1$. 
	At this point, $\mathcal{G}_1$ satisfies \eqref{eq:group_property} because all states $s' \in \mathcal{S}^{\tilde{p}_+}_W(s_m)$ satisfy $u_i(s') - u_i(s_m) \in W$ by construction. 
		
	By definition of $s_m$, adding any other states $ s \in \mathcal{S}_{W_p}$ to $\mathcal{G}_1$ would not change the inner-minimum of \eqref{eq:group_property}. As a result, we satisfy \eqref{eq:group_property} by adding  any state $ s \in \mathcal{S}_{W_p}$ to $\mathcal{G}_1$ such that \[\abs{u_i(s) - \min_{s' \in \mathcal{S}^{\tilde{p}_+}_W(s_m) } u_i(s')} \geq \frac{\lowerb{W}^{u_*}}{4}.\]
	Thus, the states in $\mathcal{G}_1$ satisfy \eqref{eq:group_property}.

	All the remaining state $s \in \mathcal{S}_{W_p}$ satisfy \[\min_{s' \in \mathcal{S}^{\tilde{p}_+}_W(s_m) } u_i(s') - \frac{\lowerb{W}^{u_*}}{4} \leq u_i(s) \leq \min_{s' \in \mathcal{S}^{\tilde{p}_+}_W(s_m) } u_i(s') + \frac{\lowerb{W}^{u_*}}{4}.\]
	Let's assign all those states to $\mathcal{G}_2$. Now, we show that $\mathcal{G}_2$ also satisfy \eqref{eq:group_property}.
	
	 For any state $s$ in group $\mathcal{G}_2$, the corresponding value $u_i(s)$ satisfies
	 \begin{align}
	 u_i(s)  \in \left]\min_{s' \in \mathcal{S}^{\tilde{p}_+}_W(s_m) } u_i(s') - \frac{\lowerb{W}^{u_*}}{4}, \min_{s' \in \mathcal{S}^{\tilde{p}_+}_W(s_m) } u_i(s') + \frac{\lowerb{W}^{u_*}}{4}\right[.\label{ucrlv:eq:born1}
	 \end{align}
	 
	 We also know that all these states  in group $\mathcal{G}_2$ have values in same interval $W = ]\lowerb{W}^{u_*}, 2\lowerb{W}^{u_*}]$.
	 Hence, for states in group $\mathcal{G}_2$, the values of the inner minima $\min_{s'' \in \left\{\cup_{s'} \mathcal{S}^{\tilde{p}_+}_W(s')| s' \in \mathcal{G}_2 \right\} } u_i(s'')$ belongs to the interval:
	 
	 \begin{align}
	  \left]\min_{s' \in \mathcal{S}^{\tilde{p}_+}_W(s_m) } u_i(s') - \frac{\lowerb{W}^{u_*}}{4} + \lowerb{W}^{u_*} , \min_{s' \in \mathcal{S}^{\tilde{p}_+}_W(s_m) } u_i(s') + \frac{\lowerb{W}^{u_*}}{4} + 2\lowerb{W}^{u_*}\right[.\label{ucrlv:eq:born2}
	 \end{align}
	 
	 Now, checking all four possible combinations of differences between the endpoints of the intervals in \eqref{ucrlv:eq:born1} and \eqref{ucrlv:eq:born2} show that  states in group $\mathcal{G}_2$ also satisfy \eqref{eq:group_property}.
	 
\paragraph{Step 2: Calculations}

Letting $\mathcal{S}^1_i = \cup_{s'} \mathcal{S}^{\tilde{p}_+}_W(s')| s' \in \mathcal{G}_i$,
$y_i = \min\{D, \frac{1}{\min_{s \in \mathcal{G}_i}\tilde{p}(\mathcal{S}^1_i|s)}\}$ and 
 $s_i^* = \argmax_{s \in \mathcal{G}_i} \tilde{u}^c_{y_i}(\mathcal{G}_i | s)$We have:

\begin{align}
N' &= N_k(\mathcal{S}_{W_p}) \cdot \max_{s \in \mathcal{S}_{W_p}}\paren*{ \tilde{p}(\mathcal{S}^{\tilde{p}_+}_W(s)|s) \cdot  \max_{s' \in \mathcal{S}^{\tilde{p}_+}_W(s)} \biggl(u_i(s') - u_i(s)\biggr)}\\
&= \sum_{i \in \{1,2\}} N_k(\mathcal{G}_i) \cdot \max_{s \in \mathcal{S}_{W_p}}\paren*{ \tilde{p}(\mathcal{S}^{\tilde{p}_+}_W(s)|s) \cdot  \max_{s' \in \mathcal{S}^{\tilde{p}_+}_W(s)} \biggl(u_i(s') - u_i(s)\biggr)}\\
&\leq \sum_{i \in \{1,2\}} N_k(\mathcal{G}_i) \cdot \max_{s \in \mathcal{S}_{W_p}}\paren*{ \tilde{p}(\mathcal{S}^{\tilde{p}_+}_W(s)|s)} \cdot \max_{s \in \mathcal{S}_{W_p}}\paren*{\max_{s' \in \mathcal{S}^{\tilde{p}_+}_W(s)} \biggl(u_i(s') - u_i(s)\biggr)}\\
&\leq 4 \sum_{i \in \{1,2\}} N_k(\mathcal{G}_i) \cdot \min_{s \in \mathcal{G}_i}\paren*{ \tilde{p}(\mathcal{S}^{\tilde{p}_+}_W(s)|s)} \cdot \min_{s \in \mathcal{G}_i}\paren*{\min_{s' \in \mathcal{S}^{\tilde{p}_+}_W(s)} \biggl(u_i(s') - u_i(s)\biggr)}\\
&\leq 4 \sum_{i \in \{1,2\}} N_k(\mathcal{G}_i) \cdot  \min_{s \in \mathcal{G}_i}\paren*{\tilde{p}(\mathcal{S}^1_i|s)} \cdot \min_{s \in \mathcal{G}_i}\paren*{\min_{s' \in \mathcal{S}^{\tilde{p}_+}_W(s)} \biggl(u_i(s') - u_i(s)\biggr)}\label{ucrlv:eq:Nprime_subset}\\
&\leq 16 \sum_{i \in \{1,2\}} N_k(\mathcal{G}_i) \cdot  \min_{s \in \mathcal{G}_i}\paren*{\tilde{p}(\mathcal{S}^1_i|s)} \cdot \min_{s \in \mathcal{G}_i}\paren*{\abs*{\min_{s'' \in \mathcal{S}^1_i } u_i(s'')- u_i(s) }}\label{ucrlv:eq:Nprime_apply_group_property}\\
&\leq 16 \sum_{i \in \{1,2\}} N_k(\mathcal{G}_i) \cdot  \min_{s \in \mathcal{G}_i}\paren*{\tilde{p}(\mathcal{S}^1_i|s)} \cdot \paren*{\abs*{\min_{s'' \in \mathcal{S}^1_i } u_i(s'')- u_i(s_i^*) }}\\
&\leq 192 \sum_{i \in \{1,2\}} \frac{N_k}{y_i} \tilde{u}^c_{y_i}(\mathcal{G}_i | s_i^*) \cdot  \min_{s \in \mathcal{G}_i}\paren*{\tilde{p}(\mathcal{S}^1_i|s)} \cdot \paren*{\abs*{\min_{s'' \in \mathcal{S}^1_i } u_i(s'')- u_i(s_i^*) }}\label{ucrlv:eq:apply_lemma_actual_visit_true_visit}\\
&\leq 192 \sum_{i \in \{1,2\}} N_k\label{ucrlv:eq:apply_lemma_link_value_counts}\\
&= 384 N_k
\end{align}
\eqref{ucrlv:eq:Nprime_subset} comes from the fact that $ \mathcal{S}^{\tilde{p}_+}_W(s) \in \mathcal{S}^1_i$.
\eqref{ucrlv:eq:Nprime_apply_group_property} comes from the fact that the states in group $\mathcal{G}_i$ satisfy property \eqref{eq:group_property}.

\eqref{ucrlv:eq:apply_lemma_actual_visit_true_visit} comes by replacing $N_k(\mathcal{G}_i)$ using Lemma \ref{lemma_bound_on_visits} with $y = y_i$.
\eqref{ucrlv:eq:apply_lemma_link_value_counts} comes by applying Lemma \ref{lemma_probs_vs_values_sum} with $\mathcal{S}_0 = \mathcal{G}_{i}$ and $\mathcal{S}_1 = \mathcal{S}_i^1$.

\end{proof}

\section{The Effect of Extended Doubling Trick}
\begin{reptheorem}{thm:episodes}[Bounding the number of episodes]
	The number of episodes $m$ is upper bounded by
	\[m \leq SA \log_2\paren*{\frac{8T}{SA}}\]
\end{reptheorem}

\begin{proof}
The main difference between our \emph{extended doubling trick} and the standard \cite{jaksch2010near} is that we are not guaranteed to double any single state for any given \episode{}. As a result, the number of \episodes{} could be arbitrarily large. lemma \ref{thm:episodes} proves that this is not the case. The main intuition is: since the average number of states doubled per \episode{} is 1, then after $SA$ \episodes{} we can be sure to have doubled some states $SA$ times.

For each $(s,a)$ we would to list a set $\mathcal{K}(s,a)$ of \episodes{} indices where $(s,a)$ has been doubled between two consecutive index. More formally, let $\mathcal{K}(s,a) = \{k_1(s,a), k_2(s,a) \ldots k_{\size{\mathcal{K}(s,a)}}\}$ a list of \episodes{} number such that for all $i \geq 1$:

\begin{align}
\sum_{k=k_i(s,a)}^{k_{i+1}(s,a)-1}  \frac{ N_k(s,a)}{N_{t_{k_i}}(s,a)} &\leq 1\\
\sum_{k=k_i(s,a)+1}^{k_{i+1}(s,a)}  \frac{ N_k(s,a)}{N_{t_{k_i}}(s,a)} &\geq 1\label{ucrlv:eq:db_trick}\\
N_{t_{k_i}}(s,a) & > 0\\
k_i(s,a) < k_{i+1}(s,a)
\end{align}

We will now relate the total number of episodes to each $\mathcal{K}(s,a)$.

Since by construction we know that $\sum_{s,a} \frac{N_k(s,a)}{\max\{1, N_{t_k}(s,a)\}} > 1$, we have:
\begin{align}
m &\leq \sum_{k=1}^{m} \sum_{s,a} \frac{N_k(s,a)}{\max\{1, N_{t_k}(s,a)\}}\\
&= \sum_{s,a} \paren*{\sum_{k=1}^{k_1(s,a)-1} \frac{N_k(s,a)}{\max\{1, N_{t_k}(s,a)\}} + \sum_{k=k_1(s,a)}^{m} \frac{N_k(s,a)}{N_{t_k}(s,a)}}\\
&= \sum_{s,a}\paren*{ 1 + \sum_{i =1}^{\size{\mathcal{K}(s,a)}} \sum_{k=k_i(s,a)}^{k_{i+1}(s,a)-1} \frac{N_k(s,a)}{N_{t_k}(s,a)} + \sum_{k=k_{\size{\mathcal{K}(s,a)}}(s,a)}^m \frac{N_k(s,a)}{N_{t_k}(s,a)} }\\
&\leq \sum_{s,a}\paren*{ 1 + \size{\mathcal{K}(s,a)} + 1}\\
& = 2SA +  \sum_{s,a}\size{\mathcal{K}(s,a)}\label{eq:m_bound}
\end{align}

Now noting that for any two consecutive $i, i+1$, we have $N_{t_{k_{i+1}}(s,a)} \geq 2 N_{t_{k_{i}}(s,a)}$ and denoting $N(s,a)$ the total number of times $(s,a)$ is played; we have
\begin{align}
N(s,a) &= \sum_{k=1}^m N_k(s,a)\\
&= \sum_{k=1}^{k_1(s,a)} N_k(s,a) + \sum_{i=1}^{\size{\mathcal{K}(s,a)}}\sum_{k=k_i(s,a)+1}^{k_{i+1}(s,a)} N_k(s,a) + \sum_{k=k_{\size{\mathcal{K}(s,a)}}(s,a) + 1}^{m} N_k(s,a)\\
&\geq \sum_{i = 1}^{\size{\mathcal{K}(s,a)}} N_{t_{k_i}}(s,a)\label{ucrlv:use:eq:db_trick}\\
&\geq \sum_{i = 1}^{\size{\mathcal{K}(s,a)}} 2^{i-1}\\
&= 2^{\size{\mathcal{K}(s,a)}} - 1\label{eq:T_bound}
\end{align}

\Cref{ucrlv:use:eq:db_trick} comes by using \cref{ucrlv:eq:db_trick}.

\eqref{eq:m_bound} implies that:$\sum_{s,a} \size{\mathcal{K}(s,a)} \geq m - 2SA$ and 
$\sum_{s,a} 2^{\size{\mathcal{K}(s,a)}} \geq SA 2^{\sum_{s,a} \size{\mathcal{K}(s,a)}/(SA)} \geq SA 2^{\frac{m-2SA}{SA}}$

Which together with \eqref{eq:T_bound} implies:

\[ T \geq SA (2^{\frac{m-2SA}{SA}} - 1)\]

Which leads to $m \leq SA \log_2(\frac{T}{SA} + 1) + 2SA$
and the lemma follows for $T \geq SA$.
\end{proof}

\section{Technical Lemmas for Convergence of Extended Value Iteration and Its Consequences}
\label{sec:app_evi}
In this section, we proved fundamental results related to the modified extended value iteration.

\begin{reptheorem}{thm:evi_convergence}[Convergence of Extended Value Iteration]
	Let $\mathcal{M}$ be the set of all MDPs with state space $\mathcal{S}$, action space $\mathcal{A}$, transitions probabilities $\tilde{p}(s,a)$ and mean rewards $\tilde{r}(s,a)$ that satisfy \eqref{eq:plausible_rewards} and \eqref{eq:plausible_transitions} for given probabilities distribution $\bar{p}(s,a)$, $\bar{r}(s,a)$ in $[0,1]$. If $\mathcal{M}$ contains at least one communicating MDP, extended value iteration in Algorithm \ref{algo:extended_vi} converges. Further, stopping extended value iteration when:
	\[\max_{s}\{u_{i+1}(s)-u_i(s)\}-\min_{s}\{u_{i+1}(s)-u_i(s)\} \leq \epsilon ,\]
	the greedy policy $\pi$ with respect to $u_i$ is $\epsilon$-optimal meaning $V(\pi) \geq V^*_{\mathcal{M}} - \epsilon$
	and $$\abs{u_{i+1}(s)-u_i(s)- V(\pi)} \leq \epsilon.$$
\end{reptheorem}
\begin{proof}
	Using Corollary \ref{lemma:optimism_p} and Lemma \ref{lemma:trans}, we can observe that Algorithm \ref{algo:extended_vi} computes correctly the maximum in \eqref{eq:extended_vi}. Since by assumption, the extended MDP is communicating and Algorithm \ref{algo:extended_vi} always chooses policies with aperiodic transition matrix (See discussion in Section 3.1.3 of \cite{jaksch2010near}), we can conclude that Theorem 9.4.4 and 9.4.5 of \cite{puterman2014markov} holds which lead to the first statement.

	The last statement is a direct consequence of the first from Theorem 8.5.6 of \cite{puterman2014markov}.
\end{proof}

\begin{lemma}
\label{fact:opt_p}
Consider an ordering of the states such that $u_i(s'_1) \geq u_i(s'_2) \ldots \geq u_i(s'_S)$.
For any model $M$ with transitions $p$ such that $M \in \mathcal{M}_k$;  any state-action $(s,a)$, the transition $\tilde{p}$ returned by \Call{OptimisticTransition}{} in Algorithm \ref{algo:extended_vi} satisfies for  any $l \leq S$:
\[\sum_{j=1}^l p(s'_j | s, a)  \leq \sum_{j=1}^l \tilde{p}(s'_j | s, a).\]
\end{lemma}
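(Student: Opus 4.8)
The plan is to show that the cumulative optimistic mass placed by \textsc{OptimisticTransition} on any value-prefix $\mathcal{S}_1^l$ is exactly $\min\{\upperb{p}(\mathcal{S}_1^l\mid s,a),\,1\}$, and then to dominate the true prefix mass using the hypothesis $M\in\mathcal{M}_k$. I would fix the pair $(s,a)$ and abbreviate $\hat{p}_j := \upperb{p}(\mathcal{S}_1^j\mid s,a)$ and $\tilde{P}_j := \opt{p}(\mathcal{S}_1^j\mid s,a)=\sum_{i=1}^{j}\opt{p}(s'_i\mid s,a)$, the latter equality holding by the cumulative bookkeeping of the algorithm.

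The central observation is that the update rule inside the \texttt{for} loop of \textsc{OptimisticTransition} gives, for every $j\geq 1$,
$\tilde{P}_j = \tilde{P}_{j-1} + \opt{p}(s'_j\mid s,a) = \tilde{P}_{j-1} + \min\{\hat{p}_j - \tilde{P}_{j-1},\, 1 - \tilde{P}_{j-1}\}$.
Distributing the additive $\tilde{P}_{j-1}$ into the minimum, via the elementary identity $a+\min\{b-a,\,c-a\}=\min\{b,c\}$ with $a=\tilde{P}_{j-1}$, $b=\hat{p}_j$, $c=1$, collapses this to $\tilde{P}_j = \min\{\hat{p}_j,\,1\}$. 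Note that the dependence on $\tilde{P}_{j-1}$ cancels entirely, so no genuine induction is even required: each prefix mass is pinned directly to $\min\{\hat{p}_j,1\}$, and in particular the nonnegativity of the individual increments $\opt{p}(s'_j\mid s,a)$ never needs to be argued for this lemma.

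Next I would invoke the hypothesis. By the definition of $\mathcal{M}_k$ in \eqref{eq:effictive_plausible_p}, applied to the true transition $p$ under the same value-descending ordering $u_i(s'_1)\geq\cdots\geq u_i(s'_S)$ that \textsc{OptimisticTransition} uses to sort, membership $M\in\mathcal{M}_k$ yields $p(\mathcal{S}_1^j\mid s,a)\leq \hat{p}_j$ for every $j$. Since $p(\mathcal{S}_1^j\mid s,a)$ is a probability it also satisfies $p(\mathcal{S}_1^j\mid s,a)\leq 1$, and therefore $p(\mathcal{S}_1^j\mid s,a)\leq \min\{\hat{p}_j,1\}=\tilde{P}_j$. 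Specializing to $j=l$ and rewriting both cumulative masses as sums over the prefix gives $\sum_{j=1}^{l} p(s'_j\mid s,a)=p(\mathcal{S}_1^l\mid s,a)\leq \tilde{P}_l=\sum_{j=1}^{l}\opt{p}(s'_j\mid s,a)$, which is exactly the claimed inequality.

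I expect the only delicate point to be bookkeeping rather than a real mathematical obstacle: one must ensure that the ordering used internally by \textsc{OptimisticTransition} coincides with the ordering defining the prefix constraints of $\mathcal{M}_k$ (both are the descending order of the same iterate $u_i$), so that the $S$ prefix constraints carried by $M\in\mathcal{M}_k$ align exactly with the sets $\mathcal{S}_1^j$. It is worth emphasizing that submodularity of $\upperb{p}$ plays \emph{no} role here — only the prefix (cumulative) constraints enter the argument — which is consistent with submodularity being needed separately, in \Cref{lemma:assumption_satisfied} and \Cref{lemma:trans}, to guarantee that these $S$ prefix constraints already imply all $2^{S}$ subset constraints.
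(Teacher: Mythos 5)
Your proof is correct and follows essentially the same route as the paper's: both establish that the cumulative mass assigned to the prefix $\mathcal{S}_1^l$ equals $\min\{\upperb{p}(\mathcal{S}_1^l \mid s,a),\,1\}$ and then dominate the true prefix mass using $M\in\mathcal{M}_k$ together with the trivial bound $p(\mathcal{S}_1^l\mid s,a)\leq 1$. The only difference is presentational: the paper splits into cases according to which term of the $\min$ is attained, while you collapse both cases at once via the identity $a+\min\{b-a,\,c-a\}=\min\{b,\,c\}$.
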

\begin{proof}
Recall that for any $l$, \[\opt{p}(s'_l | s,a) \gets \min\left\{ \upperb{p}(\mathcal{S}_1^l | s,a)-\opt{p}(\mathcal{S}_1^{l-1} | s,a),  1- \opt{p}(\mathcal{S}_{1}^{l-1} | s, a) \right\}\]

If the minimum is the second term, then we have
\begin{align}
\tilde{p}(\mathcal{S}_1^{l} | s,a) &=\sum_{j=1}^l \tilde{p}(s'_j | s, a)\\
&= \tilde{p}(\mathcal{S}_1^{l-1} | s,a) + \tilde{p}(s'_l | s,a)\\
&= \tilde{p}(\mathcal{S}_1^{l-1} | s,a) + 1- \tilde{p}(\mathcal{S}_1^{l-1} | s,a)\\
&= 1\\
&\geq \sum_{j=1}^l p(s'_j | s, a)
\end{align}

Now let's assume that the minimum is the first term

\begin{align}
\tilde{p}(\mathcal{S}_1^{l} | s,a) &=\sum_{j=1}^l \tilde{p}(s'_j | s, a)\\
&= \tilde{p}(\mathcal{S}_1^{l-1} | s,a) + \tilde{p}(s'_l | s,a)\\
&= \tilde{p}(\mathcal{S}_1^{l-1} | s,a) + \upperb{p}(\mathcal{S}_1^l | s,a)-\opt{p}(\mathcal{S}_1^{l-1} | s,a)\\
&= \upperb{p}(\mathcal{S}_1^l | s,a) \label{ucrlv:eq:assumption_plausible_apply}\\
&\geq \sum_{j=1}^l p(s'_j | s, a)
\end{align}
\Cref{ucrlv:eq:assumption_plausible_apply} comes from the fact that $M$ is assumed to be in the plausible set $\mathcal{M}_k$.

This proves \Cref{fact:opt_p}.
\end{proof}

\begin{lemma}\label{lemma:opt_p_u}
Consider an ordering of the states such that $u_i(s'_1) \geq u_i(s'_2) \ldots \geq u_i(s'_S)$.
For any model $M$ with transitions $p$ such that $M \in \mathcal{M}_k$;  any state-action $(s,a)$, the transition $\tilde{p}$ returned by \Call{OptimisticTransition}{} (Algorithm \ref{algo:extended_vi}) satisfies for  any $l \leq S$:
\[ \sum_{j=1}^l \tilde{p}(s'_j|s,a) u_i(s'_j) - \sum_{j=1}^l p(s'_j|s,a) u_i(s'_j) \geq \paren*{ \sum_{j=1}^l \tilde{p}(s'_j|s,a) - \sum_{j=1}^l p(s'_j|s,a)} \min_{j\leq l} u_i(s'_j) \]
\end{lemma}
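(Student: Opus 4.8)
The plan is to rewrite the left-hand side as a single sum over the value-sorted prefix and then apply Abel summation (summation by parts), exploiting two structural facts: that the states are indexed in descending order of value, and that the optimistic transition front-loads probability mass onto the high-value states, as quantified by Lemma~\ref{fact:opt_p}.

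First I would introduce the shorthand $d_j \defn \tilde{p}(s'_j|s,a) - p(s'_j|s,a)$ together with the prefix sums $D_j \defn \sum_{i=1}^{j} d_i = \sum_{i=1}^j \tilde{p}(s'_i|s,a) - \sum_{i=1}^j p(s'_i|s,a)$, with the convention $D_0 = 0$. The essential input is Lemma~\ref{fact:opt_p}, which states exactly that $D_j \geq 0$ for every $j \leq S$; that is, on every prefix of the value-sorted states the optimistic cumulative mass never falls below the true cumulative mass.

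Next I would apply summation by parts to $\sum_{j=1}^l d_j u_i(s'_j)$. Writing $d_j = D_j - D_{j-1}$ and reindexing the shifted sum gives
\[
\sum_{j=1}^l d_j u_i(s'_j) = D_l\, u_i(s'_l) + \sum_{j=1}^{l-1} D_j \bigl(u_i(s'_j) - u_i(s'_{j+1})\bigr).
\]
In this identity every factor $D_j$ is nonnegative by Lemma~\ref{fact:opt_p}, and every difference $u_i(s'_j) - u_i(s'_{j+1})$ is nonnegative because the states are sorted so that $u_i(s'_1) \geq \cdots \geq u_i(s'_S)$. Hence the entire residual sum is nonnegative, and discarding it yields $\sum_{j=1}^l d_j u_i(s'_j) \geq D_l\, u_i(s'_l)$. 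Since the descending order makes $u_i(s'_l) = \min_{j \leq l} u_i(s'_j)$ and $D_l = \sum_{j=1}^l \tilde{p}(s'_j|s,a) - \sum_{j=1}^l p(s'_j|s,a)$, this is precisely the claimed inequality.

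The argument is essentially routine once the summation-by-parts identity is set up; the only step worth stating with care — and the place where the two earlier facts are really used — is the sign bookkeeping, namely that both $D_j \geq 0$ (from the optimistic construction of Lemma~\ref{fact:opt_p}) and the telescoped value gaps $u_i(s'_j) - u_i(s'_{j+1}) \geq 0$ (from the descending sort) point in the same direction, so the leftover sum may be dropped while preserving the direction of the inequality.
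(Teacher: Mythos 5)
Your proof is correct. It rests on exactly the same two facts as the paper's proof --- Lemma~\ref{fact:opt_p}, which says precisely that every prefix-sum difference $D_j \defn \sum_{i=1}^{j}\bigl(\tilde{p}(s'_i|s,a)-p(s'_i|s,a)\bigr)$ is nonnegative, and the descending sort of the values --- but it organizes them through a different mechanism. The paper argues by induction on $l$: it invokes the inductive hypothesis, uses $D_l \geq 0$ to replace $\min_{j\leq l}u_i(s'_j)$ by the smaller value $u_i(s'_{l+1})$, and regroups to obtain the statement at $l+1$. Your summation-by-parts identity
\[
\sum_{j=1}^{l}\bigl(\tilde{p}(s'_j|s,a)-p(s'_j|s,a)\bigr)\, u_i(s'_j) \;=\; D_l\, u_i(s'_l) \;+\; \sum_{j=1}^{l-1} D_j\,\bigl(u_i(s'_j)-u_i(s'_{j+1})\bigr)
\]
carries out the same telescoping in one closed-form step. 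The substance is identical, but your packaging has some advantages: it exhibits the discarded slack as an explicit sum that is nonnegative term by term, it requires no induction bookkeeping, and it sidesteps the index slips present in the paper's inductive step (where $s'_{j+1}$ is written in places that $s'_{l+1}$ is meant, and the final display retains $\min_{j\leq l}$ where $\min_{j\leq l+1}$ is intended). What the paper's induction buys in exchange is stylistic consistency with the neighbouring arguments (e.g.\ the inductive proof of Lemma~\ref{lemma:trans}), manipulating only one additional term at a time.
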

\begin{proof}
We prove the statement by induction on $l$ for any $s,a$ and as a result removes dependency of $p, \tilde{p}$ on $s,a$ in this proof.

\textbf{Base Case:} For $l=1$, the statement is true since $\tilde{p}(s'_1)u_i(s'_1)-p(s'_1 )u_i(s'_1) = (\tilde{p}(s'_1) - p(s'_1 ))u_i(s'_1)$

\textbf{Inductive Case:} Assume that the statement is true up to $l$. Now we need to show it also holds at $l+1$.

Let $\tilde{v}_{l+1} = \sum_{j=1}^{l+1} \tilde{p}(s'_j) u_i(s'_j)$ and $v_{l+1} =  \sum_{j=1}^l p(s'_j) u_i(s'_j)$
We have:

\begin{align}
\tilde{v}_{l+1} - v_{l+1} &=\sum_{j=1}^{l+1} \tilde{p}(s'_j) u_i(s'_j) - \sum_{j=1}^{l+1} p(s'_j) u_i(s'_j)\\
&= \sum_{j=1}^{l} \tilde{p}(s'_j) u_i(s'_j) - \sum_{j=1}^{l} p(s'_j) u_i(s'_j) + (\tilde{p}(s'_{j+1}) - p(s'_{j+1})) u_i(s'_{j+1})\label{eq:72}\\
& \geq \paren*{ \sum_{j=1}^l \tilde{p}(s'_j|s,a) - \sum_{j=1}^l p(s'_j|s,a)} \min_{j\leq l} u_i(s'_j) + (\tilde{p}(s'_{j+1}) - p(s'_{j+1})) u_i(s'_{j+1})\label{eq:73}\\
& \geq \paren*{ \sum_{j=1}^l \tilde{p}(s'_j|s,a) - \sum_{j=1}^l p(s'_j|s,a)} u_i(s'_{j+1})  + (\tilde{p}(s'_{j+1}) - p(s'_{j+1})) u_i(s'_{j+1})\label{eq:74}\\
&=\sum_{j=1}^{l+1} \tilde{p}(s'_j|s,a) u_i(s'_j) - \sum_{j=1}^{l+1} p(s'_j|s,a) u_i(s'_j)\label{eq:75} \\
&= \paren*{ \sum_{j=1}^{l+1} \tilde{p}(s'_j|s,a) - \sum_{j=1}^{l+1} p(s'_j|s,a)} \min_{j\leq l} u_i(s'_j)\label{eq:76}
\end{align}
\eqref{eq:73} comes by the inductive case.
\eqref{eq:74} comes because by Lemma \ref{fact:opt_p}, the difference in the $p$ is positive. Also, the states were sorted in descending order based on $u_i$.
\eqref{eq:76} comes from the fact that the states were sorted in descending order based on $u_i$.

which concludes the proof.
\end{proof}
\begin{corollary}
\label{lemma:optimism_p}
Consider an ordering of the states such that $u_i(s'_1) \geq u_i(s'_2) \ldots \geq u_i(s'_S)$.
For any model $M$ with transitions $p$ such that $M \in \mathcal{M}_k$;  any state-action $(s,a)$, the transition $\tilde{p}$ returned by Algorithm \ref{algo:extended_vi} satisfies:

\[ \sum_{j=1}^S \tilde{p}(s'_j|s,a) u_i(s'_j) - \sum_{j=1}^S p(s'_j|s,a) u_i(s'_j) \geq 0 \]

\end{corollary}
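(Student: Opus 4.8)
The plan is to obtain this corollary as the special case $l = S$ of the already-established Lemma~\ref{lemma:opt_p_u}. That lemma gives, for every $l \leq S$,
\[
\sum_{j=1}^l \tilde{p}(s'_j|s,a) u_i(s'_j) - \sum_{j=1}^l p(s'_j|s,a) u_i(s'_j) \geq \paren*{ \sum_{j=1}^l \tilde{p}(s'_j|s,a) - \sum_{j=1}^l p(s'_j|s,a)} \min_{j\leq l} u_i(s'_j),
\]
so the entire task reduces to showing that the bracketed coefficient on the right vanishes when $l = S$.

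First I would note that $p$ is the transition kernel of a genuine MDP, so $\sum_{j=1}^S p(s'_j|s,a) = 1$. The only point requiring a short argument is that $\tilde{p}$ is itself a valid probability distribution, i.e. $\sum_{j=1}^S \tilde{p}(s'_j|s,a) = 1$. For the upper bound, the construction in \Call{OptimisticTransition}{} sets $\opt{p}(s'_j|s,a) \gets \min\{\cdot,\; 1 - \opt{p}(\mathcal{S}_1^{j-1}|s,a)\}$, which guarantees $\tilde{p}(\mathcal{S}_1^{j}|s,a) \leq 1$ at every step and hence $\tilde{p}(\mathcal{S}_1^{S}|s,a) \leq 1$. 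For the matching lower bound, I would invoke Lemma~\ref{fact:opt_p} at $l = S$, which yields $\sum_{j=1}^S \tilde{p}(s'_j|s,a) \geq \sum_{j=1}^S p(s'_j|s,a) = 1$. Combining the two inequalities forces $\sum_{j=1}^S \tilde{p}(s'_j|s,a) = 1$.

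With both sums equal to $1$, the coefficient $\sum_{j=1}^S \tilde{p}(s'_j|s,a) - \sum_{j=1}^S p(s'_j|s,a) = 0$, so the right-hand side of Lemma~\ref{lemma:opt_p_u} collapses to $0 \cdot \min_{j \leq S} u_i(s'_j) = 0$, and the claimed inequality
\[
\sum_{j=1}^S \tilde{p}(s'_j|s,a) u_i(s'_j) - \sum_{j=1}^S p(s'_j|s,a) u_i(s'_j) \geq 0
\]
follows immediately. There is essentially no obstacle here: the corollary is a direct consequence of Lemma~\ref{lemma:opt_p_u}, and the only subtlety is the routine verification that $\tilde{p}$ sums to one, which is pinned down from above by the algorithm's capping step and from below by Lemma~\ref{fact:opt_p}.
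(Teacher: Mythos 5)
Your proof is correct and takes essentially the same route as the paper, whose entire proof is to invoke Lemma~\ref{lemma:opt_p_u} with $l = S$. You merely make explicit the detail the paper leaves implicit --- that $\sum_{j=1}^S \tilde{p}(s'_j|s,a) = 1$, pinned from above by the algorithm's cap $1-\tilde{p}(\mathcal{S}_1^{j-1}|s,a)$ and from below by Lemma~\ref{fact:opt_p}, so the right-hand side of Lemma~\ref{lemma:opt_p_u} vanishes --- which is a sound and complete filling-in of the paper's ``immediate'' step.
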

\begin{proof}
Immediate by Lemma \ref{lemma:opt_p_u} with $l = S$.
\end{proof}

\begin{lemma}
\label{lemma:trans}
For all state-action pairs $(s,a)$, if the function $\mathcal{S}_c \to \upperb{p}(\mathcal{S}_c|s,a)$ is submodular,
then the transitions $\tilde{p}(. | s,a)$ returned by the function \Call{OptimisticTransition}{} (Algorithm \ref{algo:extended_vi}) satisfy:

\begin{equation}
\tilde{p}(\mathcal{S}_c|s,a) \leq \upperb{p}(\mathcal{S}_c|s,a) \; \forall  \mathcal{S}_c \subseteq \mathcal{S} \label{ucrlv:eq:constraints}
\end{equation}

\end{lemma}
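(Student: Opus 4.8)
The plan is to reduce the $2^{S}$ subset constraints \eqref{ucrlv:eq:constraints} to the $S$ prefix (``chain'') constraints that \Call{OptimisticTransition}{} manifestly enforces, and then to use submodularity to lift the chain constraints to every subset. Throughout I fix $(s,a)$ and suppress it, recalling that the states are indexed so that $u_i(s'_1)\ge\cdots\ge u_i(s'_S)$ and $\mathcal{S}_1^j=\{s'_1,\dots,s'_j\}$. First I would read off the effect of the water-filling update: using the elementary identity $a+\min\{b-a,\,c-a\}=\min\{b,c\}$ with $a=\tilde{p}(\mathcal{S}_1^{j-1})$, $b=\upperb{p}(\mathcal{S}_1^j)$ and $c=1$, the assignment of $\tilde{p}(s'_j)$ in Algorithm~\ref{algo:extended_vi} yields, for every $j\ge 1$,
\[
\tilde{p}(\mathcal{S}_1^j)=\min\{\upperb{p}(\mathcal{S}_1^j),\,1\}.
\]
In particular $\tilde{p}(\mathcal{S}_1^j)\le\upperb{p}(\mathcal{S}_1^j)$, so all chain constraints hold and $\tilde{p}$ coincides on the chain with the truncated bound.

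Next I would introduce $F(\mathcal{S}_c)\defn\min\{\upperb{p}(\mathcal{S}_c),1\}$ for $\mathcal{S}_c\neq\emptyset$ and $F(\emptyset)\defn 0$. Since $\mathcal{S}_c\mapsto\upperb{p}(\mathcal{S}_c)$ is submodular (the corollary to \Cref{lemma:assumption_satisfied}) and truncation of a submodular set function at a constant is again submodular, $F$ is submodular; the normalization $F(\emptyset)=0$ only relaxes the inequalities involving $\emptyset$, so it is harmless. By the first step $\tilde{p}(s'_j)=F(\mathcal{S}_1^j)-F(\mathcal{S}_1^{j-1})$ is exactly the marginal of $F$ along the chain. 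For an arbitrary $\mathcal{S}_c=\{s'_{i_1},\dots,s'_{i_m}\}$ with $i_1<\cdots<i_m$, writing $F(x\mid B)\defn F(B\cup\{x\})-F(B)$ and using $\{s'_{i_1},\dots,s'_{i_{l-1}}\}\subseteq\mathcal{S}_1^{i_l-1}$ together with the diminishing-marginals property of the submodular $F$,
\begin{align*}
\tilde{p}(\mathcal{S}_c)
&=\sum_{l=1}^{m}\bigl(F(\mathcal{S}_1^{i_l})-F(\mathcal{S}_1^{i_l-1})\bigr)
=\sum_{l=1}^{m}F\bigl(s'_{i_l}\mid\mathcal{S}_1^{i_l-1}\bigr)\\
&\le\sum_{l=1}^{m}F\bigl(s'_{i_l}\mid\{s'_{i_1},\dots,s'_{i_{l-1}}\}\bigr)
=F(\mathcal{S}_c)\le\upperb{p}(\mathcal{S}_c),
\end{align*}
which is exactly \eqref{ucrlv:eq:constraints}. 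Here the second equality telescopes $F$ over the elements of $\mathcal{S}_c$ (using $F(\emptyset)=0$) and the final inequality is $F\le\upperb{p}$.

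The step I expect to be the main obstacle is the truncation at $1$. The raw bound $\upperb{p}$ need not be monotone near probability one, so arguing directly with the marginals of $\upperb{p}$ breaks in the ``saturated'' regime where $\tilde{p}$ is clamped by the term $1-\tilde{p}(\mathcal{S}_1^{j-1})$ in the update; passing to $F=\min\{\upperb{p},1\}$ is precisely what lets the diminishing-marginals inequality be applied verbatim while sidestepping any monotonicity assumption, and it keeps the argument valid even if the water-filling assigns a few negative single-state masses. The one auxiliary fact not already in the excerpt is that truncating a submodular set function at a constant preserves submodularity, which I would dispatch by the short case analysis on whether $\upperb{p}(A)$ and $\upperb{p}(B)$ each exceed $1$.
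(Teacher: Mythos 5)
Your architecture is genuinely different from the paper's --- the paper argues by induction on the prefix index $j$, invoking submodularity of $\mathcal{S}_c \to \upperb{p}(\mathcal{S}_c)-\tilde{p}(\mathcal{S}_c)$ at the key step \eqref{ucrlv:apply_submodularity2}, whereas you identify the chain values in closed form and telescope marginals of the truncated function $F$ --- and your chain identity, the telescoping, and the handling of the normalization $F(\emptyset)=0$ are all correct. The genuine gap is the auxiliary fact you defer to the end: truncating a submodular set function at a constant preserves submodularity only when the function is \emph{monotone}, and $\upperb{p}$ is not monotone here, because $h(z)=z+\sqrt{C_0 z(1-z)}+C_1$ is decreasing near $z=1$ (its derivative tends to $-\infty$ as $z\to 1$), so $\upperb{p}$ can drop when a state is added. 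Your proposed case analysis on whether $\upperb{p}(A)$ and $\upperb{p}(B)$ exceed $1$ breaks exactly in the crossing case $\upperb{p}(A)\ge 1>\upperb{p}(B)$, where submodularity gives no control over $\min\{\upperb{p}(A\cup B),1\}+\min\{\upperb{p}(A\cap B),1\}$. Worse, no argument from submodularity alone can exist: with three states in the algorithm's sorted order and $\upperb{p}(\{s'_1\})=0.5$, $\upperb{p}(\{s'_2\})=1$, $\upperb{p}(\{s'_3\})=0$, $\upperb{p}(\{s'_1,s'_2\})=1.4$, $\upperb{p}(\{s'_1,s'_3\})=0.3$, $\upperb{p}(\{s'_2,s'_3\})=0.8$, $\upperb{p}(\{s'_1,s'_2,s'_3\})=1$, the function $\upperb{p}$ is submodular, yet \textsc{OptimisticTransition} returns $\tilde{p}(s'_1)=0.5$, $\tilde{p}(s'_2)=0.5$, $\tilde{p}(s'_3)=0$, so that $\tilde{p}(\{s'_1,s'_3\})=0.5>0.3=\upperb{p}(\{s'_1,s'_3\})$: your $F$ is not submodular there, and the conclusion of \Cref{lemma:trans} itself fails. (The same example defeats the paper's own step \eqref{ucrlv:apply_submodularity2}; the lemma is really only true for the specific Bernstein-type bound, not under its stated hypothesis.)

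The repair is local and uses only tools already in the paper, after which your proof is, if anything, cleaner than the paper's induction. Do not truncate the set function; truncate the scalar one. As in the proof of \Cref{lemma:assumption_satisfied}, write $\upperb{p}(\mathcal{S}_c)=h(\bar{p}(\mathcal{S}_c))$ with $h(z)=z+\sqrt{C_0 z(1-z)}+C_1$ concave. Then $\min\{h,1\}$ is concave as a pointwise minimum of concave functions, $\bar{p}$ is monotone modular, and \Cref{ucrlv:theorem_modular_concave_composition} gives directly that $F=\min\{h,1\}\circ\bar{p}$ (with your harmless convention $F(\emptyset)=0$) is submodular. Every remaining step of your argument --- the identity $\tilde{p}(\mathcal{S}_1^j)=F(\mathcal{S}_1^j)$, the telescoping over the elements of $\mathcal{S}_c$, the diminishing-marginals bound, and $F\le\upperb{p}$ --- then goes through verbatim. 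The repaired version also exposes the real reason the lemma holds: it is the concave-of-monotone-modular structure of the Bernstein bound that survives clamping at total mass $1$, not submodularity per se, which is precisely the point that both the lemma's stated hypothesis and the paper's own proof gloss over.
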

\begin{proof}
The following proof is done for any $(s,a)$. For simplicity, to designate probabilities given $s,a$, we omit the dependency on $s,a$ and $\delta_p^k$. Specifically, we write $\opt{p}(s'_j | s,a) = \opt{p}(s'_j)$ for this proof.

Recall that by construction, Function \Call{OptimisticTransition}{} (Algorithm \ref{algo:extended_vi}) sorts the states in a given order $s'_1, s'_2, \ldots s'_S$ and greedily assign  $\opt{p}(s'_l | s,a)$ as:

\[\opt{p}(s'_j | s,a) \gets \min\left\{ \upperb{p}(\mathcal{S}_1^j | s,a)-\opt{p}(\mathcal{S}_1^{j-1} | s,a),  1- \opt{p}(\mathcal{S}_{1}^{j-1} | s, a) \right\}.\]
with $\mathcal{S}_1^j = \curly*{s'_1, \ldots s'_j}$


We will prove the statement of the lemma by induction on $j$.

\textbf{Base Case:} For $j=1$, condition \eqref{ucrlv:eq:constraints} holds for all possible subset of states of $\mathcal{S}_1^1$ since by construction $\tilde{p}(s'_1) = \min\curly*{\upperb{p}(s'_1), 1} \leq \upperb{p}(s'_1)$.

\textbf{Inductive step:} We assume that
for any subset $\mathcal{S}_c$ of $\mathcal{S}_1^j$, we have: $\tilde{p}(\mathcal{S}_c) \leq \upperb{p}(\mathcal{S}_c)$

Now we need to prove that the inductive assumption also holds for $j+1$. That is:
$\tilde{p}(\mathcal{S}_c) \leq \upperb{p}(\mathcal{S}_c)\; \forall \mathcal{S}_c \subseteq \mathcal{S}_1^{j+1}$.

For that, we just need to show that the inductive assumption holds for all subset of $\mathcal{S}_1^{j+1}$ that contains state $s'_{j+1}$. Consider any subset of states $\mathcal{S}_0 \subseteq \mathcal{S}_1^{j}$ . 
%

We have:

\begin{align}
\tilde{p}(\mathcal{S}_0 \cup \curly*{s'_{j+1}}) &= \tilde{p}(\mathcal{S}_0) + \tilde{p}( \curly*{s'_{j+1}})\\
&\leq \tilde{p}(\mathcal{S}_0) + \paren*{\upperb{p}(\mathcal{S}_1^{j+1})-\tilde{p}(\mathcal{S}_1^j)}\\
&\leq  \tilde{p}(\mathcal{S}_0) + \paren*{\upperb{p}(\mathcal{S}_0 \cup \curly{s'_{j+1}})-\tilde{p}(\mathcal{S}_0)}\label{ucrlv:apply_submodularity2}\\
&=\upperb{p}(\mathcal{S}_0 \cup \curly{s'_{j+1}})\label{ucrlv:eq:conclude_sub}
\end{align}
\Cref{ucrlv:apply_submodularity2} comes directly due to the submodularity of the function $\mathcal{S}_c \to \upperb{p}(\mathcal{S}_c) - \tilde{p}(\mathcal{S}_c)$ and the fact that $\mathcal{S}_0 \subseteq \mathcal{S}_1^j$. Indeed, $\mathcal{S}_c \to -\tilde{p}(\mathcal{S}_c)$ is submodular since for any $X, Y, x : X \subseteq Y, x \in X$ we have $-\tilde{p}(Y)- (-\tilde{p}(Y\setminus x)) = - \tilde{p}(x) = -\tilde{p}(X)- (-\tilde{p}(X\setminus x))$. Also, $\mathcal{S}_c \to \upperb{p}(\mathcal{S}_c)$ is submodular (See \ref{req:submodular}). Using \Cref{ucrlv:theorem_modular_summation} that shows that the sum of two submodular functions is submodular, we can conclude that $\mathcal{S}_c \to \upperb{p}(\mathcal{S}_c) - \tilde{p}(\mathcal{S}_c)$ is submodular.

\Cref{ucrlv:eq:conclude_sub} means that the inductive statement is also true for $\mathcal{S}_1^{j+1}$ concluding the induction proof.



This means that the inductive statement is also true for all subsets up to step $j+1$ concluding the induction proof.
\end{proof}

\begin{lemma}\label{lemma:opt_diameter}
	Since the set of MDPs $\mathcal{M}$ in extended value iteration contains at least one communicating MDP of diameter $D$,
	$$\max_s u_i(s) - \min_s u_i(s) \leq D.$$
\end{lemma}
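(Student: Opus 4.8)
The plan is to interpret $u_i(s)$ probabilistically and then run a navigation (travel-then-imitate) argument. First I would establish that $u_i(s)$ equals the maximal total expected reward obtainable in $i$ steps of the extended MDP $\tilde{M}$ starting from $s$, where the maximum is over the choice, at every step, of an action and a plausible reward/transition pair. This interpretation is legitimate because Corollary~\ref{lemma:optimism_p} and Lemma~\ref{lemma:trans} guarantee that the optimistic-transition subroutine of Algorithm~\ref{algo:extended_vi} realizes the exact inner maximum of \eqref{eq:extended_vi}, so the $u_i$ are precisely the finite-horizon optimal values of the (communicating) extended MDP. Two elementary facts follow from $r \in [0,1]$: $u_i$ is nondecreasing in $i$, and truncating an optimal $i$-step strategy to its first $i-k$ steps costs at most $k$ reward, which yields $u_{i-k}(s) \geq u_i(s) - k$ for every $k \leq i$.

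Next, I would fix two states and assume without loss of generality that $u_i(s) \geq u_i(\bar{s})$, so that the goal becomes $u_i(s) - u_i(\bar{s}) \leq D$. Because $\mathcal{M}$ contains a communicating MDP $M^+$ of diameter at most $D$, there is a policy $\pi^+$ whose expected travel time $\tau$ from $\bar{s}$ to $s$ satisfies $\E[\tau] \leq D$ (Definition~\ref{def:diameter}); moreover, since $M^+ \in \mathcal{M}$ its transitions obey \eqref{eq:plausible_transitions}, so every move of $\pi^+$ corresponds to a legitimate extended action. I would then lower bound $u_i(\bar{s})$ by the value of the suboptimal strategy ``follow $\pi^+$ until reaching $s$, then play the optimal continuation'': the rewards collected while travelling are nonnegative, and once $s$ is reached at time $\tau$ the remaining $i-\tau$ steps yield at least $u_{i-\tau}(s)$. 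Combining this with $u_{i-\tau}(s) \geq u_i(s) - \tau$ gives $u_i(\bar{s}) \geq u_i(s) - \E[\tau] \geq u_i(s) - D$, which is the claim.

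The main obstacle is the finite-horizon truncation, namely the event $\tau > i$ on which $\pi^+$ fails to reach $s$ within the step budget, so the slogan ``the optimal reward is merely delayed by the travel time'' must be made precise. I would dispose of the easy regime $i \leq D$ immediately: there $0 \leq u_i \leq i \leq D$, so the span is trivially at most $D$. For $i > D$ I would carry out the delay argument through the coupling above, as in~\citet{jaksch2010near}, controlling the tail using $\E[\tau] \leq D$ together with the boundedness of per-step rewards. Since this lemma is only invoked at the iteration where Algorithm~\ref{algo:extended_vi} has converged (Theorem~\ref{thm:evi_convergence}), the relevant $i$ is large, and the bound $\max_s u_i(s) - \min_s u_i(s) \leq D$ then follows uniformly over the chosen pair of states.
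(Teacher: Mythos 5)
Your proof is correct and is essentially the same argument the paper relies on: the paper's own proof is a one-line citation to Equation 11 in Section 4.3.1 of \citet{jaksch2010near}, whose underlying reasoning is precisely your travel-then-imitate coupling (interpret $u_i(s)$ as the optimal $i$-step value of the extended MDP, travel from the low-value state to the high-value state in at most $D$ expected steps using the communicating MDP in $\mathcal{M}$, then imitate the optimal continuation, losing at most one unit of reward per travel step). Your explicit treatment of the tail event $\tau > i$ and of the small-$i$ regime merely fills in details that the citation leaves implicit.
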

\begin{proof}
	This lemma is a direct consequence of Equation 11 in Section 4.3.1 in \citep{jaksch2010near}.
\end{proof}

\begin{lemma}\label{lemma:sum_p_vs_tilde}
	For any state $s$ such that $\tilde{\Delta}_k^p(s) = N_k(s) \cdot\sum_{s'} p(s' | s)\paren*{u_i(s')-u_i(s)} \geq 0$, we have:
	\[ \sum_{s'} p(s' | s)\abs*{u_i(s)-u_i(s')} \leq 2\sum_{s'} \tilde{p}(s' | s)\abs*{u_i(s)-u_i(s')}\]
\end{lemma}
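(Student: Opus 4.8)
The plan is to split the next-states according to the sign of $u_i(s')-u_i(s)$ and to exploit the fact that the optimistic transition $\tilde{p}(\cdot|s)$ first-order stochastically dominates the true transition $p(\cdot|s)$ with respect to the ordering of states by value, as guaranteed by Lemma~\ref{fact:opt_p}. Throughout I assume the true model lies in the plausible set $\mathcal{M}_k$, which is the standing assumption~\ref{assumption:plausible_set} under which this lemma is invoked, so that Lemma~\ref{fact:opt_p} applies with $p$ the true transition.

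First I would introduce the shorthand $d(s') \defn u_i(s') - u_i(s)$ and partition the states into $\mathcal{S}^+ = \{s' : d(s') \geq 0\}$ and $\mathcal{S}^- = \{s' : d(s') < 0\}$. Since $N_k(s) \geq 0$, the hypothesis $\tilde{\Delta}_k^p(s) \geq 0$ is equivalent to $\sum_{s'} p(s'|s)\, d(s') \geq 0$, that is $\sum_{s' \in \mathcal{S}^-} p(s'|s)\,\abs{d(s')} \leq \sum_{s' \in \mathcal{S}^+} p(s'|s)\,\abs{d(s')}$. Adding $\sum_{s' \in \mathcal{S}^+} p(s'|s)\,\abs{d(s')}$ to both sides gives $\sum_{s'} p(s'|s)\,\abs{d(s')} \leq 2\sum_{s' \in \mathcal{S}^+} p(s'|s)\,\abs{d(s')}$. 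It then remains to bound $\sum_{s' \in \mathcal{S}^+} p(s'|s)\,\abs{d(s')}$ by $\sum_{s'} \tilde{p}(s'|s)\,\abs{d(s')}$.

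The key step is a monotone-dominance argument. Define $g(s') \defn \max\{0, d(s')\}$, a non-negative function that is non-decreasing in $u_i(s')$; note $\sum_{s' \in \mathcal{S}^+} p(s'|s)\,\abs{d(s')} = \sum_{s'} p(s'|s)\, g(s')$ and $g(s') \leq \abs{d(s')}$. Ordering the states as $u_i(s'_1) \geq \cdots \geq u_i(s'_S)$, the sequence $g(s'_1) \geq \cdots \geq g(s'_S) \geq 0$ is non-increasing, while Lemma~\ref{fact:opt_p} yields $P_l \defn \sum_{j=1}^{l}\bigl(\tilde{p}(s'_j|s) - p(s'_j|s)\bigr) \geq 0$ for every $l$, with $P_S = 0$ since both are probability vectors. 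A summation-by-parts (Abel) identity then gives $\sum_{j}\bigl(\tilde{p}(s'_j|s) - p(s'_j|s)\bigr) g(s'_j) = \sum_{l=1}^{S-1} P_l\,\bigl(g(s'_l) - g(s'_{l+1})\bigr) \geq 0$, because each $P_l \geq 0$ and each increment $g(s'_l)-g(s'_{l+1}) \geq 0$. Hence $\sum_{s'} p(s'|s)\, g(s') \leq \sum_{s'} \tilde{p}(s'|s)\, g(s')$.

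Chaining the inequalities yields $\sum_{s'} p(s'|s)\,\abs{d(s')} \leq 2\sum_{s'} p(s'|s)\, g(s') \leq 2\sum_{s'} \tilde{p}(s'|s)\, g(s') \leq 2\sum_{s'} \tilde{p}(s'|s)\,\abs{d(s')}$, which is exactly the claim. The only non-routine ingredient is the stochastic-dominance step, and I expect the summation-by-parts manipulation — in particular handling the endpoint terms ($P_S = 0$ and the convention $g(s'_{S+1}) = 0$) correctly — to be the one place requiring care; the remaining splitting and bounding steps are elementary.
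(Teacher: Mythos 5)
Your proof is correct, and its skeleton matches the paper's: the first step (using the hypothesis to bound the full absolute sum by twice the sum over states with $u_i(s')\geq u_i(s)$) is exactly the paper's first step, and note that both you and the paper read the hypothesis as $\sum_{s'}p(s'|s)\,d(s')\geq 0$, which strictly requires $N_k(s)>0$ (harmless, since the lemma is only ever invoked for terms multiplied by $N_k(s)$). Where you genuinely differ is the dominance step. The paper finishes in one line by invoking its Lemma~\ref{lemma:opt_p_u} with $l$ taken to be the last state, in the descending $u_i$-ordering, with $u_i(s'_l)-u_i(s)\geq 0$: subtracting $\bigl(\sum_{j\leq l}\tilde{p}(s'_j|s)-\sum_{j\leq l}p(s'_j|s)\bigr)u_i(s)$ from both sides of that lemma, and using Lemma~\ref{fact:opt_p} together with $\min_{j\leq l}u_i(s'_j)\geq u_i(s)$ to sign the right-hand side, gives $\sum_{j\leq l}p(s'_j|s)\,d(s'_j)\leq\sum_{j\leq l}\tilde{p}(s'_j|s)\,d(s'_j)$, which is precisely your inequality $\sum_{s'}p(s'|s)\,g(s')\leq\sum_{s'}\tilde{p}(s'|s)\,g(s')$. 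You instead bypass Lemma~\ref{lemma:opt_p_u} and derive this from Lemma~\ref{fact:opt_p} alone by summation by parts applied to the clipped, non-negative, monotone function $g=\max\{0,d\}$ --- the textbook ``first-order stochastic dominance implies ordering of expectations of monotone functions'' argument. The two routes bottom out in the same place, since Lemma~\ref{lemma:opt_p_u} is itself proved by induction from Lemma~\ref{fact:opt_p}; yours is more self-contained and arguably cleaner here (the clipping makes the monotonicity explicit), while the paper's reuses a lemma it needs anyway for Corollary~\ref{lemma:optimism_p}. Your side remarks are also sound: $P_S=0$ holds because the output of \textsc{OptimisticTransition} sums to one, but nothing hinges on it, since $P_S\geq 0$ and $g(s'_S)\geq 0$ already give the boundary term the right sign.
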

\begin{proof}

Since by assumption, $\sum_{s'} p(s' | s)\paren*{u_i(s')-u_i(s)} \geq 0$, we have that

\[ \sum_{s' \in \mathcal{S}} p(s' | s)\abs*{u_i(s')-u_i(s)} \leq \sum_{s': u_i(s')-u_i(s) \geq 0} 2p(s' | s)\abs*{u_i(s')-u_i(s)}.\]

Using Lemma \ref{lemma:opt_p_u} for an $l$ up to the last state with $u_i(s')-u_i(s) \geq 0$ proves the statement of this lemma.
%
\end{proof}

\section{Useful Existing Definitions and Results}
\begin{definition}[Monotone Set Function]
\label{ucrlv:definition_monotone}
Let $\Omega$ a finite set. A set function defined as $f: 2^{\Omega} \to \mathbb{R}$ is \emph{monotone} if for any $X \subseteq Y$ $f(X) \leq f(Y)$.
\end{definition}
\begin{definition}[Submodular and Supermodular Function~\citep{schrijver2003combinatorial}]
\label{ucrlv:definition_submodular}
	Let $\Omega$ a finite set. A submodular function is a set function $f: 2^{\Omega} \to \mathbb{R}$
	which satisfies the following condition:
	
	For every $X,Y \subseteq \Omega$, $X \subseteq Y$ and every $x \in X$ we have: $f(Y) - f(Y\setminus {x}) \leq f(X) - f(X\setminus {x})$
	where $2^{\Omega}$ is the set of all subsets of $\Omega$. 
	
	\begin{itemize}
	\item A function $f$ is supermodular if $-f$ is submodular.
	\item A function $f$ is modular if it is both supermodular and submodular (i.e the submodularity condition is a strict equality)
	
	\end{itemize}
	
\end{definition}

\begin{theorem}[Concave composing monotonic modular is submodular \citep{krause2014submodular,yu2015submodular}]
\label{ucrlv:theorem_modular_concave_composition}
Let $\Omega$ a finite set with $\size{\Omega} \geq 3$, $M: 2^{\Omega} \to \mathbb{R}$ and $g: \mathbb{R} \to \mathbb{R}$. $F \defn g \circ M$ is (strictly) submodular for every monotone modular $M$ if and only if $g$ is (stricly) concave.
\end{theorem}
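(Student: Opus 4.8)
The plan is to reduce this set-function equivalence to a one-dimensional statement about the increments of $g$, and then to argue both directions separately, with the freedom to choose $M$ doing all the work in the converse.

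First I would record the structure of a monotone modular function. By \Cref{ucrlv:definition_submodular}, modularity of $M$ means the marginal $M(S\cup\{e\})-M(S)$ is independent of $S$, so $M(S)=M(\emptyset)+\sum_{e\in S} w_e$ with $w_e \defn M(\{e\})-M(\emptyset)$, and monotonicity (\Cref{ucrlv:definition_monotone}) forces every $w_e\geq 0$. I would then use the diminishing-returns reformulation of \Cref{ucrlv:definition_submodular}: a set function $F$ is submodular iff $F(A\cup\{x\})-F(A)\geq F(B\cup\{x\})-F(B)$ for all $A\subseteq B$ and $x\notin B$ (this is exactly the paper's definition after substituting $A=X\setminus\{x\}$, $B=Y\setminus\{x\}$). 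Because $M$ is modular, $M(A\cup\{x\})=M(A)+w_x$ and $M(B\cup\{x\})=M(B)+w_x$, so for $F=g\circ M$ the submodularity inequality becomes precisely the increment inequality $g(M(A)+w_x)-g(M(A))\geq g(M(B)+w_x)-g(M(B))$. The whole theorem now rests on the classical fact that $g$ is concave iff $g(a+h)-g(a)\geq g(b+h)-g(b)$ for all $a\leq b$ and $h\geq 0$ (decreasing increments), with the strict inequality characterizing strict concavity.

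For the ``if'' direction, assume $g$ concave and take any $A\subseteq B$, $x\notin B$. Monotonicity gives $M(A)\leq M(B)$, so setting $a=M(A)$, $b=M(B)$, $h=w_x\geq 0$ in the decreasing-increments property yields exactly the required inequality; hence $g\circ M$ is submodular for every monotone modular $M$, and the strict case follows identically. For the converse I would exploit that the hypothesis quantifies over \emph{all} monotone modular $M$: given arbitrary $c$ and increments $s,t\geq 0$ in the range of interest, I would build a monotone modular $M$ on three distinct elements $x,y,z\in\Omega$ with weights $w_z=c$, $w_y=s$, $w_x=t$ (all other weights $0$), and apply submodularity to $A=\{z\}$, $B=\{z,y\}$ while adjoining $x$. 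Then $M(A)=c$, $M(A\cup\{x\})=c+t$, $M(B)=c+s$, $M(B\cup\{x\})=c+s+t$, and the diminishing-returns inequality reads $g(c+t)-g(c)\geq g(c+s+t)-g(c+s)$, i.e.\ decreasing increments at an arbitrary base point $c$, which forces $g$ to be concave (strictly, if $F$ is strictly submodular).

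The main obstacle is precisely this converse and, with it, the necessity of the hypothesis $\size{\Omega}\geq 3$. A (base-pinned) monotone modular function only attains values above its base value, so two elements can probe the increments of $g$ only near the base and establish a subadditivity-type condition rather than full concavity; the third element $z$ supplies an adjustable base $w_z=c$ that lets the argument reach every point of the relevant domain. I would therefore be careful to state that what one recovers is concavity of $g$ on the range actually realized by monotone modular functions, which is all that the downstream application needs: in \Cref{lemma:assumption_satisfied} the modular map is $\bar p(\cdot\,|s,a)$ with range in $[0,1]$ and $g(z)=\sqrt{C_0 z(1-z)}+C_1$, concave there, so the composition is submodular.
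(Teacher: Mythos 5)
The paper itself gives no proof of this theorem: it is imported verbatim, with citation, from \citet{krause2014submodular} and \citet{yu2015submodular}, so there is no in-paper argument to compare yours against. On its own merits, your ``if'' direction is correct and complete --- the representation $M(S)=M(\emptyset)+\sum_{e\in S}w_e$ with $w_e\geq 0$, the translation of \Cref{ucrlv:definition_submodular} into diminishing-returns form, and the decreasing-increment property of concave functions fit together exactly as you say --- and this is the only direction the paper ever uses (\Cref{lemma:assumption_satisfied} needs concave $\Rightarrow$ submodular, applied to $g(z)=\sqrt{C_0 z(1-z)}+C_1$ on the range of $\bar p$). Your reading of why $\size{\Omega}\geq 3$ is needed (a third element acting as a movable base point, given the normalization $M(\emptyset)=0$) is also right.

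The genuine gap is the final step of your converse: the ``classical fact'' that decreasing increments ($g(a+h)-g(a)\geq g(b+h)-g(b)$ for all $a\leq b$, $h\geq 0$) is equivalent to concavity is false for arbitrary $g:\mathbb{R}\to\mathbb{R}$. That condition is Wright concavity, which is strictly weaker: under the axiom of choice there exist non-linear additive functions $g$ (Hamel-basis constructions) satisfying $g(a+h)-g(a)=g(h)$ for every $a$, so they meet your increment condition with equality, yet they are everywhere discontinuous and hence not concave. Such a $g$ defeats the theorem itself, not merely your proof: additivity makes $g\circ M$ modular, hence submodular, for every monotone modular $M$, so the ``only if'' direction as stated is simply false without a regularity hypothesis on $g$. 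The repair is to assume $g$ continuous (or measurable, or locally bounded --- surely the implicit assumption in the cited sources): your three-element construction then yields midpoint concavity at every admissible base point, and a Bernstein--Doetsch-type theorem upgrades that to concavity. Without this, the last sentence of your converse is a step that fails. (Separately, the ``strict'' clause of the statement needs strictly positive weights, since a zero-weight element always produces equality; but that is a defect of the imported statement rather than of your argument.)
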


\begin{theorem}[Summation preserves submodularity \citep{krause2014submodular,yu2015submodular}]
\label{ucrlv:theorem_modular_summation}
If $f$ and $g$ are two submodular functions, then $f + g$ is submodular.
\end{theorem}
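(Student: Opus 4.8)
The plan is to prove the statement directly from the removal-form definition of submodularity recalled in \Cref{ucrlv:definition_submodular}. Writing $h \defn f + g$, I will fix an arbitrary pair $X \subseteq Y \subseteq \Omega$ together with an arbitrary element $x \in X$, and verify the defining inequality $h(Y) - h(Y \setminus \{x\}) \leq h(X) - h(X \setminus \{x\})$ for this single choice; since $X$, $Y$, and $x$ are arbitrary, this suffices.

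The key observation is that the marginal-difference map $A \mapsto \bigl(S \mapsto A(S) - A(S \setminus \{x\})\bigr)$ is linear in the set function $A$. In particular,
\[ h(Y) - h(Y \setminus \{x\}) = \bigl[f(Y) - f(Y \setminus \{x\})\bigr] + \bigl[g(Y) - g(Y \setminus \{x\})\bigr], \]
and the analogous identity holds with $Y$ replaced by $X$. I would then invoke the submodularity of $f$ to get $f(Y) - f(Y \setminus \{x\}) \leq f(X) - f(X \setminus \{x\})$, and independently the submodularity of $g$ to get $g(Y) - g(Y \setminus \{x\}) \leq g(X) - g(X \setminus \{x\})$, both instantiated at the same triple $(X, Y, x)$. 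Adding these two inequalities termwise and re-collecting via the identity above gives exactly $h(Y) - h(Y \setminus \{x\}) \leq h(X) - h(X \setminus \{x\})$, which completes the argument.

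There is essentially no substantive obstacle here, as the conclusion follows from the additivity of the two defining inequalities. The only point requiring care is to use the precise removal-form definition of \Cref{ucrlv:definition_submodular} (with $x \in X \subseteq Y$) rather than the more familiar insertion form, and to instantiate both hypotheses at the \emph{same} triple $(X, Y, x)$ so that the two inequalities may simply be added.
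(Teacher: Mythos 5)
Your proof is correct. The paper itself states this result as a known fact with citations and provides no proof, so there is nothing to compare against; your argument --- instantiating the removal-form inequality of \Cref{ucrlv:definition_submodular} for $f$ and $g$ at the same triple $(X, Y, x)$ and adding the two inequalities --- is the standard one-line proof, and your care in using the removal form (with $x \in X \subseteq Y$) rather than the insertion form is exactly right for the definition as the paper states it.
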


\begin{lemma}[Union Bound (or Boole's Inequality)]
	\label{union_bound}
	For a countable set of events $A_1, A_2, \ldots$ we have:
	\[\Prob\left(\bigcup_i A_i\right) \leq \sum_{i} \Prob(A_i)\]
\end{lemma}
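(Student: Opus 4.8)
The plan is to turn the countable union into a countable \emph{disjoint} union, so that countable additivity of $\Prob$ converts the probability of the union into an honest sum, after which monotonicity bounds each summand by $\Prob(A_i)$. First I would ``disjointify'' the sequence by setting $B_1 \defn A_1$ and, for $i \geq 2$,
\[
B_i \defn A_i \setminus \bigcup_{j<i} A_j .
\]
Two elementary facts about this construction carry the whole argument: the $B_i$ are pairwise disjoint (if $i < \ell$ then $B_\ell$ has had $A_i \supseteq B_i$ explicitly removed), and they cover the same set, $\bigcup_i B_i = \bigcup_i A_i$ (every point of the union belongs to some $A_i$, hence to $B_{i^\star}$ for the least such index $i^\star$). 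Both are set-theoretic checks requiring no probabilistic input.

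With the disjointification in hand, the remaining steps are short. By countable additivity applied to the disjoint family $\{B_i\}$,
\[
\Prob\left(\bigcup_i A_i\right) = \Prob\left(\bigcup_i B_i\right) = \sum_i \Prob(B_i).
\]
Since $B_i \subseteq A_i$ by construction, monotonicity of $\Prob$ gives $\Prob(B_i) \leq \Prob(A_i)$ for every $i$, and summing these termwise yields $\sum_i \Prob(B_i) \leq \sum_i \Prob(A_i)$, which is exactly the claimed bound.

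The only genuine subtlety — and the step I would flag as the main obstacle — is that the statement is about a \emph{countably infinite} family. A naive route via the inclusion–exclusion inequality $\Prob(A \cup B) \leq \Prob(A) + \Prob(B)$ together with induction establishes the bound only for finite unions; promoting it to the countable case then requires an extra limiting argument, namely continuity of $\Prob$ from below applied to the increasing sequence of partial unions $\bigcup_{i \leq n} A_i$. The disjointification approach above sidesteps this by invoking countable additivity directly, so no separate monotone-limit argument is needed; this is why I would prefer it to the inductive route.
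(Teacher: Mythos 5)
Your proof is correct: the disjointification $B_1 \defn A_1$, $B_i \defn A_i \setminus \bigcup_{j<i} A_j$ is pairwise disjoint with the same union, countable additivity gives equality for the $B_i$, and monotonicity bounds each $\Prob(B_i)$ by $\Prob(A_i)$. Note, however, that the paper offers no proof of this lemma at all — it appears in the section of useful existing results as a standard fact (Boole's inequality), on par with the cited Bernstein inequalities — so there is no paper argument to compare against; your write-up simply supplies the standard textbook derivation that the paper takes for granted, and your closing remark correctly identifies why direct countable additivity is cleaner than finite induction plus continuity from below.
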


\begin{theorem}[Empirical Bernstein Inequality \citep{empirical_bernstein_bounds}]
	\label{theo:empirical_bernstein}
	Let $Z, Z_1, Z_2, \ldots Z_n$ be i.i.d random variables with values in $[0,1 ]$, common mean $\E Z$ and let $\delta > 0$. Then, we have:	
	\[ \Prob \left(\abs*{\E Z - \frac{1}{n} \sum_{i=1}^{n} Z_i} \geq \sqrt{\frac{2\Var_n(\bm{Z})\ln 2/\delta}{n}} + \frac{7\ln 2/\delta}{3(n-1)}\right) \leq 2\delta\]	
%
	where $\bm{Z} = \paren*{Z_1, \ldots Z_n}$, $V_n(\bm{Z})$ is the sample variance: \[\Var_n(\bm{Z}) = \frac{1}{n(n-1)} \sum_{1\leq i < j \leq n} (Z_i-Z_j)^2\]
	
\end{theorem}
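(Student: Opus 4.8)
The plan is to assemble the empirical bound from two concentration statements joined by a union bound. The first ingredient is the classical (one-sided) Bernstein inequality, which controls the deviation of the sample mean $\bar{Z} \defn \frac{1}{n}\sum_{i=1}^n Z_i$ from $\E Z$ in terms of the \emph{true} variance $\Var(Z)$; the second is a concentration inequality for the empirical standard deviation $\sqrt{\Var_n(\bm{Z})}$ around the true standard deviation $\sqrt{\Var(Z)}$. The second statement is exactly what lets us substitute the unobservable $\Var(Z)$ appearing in Bernstein's bound by the observable sample variance, at the cost of a single lower-order additive term.

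First I would fix one direction, say the upper tail $\E Z - \bar{Z}$, and split the failure budget $\delta$ into two halves. Applying one-sided Bernstein with failure probability $\delta/2$ and inverting the exponent (using $\sqrt{a+b}\leq\sqrt{a}+\sqrt{b}$ to decouple the resulting quadratic in the deviation) gives, with probability at least $1-\delta/2$,
\[ \E Z - \bar{Z} \;\leq\; \sqrt{\frac{2\Var(Z)\ln(2/\delta)}{n}} + \frac{2\ln(2/\delta)}{3n}. \]

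The crux of the argument is the variance step. I would regard $g(z_1,\dots,z_n) \defn \sqrt{\Var_n}$ as a map $[0,1]^n \to \mathbb{R}$ and establish that it has a bounded-differences / self-bounding property: since $\Var_n$ is an average of pairwise squared gaps, each of range at most $1$, altering a single coordinate changes $g$ by at most $O(1/\sqrt{n-1})$. Feeding this Lipschitz estimate into the appropriate one-sided concentration inequality for the standard deviation yields, with probability at least $1-\delta/2$,
\[ \sqrt{\Var(Z)} \;\leq\; \sqrt{\Var_n(\bm{Z})} + \sqrt{\frac{2\ln(2/\delta)}{n-1}}. \]
I expect this to be the main obstacle: one must pin down the exact Lipschitz constant of the \emph{square root} of the sample variance and choose a concentration tool sharp enough that the penalty is the stated $\sqrt{2\ln(2/\delta)/(n-1)}$ rather than something looser, which is precisely the technical core of \citep{empirical_bernstein_bounds}.

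Finally I would union-bound (Lemma~\ref{union_bound}) the two failure events, so that both displays hold simultaneously with probability at least $1-\delta$. Substituting the variance bound into the Bernstein estimate and again using subadditivity of the square root,
\[ \sqrt{\frac{2\ln(2/\delta)}{n}}\paren*{\sqrt{\Var_n(\bm{Z})} + \sqrt{\frac{2\ln(2/\delta)}{n-1}}} = \sqrt{\frac{2\Var_n(\bm{Z})\ln(2/\delta)}{n}} + \frac{2\ln(2/\delta)}{\sqrt{n(n-1)}}, \]
so that collecting this with the residual $\frac{2\ln(2/\delta)}{3n}$ from Bernstein and bounding the $1/\sqrt{n(n-1)}$ and $1/n$ factors by $1/(n-1)$ gives an additive term of the form $\frac{c\,\ln(2/\delta)}{n-1}$; optimizing the constants as in \citep{empirical_bernstein_bounds} brings $c$ down to $7/3$, which produces the one-sided empirical Bernstein bound with probability at least $1-\delta$. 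Repeating the identical argument for the lower tail $\bar{Z}-\E Z$ with its own budget $\delta$ and taking one further union bound yields the two-sided statement with total failure probability at most $2\delta$, which is the claim.
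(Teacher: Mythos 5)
The paper itself offers no proof of this statement: Theorem~\ref{theo:empirical_bernstein} is imported verbatim by citation from \citep{empirical_bernstein_bounds}, and --- tellingly --- the paper also imports, as separate results, exactly the two ingredients your sketch calls for: Bennett's inequality (Theorem~\ref{theo:non_empirical_bernstein}, the true-variance bound with residual $\frac{\ln(1/\delta)}{3n}$) and the concentration of the sample standard deviation around $\sqrt{\E \Var_n} = \sqrt{\Var(Z)}$ (Theorem~\ref{theo:sample_variance_bound}). So your architecture --- one-sided Bennett with budget $\delta/2$, plus standard-deviation concentration with budget $\delta/2$, combined by Lemma~\ref{union_bound}, then doubled for the two-sided claim --- is precisely the proof of Theorem~4 in the cited source, and given the paper's imported lemmas it closes: the cross term is $\sqrt{\tfrac{2\ln(2/\delta)}{n}}\cdot\sqrt{\tfrac{2\ln(2/\delta)}{n-1}} = \tfrac{2\ln(2/\delta)}{\sqrt{n(n-1)}} \leq \tfrac{2\ln(2/\delta)}{n-1}$, and adding Bennett's residual $\tfrac{\ln(2/\delta)}{3n} \leq \tfrac{\ln(2/\delta)}{3(n-1)}$ gives exactly $\tfrac{7}{3}\cdot\tfrac{\ln(2/\delta)}{n-1}$. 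Note this requires Bennett's form of the residual; your inverted-Bernstein coefficient $\tfrac{2}{3n}$ would instead land you at $\tfrac{8}{3}$, so the ``constant optimization'' you allude to is nothing more than using Theorem~\ref{theo:non_empirical_bernstein} in place of inverting the exponential Bernstein bound.

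One step in your sketch would genuinely fail as literally described: the variance step cannot be obtained from a Lipschitz/bounded-differences argument. The per-coordinate variation of $\sqrt{\Var_n}$ is indeed $O(1/\sqrt{n-1})$, but McDiarmid's inequality then pays $\sum_i c_i^2 = \tfrac{n}{n-1} = O(1)$, yielding a deviation bound of constant order $\sqrt{\ln(1/\delta)}$ rather than the required $\sqrt{2\ln(1/\delta)/(n-1)}$. The sharp penalty comes from the self-bounding property of the (unnormalized) sample variance itself --- not of its square root --- fed into the entropy method, which gives sub-Gaussian concentration with variance proxy $\E f$ and, after taking square roots, the additive bound on the standard-deviation scale. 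You flag this as the technical core and defer it to \citep{empirical_bernstein_bounds}, which is legitimate here since the paper also treats it as a black box (Theorem~\ref{theo:sample_variance_bound}); just be aware that the hedge is doing real work, and the phrase ``feeding this Lipschitz estimate into the appropriate concentration inequality'' describes a route that does not exist.
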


\begin{theorem}[Bennett's inequality \citep{empirical_bernstein_bounds}]
	\label{theo:non_empirical_bernstein}
	Under the conditions of \Cref{theo:empirical_bernstein}, we have:
		
	\[ \Prob \left(\abs*{\E Z - \frac{1}{n} \sum_{i=1}^{n} Z_i} \geq \sqrt{\frac{2\Var(Z)\ln 1/\delta}{n}} + \frac{\ln 1/\delta}{3n}\right) \leq 2\delta\]	
%
	where $\Var(Z)$ is the variance $\Var(Z) = \E\paren*{Z - \E Z}^2$.
	
\end{theorem}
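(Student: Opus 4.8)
The plan is to prove this two-sided deviation bound for the sample mean $\bar{Z}_n \defn \frac{1}{n}\sum_{i=1}^n Z_i$ by the classical Chernoff--Bennett argument, and then to invert the resulting exponential tail into the stated additive form. I would first control a single tail, say $\Prob(\bar{Z}_n - \E Z \ge t)$, recover the symmetric tail by running the identical argument on $\mu - Z_i$ (equivalently on $-X_i$ below), and finally pay a factor $2$ through \Cref{union_bound} to obtain the $2\delta$ in the statement. This splitting is clean here because, as I note below, both tails are governed by the \emph{same} variance proxy $\Var(Z)$.

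For the single tail I would center the variables, writing $X_i \defn Z_i - \E Z$, so that $\E X_i = 0$, $\E X_i^2 = \Var(Z)$, and $X_i \le 1$. The crux is the moment-generating-function estimate $\E e^{\lambda X_i} \le \exp\paren*{\Var(Z)\,(e^{\lambda}-1-\lambda)}$ for every $\lambda > 0$. This follows because $x \mapsto (e^{x}-1-x)/x^{2}$ is nondecreasing, so on the event $X_i \le 1$ one has $e^{\lambda X_i} - 1 - \lambda X_i \le X_i^{2}(e^{\lambda}-1-\lambda)$; taking expectations, using $\E X_i = 0$ and $1+y \le e^{y}$, gives the bound. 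By independence the MGF of $\sum_i X_i$ factorizes, and Markov's inequality applied to $e^{\lambda \sum_i X_i}$ yields $\Prob(\bar{Z}_n - \E Z \ge t) \le \exp\paren*{-n\paren*{\lambda t - \Var(Z)(e^{\lambda}-1-\lambda)}}$. Optimizing over $\lambda$, with optimum $\lambda = \ln\paren*{1 + t/\Var(Z)}$, produces Bennett's exponential tail $\exp\paren*{-n\Var(Z)\,h\paren*{t/\Var(Z)}}$ where $h(u) = (1+u)\ln(1+u) - u$.

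It then remains to invert Bennett's tail at level $\delta$. Writing $L \defn \ln(1/\delta)$, I would show that the choice $t = \sqrt{\tfrac{2\Var(Z)L}{n}} + \tfrac{L}{3n}$ forces $n\Var(Z)\,h\paren*{t/\Var(Z)} \ge L$, so the tail is at most $\delta$; symmetrically, applying the same computation to $-X_i$ (which satisfies $-X_i = \E Z - Z_i \le \E Z \le 1$ and $\E(-X_i)^2 = \Var(Z)$) bounds the lower tail, and \Cref{union_bound} then gives the two-sided $2\delta$. The verification of the exponent inequality must use the full Bennett function $h$ together with a sharp elementary lower bound on it: the cruder quadratic surrogate $h(u) \ge \frac{u^{2}}{2(1+u/3)}$ only yields the weaker constant $\tfrac{2}{3}$ in the lower-order term, whereas the statement claims $\tfrac{1}{3}$.

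I expect the main obstacle to be precisely this inversion, rather than the MGF estimate. The Chernoff--Bennett machinery is routine once the monotonicity argument for the MGF is in place, and the only subtlety there is to confirm that the upper bounds $X_i \le 1$ and $-X_i \le 1$ (both valid since $Z_i \in [0,1]$ and $\E Z \in [0,1]$) are exactly what the monotonicity step requires so that a single variance proxy $\Var(Z)$ controls both directions. Extracting the clean additive form with the precise constant $\tfrac{1}{3}$, however, is delicate: it hinges on a careful, slightly tight algebraic argument about $h$ near $u=0$, where the bound is essentially saturated, so any looser surrogate degrades the constant. This is where I would concentrate the effort.
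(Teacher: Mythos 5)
The paper offers no proof for you to be compared against: Theorem~\ref{theo:non_empirical_bernstein} sits in the appendix of imported results and is quoted verbatim, with citation, from \citep{empirical_bernstein_bounds}; it is used in the paper purely as a black box. Judged on its own merits, your plan is the correct and standard Chernoff--Bennett route, and it is essentially the argument the cited literature itself uses: centering to $X_i = Z_i - \E Z$, the moment bound $\E e^{\lambda X_i} \le \exp\paren*{\Var(Z)\paren*{e^{\lambda}-1-\lambda}}$ via monotonicity of $(e^{x}-1-x)/x^{2}$ together with $X_i\le 1$, Chernoff optimization to the tail $\exp\paren*{-n\Var(Z)\,h\paren*{t/\Var(Z)}}$, the identical treatment of the other tail (your observation is right that $Z_i\in[0,1]$ gives both $X_i\le 1$ and $-X_i\le 1$ with the same variance proxy $\Var(Z)$), and Lemma~\ref{union_bound} for the factor $2\delta$. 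Your diagnosis of the delicate point is also exactly correct: inverting the weakened tail $\exp\paren*{-\tfrac{nt^{2}}{2\Var(Z)+2t/3}}$ can only produce the constant $\tfrac{2}{3}$, so the stated $\tfrac{1}{3}$ needs something sharper.

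The one genuine gap is that you never exhibit that sharper step; it is precisely the crux, and it is nearly saturated. After the substitution $s=\sqrt{2L/(n\Var(Z))}$, what you must show is $h\paren*{s+s^{2}/6}\ge s^{2}/2$ for all $s\ge 0$; this is true, but e.g.\ at $s=\tfrac12$ the two sides are $\approx 0.1256$ and $0.125$, so no crude surrogate for $h$ will do. A clean way to close it, avoiding any estimate of $h$: the Bennett exponent for the sum $S_n=\sum_i X_i$ is the Legendre transform $\sup_{\lambda>0}\curly*{\lambda t' - n\Var(Z)\paren*{e^{\lambda}-1-\lambda}}$. Bound $e^{\lambda}-1-\lambda\le \frac{\lambda^{2}/2}{1-\lambda/3}$ for $0<\lambda<3$, take $t'=a+\tfrac{L}{3}$ with $a=\sqrt{2n\Var(Z)L}$, and evaluate at the specific choice $\lambda=\frac{a}{n\Var(Z)+a/3}$ (which satisfies $\lambda<3$). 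Then $1-\lambda/3=\frac{n\Var(Z)}{n\Var(Z)+a/3}$, and a two-line computation gives $\lambda t' - n\Var(Z)\frac{\lambda^{2}/2}{1-\lambda/3} = \frac{a^{2}/2+aL/3}{n\Var(Z)+a/3}=L$ exactly, since $a^{2}/2=n\Var(Z)L$. This yields the one-sided tail $\delta$ with the constant $\tfrac13$, and, since the supremum over $\lambda$ equals $n\Var(Z)\,h\paren*{t'/(n\Var(Z))}$, it also proves the inequality on $h$ that your plan presupposed. With that step supplied, your proof is complete.
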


\begin{theorem}[Bound on the Sample Variance \citep{empirical_bernstein_bounds}]
	\label{theo:sample_variance_bound}
	Let $n \geq 2$ and $\bm{Z} = \paren*{Z_1, \ldots Z_n}$ be a vector of independent random variables with values in $[0,1]$. Then for $\delta > 0$ and writing $\E \Var_n$ for $\E_{\bm{Z}} \Var_n(\bm{Z})$ with $\Var_n(\bm{Z})$ defined as in Theorem \ref{theo:empirical_bernstein},
	\[\Prob\left(\abs{\sqrt{\Var_n(\bm{Z})} - \sqrt{\E \Var_n}} > \sqrt{\frac{2 \ln 1/\delta}{n-1}}\right) \leq \delta \]
\end{theorem}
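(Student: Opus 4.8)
The plan is to recognise the sample standard deviation $g(\bm{Z}) \defn \sqrt{\Var_n(\bm{Z})}$ as a \emph{convex, mildly Lipschitz} function of the data, and then to concentrate it with an inequality tailored to convex Lipschitz functions of independent bounded variables. Plain bounded differences (McDiarmid) will turn out to be too weak here, so the convexity is doing real work.

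First I would rewrite the statistic geometrically. Using the equivalent form $\Var_n(\bm{Z}) = \frac{1}{n-1}\sum_i (Z_i - \bar{Z})^2$ of the pairwise expression in Theorem~\ref{theo:empirical_bernstein} (with $\bar Z = \frac1n\sum_i Z_i$), one gets $g(\bm{Z}) = \frac{1}{\sqrt{n-1}}\norm{P\bm{Z}}_2$, where $P$ is the orthogonal projection of $\mathbb{R}^n$ onto the hyperplane $\mathbf{1}^{\perp}$. This identity exhibits $g$ as a seminorm of $\bm{Z}$, hence convex, and as the composition of a contraction ($P$) with the Euclidean norm, hence $\frac{1}{\sqrt{n-1}}$-Lipschitz in Euclidean distance; equivalently $\norm{\nabla g}_2^2 = \frac{1}{n-1}$ identically.

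Next I would concentrate $g$ around its mean. Since the $Z_i$ are independent with values in $[0,1]$ and $g$ is convex and $\frac{1}{\sqrt{n-1}}$-Lipschitz, Talagrand's concentration inequality for convex Lipschitz functions (equivalently the Gaussian-type bound of Ledoux and of Boucheron--Lugosi--Massart) yields $\Prob(\abs{g - \E g} \ge t) \le 2\exp(-(n-1)t^2/2)$, so that each tail is at most $\delta$ at $t = \sqrt{2\ln(1/\delta)/(n-1)}$, exactly the deviation in the statement. I would then remove the unknown quantity $\E g$ using Jensen: concavity of $\sqrt{\cdot}$ gives $\E\sqrt{\Var_n(\bm{Z})} \le \sqrt{\E\Var_n}$, and combining this with the upper tail gives $\sqrt{\Var_n(\bm{Z})} \le \E g + t \le \sqrt{\E\Var_n} + t$ with probability at least $1-\delta$ --- precisely the direction used after \eqref{eq:transition_bound} to replace the sample variance by the true variance.

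The hard part is securing the sharp $1/(n-1)$ rate: the naive route of bounding each coordinatewise change of $g$ by $\frac{1}{\sqrt{n-1}}$ and applying McDiarmid gives $\sum_k c_k^2 = n/(n-1) = O(1)$, hence only $O(1)$ deviations, which is vacuous. It is the convexity of $g$ (through Talagrand's inequality), together with the constant Euclidean gradient norm $\frac{1}{\sqrt{n-1}}$, that recovers the correct scaling. A second delicate point is the literal two-sided form of the statement: Jensen only supplies $\E g \le \sqrt{\E\Var_n}$, which cleanly gives the upper direction, whereas the reverse inequality $\sqrt{\E\Var_n} - \sqrt{\Var_n(\bm{Z})} \le t$ would additionally require a matching lower bound on $\E g$ to pair with the lower tail. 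Since only the upper direction is invoked in our derivation, I would either restrict the claim to that direction or appeal to the original empirical-Bernstein reference for the full two-sided bound.
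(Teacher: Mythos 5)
The first thing to realize is that the paper never proves Theorem~\ref{theo:sample_variance_bound} at all: it is quoted as an existing result from \citep{empirical_bernstein_bounds}, so the relevant comparison is with that source's own proof. That proof works on the variance scale, not the standard-deviation scale: one shows that $(n-1)\Var_n(\bm{Z})$ is a \emph{self-bounding} function of $(Z_1,\ldots,Z_n)$, invokes the entropy-method tail bounds for self-bounding functions (whose deviations scale like $s\sqrt{\E f}$), and then passes to square roots through the elementary factorization $\E f - f = \paren*{\sqrt{\E f}-\sqrt{f}}\paren*{\sqrt{\E f}+\sqrt{f}}$, which produces each one-sided bound at level $e^{-(n-1)s^2/2}$. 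Your reduction is genuinely different: the identity $\sqrt{\Var_n(\bm{Z})} = \frac{1}{\sqrt{n-1}}\norm{P\bm{Z}}_2$, the convexity, the $\frac{1}{\sqrt{n-1}}$-Lipschitz property, and the observation that plain McDiarmid is useless here are all correct.

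The gap is in the concentration step, and it is twofold. First, the inequality you invoke does not exist in the form you state: for separately convex, $L$-Lipschitz functions of independent $[0,1]$-valued variables, the mean-centered bound $\Prob(g \ge \E g + t) \le e^{-t^2/(2L^2)}$ is a theorem (Ledoux, Boucheron--Lugosi--Massart, via the entropy method) for the \emph{upper tail only}; Talagrand's convex-distance inequality does control both tails, but centered at the \emph{median} and with exponent $t^2/(4L^2)$, and since the median-mean gap of $g$ can only be bounded by $\sqrt{\Var(g)} \le L = 1/\sqrt{n-1}$, converting it to a mean-centered statement destroys the constant $\sqrt{2}$ in the theorem. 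So your two-sided display $\Prob(\abs*{g-\E g}\ge t)\le 2e^{-(n-1)t^2/2}$ is not citable. Second, as you correctly diagnose yourself, even granting upper-tail concentration, Jensen only gives $\E g \le \sqrt{\E\Var_n}$, so your chain yields only $\Prob\paren*{\sqrt{\Var_n(\bm{Z})} - \sqrt{\E\Var_n} > \sqrt{2\ln(1/\delta)/(n-1)}} \le \delta$; the reverse direction needs a lower bound on $\E g$, and the natural patch $\E g \ge \sqrt{\E\Var_n - \Var(g)} \ge \sqrt{\E\Var_n} - 1/\sqrt{n-1}$ again loses the constant. Ironically, the direction you cannot reach is the one that falls out most easily from the self-bounding route. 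The net assessment: with the citation corrected to the entropy-method upper-tail bound, your argument is a valid and genuinely different proof of exactly the half of the statement that the paper actually uses (replacing the sample variance by the true variance in the derivation of \eqref{eq:transition_bound}), but it is not a proof of the two-sided theorem as stated; for that one must fall back on \citep{empirical_bernstein_bounds} --- noting, incidentally, that even there the result is proved as two separate one-sided bounds, so the two-sided form at level $\delta$ as restated in this paper already presumes a union bound.
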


\bibliography{../rlfastposteriorsampling}  

\pagebreak
\appendix
\onecolumn

\end{document}